\DeclareMathOperator*{\argminA}{arg\,min}
\newtheorem{lemma}{Lemma}
\newtheorem{assumption}{Assumption}
\newtheorem{theorem}{Theorem}
\newtheorem{corollary}{Corollary}
\newtheorem{definition}{Definition}
\newtheorem{claim}{Claim}
\newif\ifsubmit
\newcommand{\zhe}[1]{}
\newcommand{\ruoyu}[1]{}
\newcommand{\zhe}[1]{{\color{blue}{[Zhe: #1]}}}
\newcommand{\ruoyu}[1]{{\color{red}{[Ruoyu: #1]}}}
\newif\ifsmalldata
\newif\ifnonsmooth
\newenvironment{proof}{\par \smallskip{\bf Proof:}}{\hfill\stopproof}
\def\stopproof{\square}
\def\square{\vbox{\hrule height.2pt\hbox{\vrule width.2pt height5pt \kern5pt
\vrule width.2pt} \hrule height.2pt}}
\begin{document}
	
	%
	
	%

	\title{Sub-Optimal Local Minima Exist for Neural Networks with Almost All Non-Linear Activations}
	\author{{Tian Ding}\thanks{Department of Information Engineering, The Chinese University of Hong Kong, Hong Kong. \texttt{tianding@link.cuhk.edu.hk}. The work is done while the author is visiting Department of ISE, Univeristy of Illinois at Urbana-Champaign. The author contributes equally to this paper.}
		{\quad \quad \quad Dawei Li} \thanks{Coordinated Science Laboratory, Department of ISE, University of Illinois at Urbana-Champaign, Urbana, IL. \texttt{dawei2@illinois.edu}. The author contributes equally to this paper.}  
		{\quad \quad  \quad Ruoyu Sun} \thanks{Coordinated Science Laboratory, Department of ISE, University of Illinois at Urbana-Champaign, Urbana, IL. \texttt{ruoyus@illinois.edu}.} \
		\date{Nov 4, 2019}
	}
	\maketitle

	
	
	\begin{abstract}
	Does over-parameterization eliminate sub-optimal local minima for neural networks? An affirmative answer was given by a classical result in \cite{yu1995local} for 1-hidden-layer wide neural networks. A few recent works have extended the setting to multi-layer neural networks, but none of them has proved every local minimum is global. Why is this result never extended to deep networks? 
		
	In this paper, we show that the task is impossible because the original result for 1-hidden-layer network in \cite{yu1995local} can not hold. More specifically, we prove that for any multi-layer network with generic input data and non-linear activation functions, sub-optimal local minima can exist, no matter how wide the network is (as long as the last hidden layer has at least two neurons). While the result of \cite{yu1995local}  assumes sigmoid activation, our counter-example covers a large set of activation functions (dense in the set of continuous functions), indicating that the limitation is not due to the specific activation. Our result indicates that ``no bad local-min'' may be unable to explain the benefit of over-parameterization for training neural nets.
	\end{abstract}

	\section{Introduction}

	
	
	Understanding why deep neural networks work well has attracted
	much attention recently. Researchers have formed some understanding of the design choices. For instance, it is believed that increasing the depth (i.e. increasing the number of layers) can increase the representation power
	\cite{safran2017depth,telgarsky2016benefits,liang2016deep}, and increasing the width (i.e. increasing the neurons per layer)
	   can reduce the optimization error (e.g. \cite{livni2014computational,geiger2018jamming}).
 Why does increasing the width reduce the optimization error?
 One popular conjecture is that increasing the width ``smooths the landscape''.
 \cite{li2018visualizing} visualized the loss surface in 2-dimensional space and demonstrated by graphics that increasing the width can make the landscape much smoother. 
However, these empirical studies do not provide  a rigorous theoretical result. 
	
There have been some efforts on providing theoretical evidence
for a nice landscape of wide neural networks.
	The most well known evidence is the recent results on ultra-wide networks (e.g. \cite{jacot2018neural,allen2018convergence, du2018gradientb, zou2018stochastic, zou2019improved}) which proved that gradient descent can converge to global minima. But the number of neurons required in all these works is at least $\Omega(N^6)$ where $ N $ is the number of samples, thus not a practical setting.
	
Another encouraging theoretical evidence is the result in a classical work \cite{yu1995local} which stated that ``no sub-optimal local-min exists'' for 1-hidden-layer over-parameterized networks with sigmoid activation. Formally, we 
state their main result here.
\begin{claim}[\cite{yu1995local} Theorem 3]
		\label{thm::Yu no local min}
		Consider the problem 
		$$  \min_{ v \in \mathbb{R}^{m},  W  \in \mathbb{R}^{m \times d_0 }} \sum_{n = 1}^N  \left[y^{(n)} -  v^\top \sigma\left( W \mathbf{x}^{(n)} \right) \right] ^2 , $$
		where $\sigma$ is the sigmoid function. 
		Assume the width $m \geq N$, and there is one index $ k $
		such that $x^{(n)}_{k} \neq 	x^{(n')}_{k}$, $\forall n\not = n'$. 
		Then every local minimum of the problem is a global minimum. 
	\end{claim}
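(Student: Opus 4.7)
The goal is to show that every local minimum $(W^*, v^*)$ of $L(W,v) = \sum_{n=1}^N (y^{(n)} - v^\top \sigma(Wx^{(n)}))^2$ must satisfy $L(W^*, v^*) = 0$ under the stated assumptions. My strategy is to derive a contradiction by constructing an explicit descent direction whenever the residual is nonzero.

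First I would encode the first-order conditions. Writing $h^{(n)} = \sigma(W^* x^{(n)}) \in \mathbb{R}^m$ and stacking them column-wise into a feature matrix $H \in \mathbb{R}^{m\times N}$, the loss becomes $\|y - H^\top v\|^2$, so stationarity in $v$ reads $H(y - H^\top v^*) = Hr = 0$, where $r$ is the residual vector. Thus $r \in \ker H$. If $H$ has full column rank $N$, then $r = 0$ and we are finished.

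Next I would argue that $H$ is generically rank $N$. Because $\sigma$ is analytic and strictly monotonic, and the inputs $\{x^{(n)}\}$ differ in coordinate $k$, for almost every choice of a hidden weight vector $w$ the $N$ scalars $\sigma(w\cdot x^{(n)})$ are linearly independent; since $m \geq N$, a generic $W$ yields a full-rank $H$. This is the capacity argument that makes the conclusion plausible and isolates the hard case to the measure-zero subvariety where $\mathrm{rank}(H) < N$.

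The core step is ruling out rank-deficient local minima. Assume $r\neq 0$ and $\mathrm{rank}(H) < N$. Choose a hidden unit $i$ whose row of $H$ lies in the span of the others, and perturb that row $W^*_i \mapsto W^*_i + \epsilon \Delta$ along a direction $\Delta$ making the new $i$-th row of $H$ linearly independent of the others, while simultaneously re-optimizing $v$ against the perturbed features. Since $(W^*, v^*)$ is a critical point, the first-order variation of $L$ along any direction vanishes, so I would then compare the best-fit residual before and after as a function of $\epsilon$, arguing that the rank jump forces a strictly negative second-order term and hence strict decrease for all sufficiently small $\epsilon > 0$.

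The hard part is precisely this last step: a local-minimum hypothesis only controls \emph{small} perturbations, whereas a rank jump is a discontinuous change in the linear-algebraic structure of $H$. Making the descent argument quantitative would require, for instance, an implicit function theorem showing that the minimum-over-$v$ of $L$ is strictly smaller for $\epsilon > 0$ than for $\epsilon = 0$, leveraging continuity of $v(\epsilon)$ and the explicit algebraic form of $\sigma$. The degenerate case $v^* = 0$ (where $\nabla_W L$ automatically vanishes) needs a separate but similar perturbation argument. It is exactly this delicate perturbation step where, as the present paper will show, the claim turns out to be fragile and in fact fails for generic non-linear activations.
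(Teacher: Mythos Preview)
The paper does \emph{not} prove Claim~\ref{thm::Yu no local min}; on the contrary, one of its main contributions is to show that the claim is \emph{false}. Theorem~\ref{thm::deep-bad_localmin} (and more specifically Theorem~\ref{thm::bad_localmin_sigmoid} for the sigmoid case) constructs explicit sub-optimal local minima for over-parameterized networks, and the Corollary immediately after Theorem~\ref{thm::bad_localmin_sigmoid} states flatly that Claim~\ref{thm::Yu no local min} does not hold. So you are attempting to prove a theorem that the paper refutes, and the gap you yourself flag in the last paragraph is not a technicality to be patched but the precise point where the argument collapses.

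Concretely, the failure is in your ``rank-deficient'' step. You propose to perturb a hidden row $W_i^*$ to raise $\mathrm{rank}(H)$ and then \emph{re-optimize} $v$. But re-optimizing $v$ is not a small perturbation: the minimizer $v(\epsilon)$ of $L(W^*+\epsilon\Delta,\cdot)$ need not lie in any prescribed neighborhood of $v^*$, so the inequality $\min_v L(W^*+\epsilon\Delta,v) < L(W^*,v^*)$ does \emph{not} contradict local minimality of $(W^*,v^*)$ in the joint variable. What you actually need is a point $(W',v')$ with both $W'$ close to $W^*$ \emph{and} $v'$ close to $v^*$ achieving lower loss, and this you never produce. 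The paper's counter-example exploits exactly this loophole: it sets $W^*=0$, $b^*=a\mathbf{1}$ so that $H$ has rank one, and then chooses $Y$ so that the residual $\Delta Y$ is orthogonal to $\mathbf{1}$, to every $X_{(j,:)}$, and to every off-diagonal $X_{(j,:)}\circ X_{(j',:)}$, while having a sign-controlled inner product with each $X_{(j,:)}\circ X_{(j,:)}$. A second-order Taylor expansion (Lemma~\ref{lemma::perturb_direction}) then shows that for \emph{every} joint perturbation $(W',b',v')$ in a neighborhood of $(W^*,b^*,v^*)$ one has $\langle \Delta Y,\, T'-T\rangle \ge 0$, hence $E(\Theta')\ge E(\Theta)$. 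Thus the rank of $H$ can indeed be increased by perturbing $W$, but doing so while staying near $v^*$ only \emph{increases} the loss; the lower-loss point your argument implicitly invokes lives far away in $v$-space and is irrelevant to local minimality.
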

	Claim \ref{thm::Yu no local min} is considered as one of the main results on neural network optimization in the survey \cite{bianchini1996optimal}. Besides the assumption of the width (which becomes reasonable nowadays), this result is quite clean. A few recent works have continued this line of research to study the global landscape of deep over-parameterized networks, and proved non-existence of sub-optimal valleys or basins (\cite{nguyen2018loss2,nguyen2019connected, li2018over}), which are weaker conclusions than ``no sub-optimal local-min''.
	It was not clear whether ``no sub-optimal local-min'' can be proved
	for deep over-parameterized networks, and \cite{nguyen2018loss2} conjectured
	that this is true under some mild conditions.
	
	Given the promising empirical and theoretical evidence  on the nice landscape of wide neural-nets, a final missing piece seems to be a proof of ``no sub-optimal local-min'' for deep over-parameterized neural-nets under suitable conditions.
	With such a result, the story of ``over-parameterization eliminates bad local-min'' would be verified on both empirical and theoretical sides.
	
	Note that there are some examples of sub-optimal local minima under various settings (reviewed detailedly in Section \ref{subsec: examples in literature}), but this line of works seems to be evolving independently of the line of positive results on over-parameterized nets since they often consider different settings. 
	Due to this discrepancy, some researchers challenge the story of ``no bad local-min'' while other researchers disregard this challenge. 
	More specifically, many positive results assume wide enough networks and smooth activations; thus, the existence of sub-optimal local-min for ReLU networks or for narrow networks could not dissuade readers of \cite{yu1995local} from pursuing the story of ``no bad local-min''. If someone could extend 
	Claim \ref{thm::Yu no local min} to deep over-parameterized networks with smooth activations, then in the debate of bad local minima, 
	the positive side would take an advantageous position. 
	
	
	
	It is somewhat strange, though, that no one has proved a direct extension of Claim \ref{thm::Yu no local min} to multi-layer networks yet. In this work, we provide a counter-example to Claim \ref{thm::Yu no local min}, showing that this result actually does not hold. More generally, we construct sub-optimal local minima to a broad class of networks including most settings considered in \cite{yu1995local,nguyen2018loss2,nguyen2019connected,li2018over}. Thus, despite how promising it looks, it is much more difficult than one thought, if not impossible, to rigorously prove ``no bad local-min'' for over-parameterized networks (without further assumptions). 
	
	\subsection{Our Contributions}
We consider a supervised learning problem where the prediction is parameterized by a multi-layer neural network.
We show that sub-optimal local minima are quite common in
neural networks with a large class of activations (including most commonly used activations).
More specifically, our contributions include (suppose $d_0$ is the input dimension and $ N $ is the number of samples):
	
	\begin{itemize}
		\item In the first result, we prove that for any activation function that has continuous and non-zero second-order derivatives in an arbitrarily small region (including common smooth activation functions such as sigmoid, tanh, swish, SoftPlus, etc.) and for generic input data where $d_0 < O(\sqrt{N})$, there exist output data such that the considered loss function contains sub-optimal local minima.
		This result holds for networks with any depth and width as long as the number of neurons of the last hidden layer is no less than two. 
		
		
      \item The first result implies that under the typical settings of over-parameterized networks (e.g. \cite{yu1995local,nguyen2018loss2,nguyen2019connected,li2018over}), it is impossible to prove ``no sub-optimal local-min'' without additional assumptions. As a special case, we provide a counter-example to the main result of \cite{yu1995local}.
		
		\item In the second result, we study a single hidden layer network with sigmoid activation, $d_0=1$ and any $N\geq 5$. We prove that if the width of the hidden layer is no less than $N$, for generic input data and a positive measure of output data,
		 the considered loss function contains sub-optimal local minima.
		
			\ifnonsmooth
		\item In the third result, we prove that for any activation function
		that is constant in a segment (including ReLU,
		leakly ReLU, ELU, SELU), for generic input and generic output data, the considered loss function contains sub-optimal local minima.
		This result covers common non-smooth activation
		function. 
		It includes as special cases the existence of sub-optimal local minima for ReLU activations, but generalizing
		 to a large set of non-smooth activation and multi-layer
		 neural networks. 
		\fi  
		
		\item  We present a novel general approach to construct sub-optimal
		local minima for neural networks. It is based on attributing the task of finding local minimum to solving a system of equations and inequalities. This is different from previous approaches of constructing local minima (e.g. utilizing dead neurons for ReLU activations). 
		
		
		\ifsmalldata 
		\item The existence of sub-optimal local-min is not universal, and	there is a phase transition on the number of data samples. 
		We show that as the number of data samples increases, there is a transition from no bad local-min to having bad local-min. 
		In particular, if there are only one or two data samples, the considered network has no bad local minima for a wide class of activation functions. However, with three or more data samples, bad local minima exist for almost all nonlinear activations.
		\fi 
		
	\end{itemize}
	
   In the first result, we construct sub-optimal local minima
       for ``certain $Y$'' where $Y$ is the output;
    our second results holds for ``a positive measure of $Y$''.
     Note that the existing works in  \cite{yu1995local,nguyen2018loss2,nguyen2019connected,li2018over}
     assume ``generic $X$ and any $Y$'', thus for a negative result
     we only need to consider
     ``a positive measure of $X$ and certain $Y$'',
     as done in the first result above. 
     In the second result, we construct sub-optimal local-min
     under a more general condition of ``generic $X$ and a positive measure of $Y$''
     for sigmoid activation. 
     The detailed conditions are summarized in Table \ref{table:our results}.

	\begin{table*}[htbp]
		\footnotesize
		\centering
		\begin{threeparttable}[b]
			\caption{Detailed Conditions of Our Results} 
			\label{table:our results}
			\begin{tabular}{p{25pt}p{20pt}p{20pt}p{53pt}p{40pt}p{30pt}p{65pt}} 
				\toprule
				Result & Width & Depth & $d_0$ and $N$  & Activation & Input Data & Output Data \\
				\midrule
			 Thm \ref{thm::deep-bad_localmin}  & any  & any & $d_0 < O(\sqrt{N})$ &   smooth   \footnotemark[1]
			 & generic   &  zero measure  \\
			  Thm \ref{thm::bad_localmin_sigmoid}  & any & $ 2 $  & $d_0=1$, any $N$  & sigmoid  & generic    &  positive measure \\
				\bottomrule
			\end{tabular}
			\begin{tablenotes}
				\item[1] Any  activation function that has continuous and non-zero second-order derivatives in an arbitrarily small region, including
				sigmoid, tanh, Swish, SoftPlus, etc.
			\end{tablenotes}
		\end{threeparttable}
	\end{table*}

	

	\subsection{Discussions}
	  \textbf{Benefit of overparameterization.}
	  	To our knowledge, our result is the first 
	  rigorous result on the existence of sub-optimal local-min that covers arbitrarily wide networks with popular smooth activations such as sigmoid
	  (for empirical risk minimization). 
		Our result does not necessarily mean over-parameterized networks are hard to train, but means that ``no sub-optimal local minima'' \textit{may be unable to explain} the benefit of over-parameterization.
		We believe that our negative result is important as it reveals our limited understanding of neural-net optimization, despite the optimism in the area that the optimization of over-parameterized networks is relatively well-understood.  This may help researchers  develop better explanations for the success	of neural networks.
		
		
    \textbf{Negative result for existence of local minima.}
  ``No sub-optimal local-min'' has been a popular characterization for non-convex
  matrix problems (e.g. \cite{sun2016guaranteed,bhojanapalli2016global,ge2016matrix}). 
 For instance, \cite{bhojanapalli2016global} explained the success
 of low-rank matrix sensing by the non-existence of sub-optimal local minima.
 However, recently \cite{NIPS2018_7802} pointed out that
 the assumption in \cite{bhojanapalli2016global} is too strong to be satisfied in practice, and sub-optimal local minima indeed exist for most problems. 
 It then argued that ``no sub-optimal local min'' \textit{may be unable 
 to explain} the success of gradient descent for non-convex
  matrix recovery,
and also pointed out that the true reason is still not clear. 
 This claim has a similar flavor to ours which highlights the limitation
 of using ``no sub-optimal local-min'' to understand deep learning.

	\textbf{Negative result for generalization bounds.}
	We noticed another recent result \cite{nagarajan2019uniform} with some similarity to our work in high level: it proved that uniform convergence \textit{may be unable to explain} the generalization error of neural networks. That result has attracted much attention, as it reveals the limitation of the existing approaches on analyzing generalization error. 
  

	


	\ifsmalldata
	
	\subsection{Phase Transition and Story Behind Our Findings}
	We notice that the two major results on the neural-net landscape
	can be illustrated by the simplest 1-neuron 1-data example
	and thus have no phase transition.
	The simplest 1-neuron 1-data linear network $F(v, w) = (1 - v w)^2 $ has no sub-optimal local minima, and it turns out 
	deep linear networks also have no sub-optimal local minima
	(\cite{kawaguchi2016deep}).
	The simplest 1-neuron 1-data non-linear network $F(v, w) = (1 - v \sigma(w) )^2 $ has no bad basin, and it turns out
	deep over-parameterized non-linear networks also have
	no bad basin (\cite{li2018over}).
	
	What about ``no sub-optimal local-min'' result for non-linear networks?
	We start from the very special case
	$ (1 - \sigma(w) )^2 $ (with the second layer weight $v$ fixed to $1$).
	We notice that if $\sigma$ has a ``bump'', i.e., not monotone, then 
	$ F( w) = (1 -  \sigma(w) )^2 $ has sub-optimal local-min.
	Such a counter-example can easily be fixed:  if $\sigma( t ) $ is strictly increasing on $t$ then $  (1 - \sigma(w) )^2 $ has no sub-optimal local-min. 
	As many practical activation functions are strictly increasing (e.g., sigmoid, ELU, SoftPlus), adding a minor assumption
	of strictly increasing seems reasonable. 
	
	We then checked the 2-layer case 
	$F(v, w) = (1 - v \sigma(w) )^2 $. 
	It is not straightforward to see what the landscape of this function
	is.  For special $\sigma $ (e.g. $\sigma(t) = t^2 $),
	sub-optimal local-min exists.
	But this counter-example is again quite brittle,
	as for $\sigma(t) = (t - 0.01)^2 $ or $\sigma(t) = t^2 + 0.01 $
	there is no sub-optimal local minima. 
	With some effort, we proved that for almost all
	$\sigma$ (except $\sigma$ that achieves zero value
	at its local-min or local-max), the function has no sub-optimal local-min.
	In particular, for strictly increasing activations, $(1 - v \sigma(w) )^2 $ has no sub-optimal local minima. 
	The exact conditions on the activations are not that important for our purpose, as long as it covers a broad range of activations,
	especially the set of strictly increasing functions.
	Since previous major results are extendable to more general cases,
	we initially made the following conjecture:
	
	\textbf{Conjecture}: for a large set of
	activation functions (a superset of the set of all strictly increasing smooth functions), over-parameterized networks have no sub-optimal local-min, for generic data.
	
	
	We already proved this conjecture for the case with one data point.
	We are able to prove it for two data points, i.e.,
	the function $ F(v, w) = (y_1 - v\sigma(w x_1))^2 + 
	(y_2 - v\sigma(w x_2 ))^2 $, giving us more confidence.
	
	Unfortunately, with $n = 3$ samples and $ m = 3 $ neurons,
	our proof no longer works. This failure of proof has led us to
	construct a counter-example to this conjecture, and later generalization of this counter-example to any number of neurons. 
	More specifically, our counter-examples hold for 
	almost all activation functions (in the sense
	that any continuous activation is arbitrarily close to 
	an activation in our class). 
	This is a strong negative answer to the conjecture,
	as disproving the conjecture only requires giving a counter-example
	for \textit{one} strictly increasing activation and our results imply
	that for almost all strictly increasing activation functions (and almost all non-increasing functions as well) the conjecture
	fails. 
	Finally, we note that it is \textit{impossible} to show
	``sub-optimal local minima exist for \textit{any} activations'' since linear activation leads to no sub-optimal local minima.
	Thus our result for almost all activations is
	in some sense ``tight''.

	\fi

	
	\subsection{Notation, Terminology and Organization}
	\textbf{Standard terminology}. The term ``generic'' is used in measure theory 
	and algebraic geometry. We adopt the notion in measure theory: a property P is said to hold for generic $ x \in \mathbb{R}^K $ if P holds for all but a zero-measure subset of $\mathbb{R}^K$. 
	For instance, a generic matrix is a full rank matrix (even though
	there are infinitely many non-full-rank matrices).
	 ``Sub-optimal local minimum (local-min)'' or ``bad local minimum (local-min)'' in this paper both refer to ``a local minimum that is not a global minimum''.

	\textbf{Informal terminology}.
	We use the term ``negative example'' in this paper to indicate
	``an example of a neural-net problem with sub-optimal local-min''
	unless otherwise specified. 
	We use the term ``positive result'' to
	indicate a result that shows that a certain bad point (not necessarily
	sub-optimal local-min) does not exist (e.g. \cite{li2018over} shows that bad basins do not exist), or global convergence under certain conditions (e.g. \cite{allen2018convergence}).
	
	
	\textbf{Notation}. Throughout this paper, we use $\mathbf{0}_{p\times q}$ and $\mathbf{1}_{p \times q}$ to represent $p\times q$ all-zero and all-one matrices, respectively. Similarly, $\mathbf{0}_{p}$ and $\mathbf{1}_{p}$ denote $p$-dimensional all-zero and all-one column vectors, respectively. The indexes $p$ and $q$ may be omitted if the dimension(s) can be easily inferred from the context. For a column vector $\mathbf{a} \in \mathbb{R}^d$, we denote its $i$-th entry by $a_i$. For a matrix $A \in \mathbb{R}^{d_1 \times d_2}$, the entry in the $i$-th row and the $j$-th column is denoted by $(A)_{i,j}$ or $a_{i,j}$. Furthermore, we use $A_{(i,:)}$ to represent a \textit{column vector} consisting of the entries in the $i$-th row of $A$, i.e,
	\begin{equation}
	A_{(i,:)} = [a_{i,1}, a_{i,2},\cdots, a_{i,d_2}]^\top \in \mathbb{R}^{d_2}.
	\end{equation}
	We use $\|\cdot\|_F$, $\langle \cdot, \cdot\rangle_F$ and $\circ$ to represent the Frobenius norm, the Frobenius inner product and the Hadamard product. We also use $\mathrm{row}(A)$ to represent the the linear space spanned by the rows of matrix $A$, and $B(x,r)$ to denote an open ball in an Euclidean space centered at $x$ with radius $r$.
	
	\textbf{Organization}. The paper is organized as follows. We first discuss some related works in Section \ref{sec::related-works}. Then we present our network model in Section \ref{sec::model}. In Section \ref{sec::main-result}, we present the main results and make some discussions. 
	The proof for the main result is presented in Section \ref{sec::proof_main}. We finally draw our conclusions in Section \ref{sec::conclusion}.

	\section{Related Works}\label{sec::related-works}
	
	\subsection{Examples of Sub-optimal Local Minima}\label{subsec: examples in literature}
	
	The question ``whether there exists a neural network problem
	with sub-optimal local-min'' is not interesting, 
	since it was known for a long time that the answer is ``yes''.
	One extremely simple example is $ \min_{w} (1 - \sigma(w) )^2 $
	 where $\sigma(t) = \max\{ t, 0 \} $ is the ReLU activation.
	To the best of our knowledge, the earliest construction of sub-optimal local-min dates back to 1980s \cite{sontag1989backpropagation}.
	 
	We argue that an interesting negative example
	should  allow at least a large width and smooth activations, and
	possibly  generic input data, for two reasons.
	First, these conditions are standard conditions used in a majority of positive results including the landscape analysis \cite{yu1995local,nguyen2018loss2,nguyen2019connected,li2018over}
and the convergence analysis \cite{jacot2018neural,allen2018convergence, du2018gradientb, zou2018stochastic, zou2019improved}.
To bridge the gap between positive results and negative examples,
we should impose the same conditions. 
	Second, the mathematical construction of sub-optimal local-min
	 without these restrictions is relatively well understood.  
	 We will explain each of the three requirements one by one.




	\textbf{Wide Network (Over-parameterized)}.
	Without the restriction on the width, a negative example is easy to construct.
	The classical work \cite{auer1996exponentially} presented a concrete counter-example where exponentially many sub-optimal local minima exist in a \textit{single-neuron} network with sigmoid activation. However, the counter-example was an unrealizable case (i.e. the network cannot fit data), and the authors proved that under the same setting but assuming
	realizable (the network can fit data), every local-min
	is  a global-min.
	These two results show that it is not that interesting to
	construct bad local-min for non-realizable case, and we shall
	 consider realizable case. 
	Note that a ``wide'' network with more neurons than the number of samples
	is a common setting used in recent works, and is a stronger condition
	than ``realizable'',
	thus it could make the construction of sub-optimal local-min harder. 
	Our work adopts a much more general setting: we only require the last hidden layer has at least two neurons, thus covering an arbitrarily wide network. 
	
	
	
	
	
	\textbf{Smooth Activations}.
	Without restriction on the activation function, a negative 
	example is easy to construct. 
		As mentioned earlier, it is very easy to 
	give a negative example for ReLU: $ \min_{w} (1 - \text{ReLU}(w) )^2 $.
	A few works showed that more complicated ReLU networks have sub-optimal local minima (e.g., \cite{swirszcz2016local}, \cite{zhou2017critical}, \cite{safran2017spurious},
	\cite{venturi2018spurious}, \cite{liang2018understanding}\footnote{\cite{safran2017spurious} and \cite{venturi2018spurious} both provided counter-examples when the objective function is the population risk, a different setting from the empirical risk minimization considered in this paper.}). 
	\cite{he2020piecewise} further demonstrated that networks with piecewise linear activation functions contain sub-optimal local minima.
  
	One intuition why ReLU or other piecewise linear activation functions can lead to bad local minima in general is that it can create flat regions (``dead regions'') where the empirical loss remains constant. 
	Intuitively, such flat regions may disappear if the activations are smooth; for instance, $ \min_{w} (1 - \sigma (w) )^2 $ has no sub-optimal local-min if $\sigma$ is strictly increasing. 
		Therefore, from both conceptual and mathematical perspectives, 
		an ideal counter-example should apply to smooth activations.
		A recent work \cite{yun2018small} constructs
	 sub-optimal local-min for certain smooth activations satisfying
	 a few conditions; however, they only construct such examples
	 for a network with two neurons and three specific data points, thus
	 not a wide network setting. 

		To our knowledge, there was no previous work that
	provides a negative example that considers both arbitrarily wide networks
	and smooth activations except \cite{venturi2018spurious} which studies population risk and special data distribution. 
	
	\textbf{Generic Input Data}.
    Besides the two requirements, we add one more requirement
     that the input data $X$ should be generic.
     This is because almost all recent positive results
     assume generic input data to avoid degenerate cases. 
     	An ideal negative example should apply to generic data, or at least a positive measure of data points.
     Our negative examples all apply to generic input data $X$. 
	 
	 For the convenience of readers, we summarize the existing examples and our examples in Table \ref{table:local-min-results}.
	 Note that there are a few factors such as depth and dimension
	 that are listed in Table \ref{table:our results} 
	 but not in Table \ref{table:local-min-results}.

	 \textbf{Depth, input dimension, sample size, output data distribution}.
	 These are the factors that may also affect
	 the existence of sub-optimal local minima.
	 Previous negative examples did not consider the effects of these factors;
	 for instance, all the results listed in Table 
	 \ref{table:local-min-results} only consider shallow networks,
	 while our Theorem \ref{thm::deep-bad_localmin}
	  considers arbitrarily deep networks. 
	 Due to the space limit, we did not list every advantage of our results 
	 over previous results, but only highlighted three factors in Table \ref{table:local-min-results}.
	  
	  \textbf{Limitation and open question}.
	 We try to construct negative examples that are as general as possible,
	 but it is very difficult to show that for every case
	 there is a sub-optimal local-min \footnote{This is a common situation 
	 for any negative theoretical result: NP-hardness does not mean
	 every problem instance is hard, but just that there exists
	 a hard instance. Of course one negative example may not
	 reflect the ``typical'' scenario, thus more general negative examples
	 are also useful. }.
	 There are two notable restrictions for Theorem \ref{thm::deep-bad_localmin}: first, $d_0 < O( \sqrt{N} ) $;
	  second, zero measure of $Y$ for given $X$.
	 In practical application of deep learning, the number of samples can be larger than $ d_0^2 $, so the first restriction is not strong (though
	 it is still an interesting question to remove this restriction).
	  In an effort to remove the second restriction, we established
	   Theorem \ref{thm::bad_localmin_sigmoid}, but
	   with relaxed conditions on other factors (e.g. only consider 2-layer
	   network and sigmoid activation).
	Rigorously speaking, we do not rule out the possibility
	 that there exists one combination of conditions (e.g. 5-layer sigmoid network
	 with randomly perturbed output $Y$) that can lead to absence of sub-optimal
	 local-min. The identification of such conditions
	 is left as future work.

	\begin{table*}[htbp]
		\small
		\centering
		\begin{threeparttable}[b]
			\caption{Summary of existing examples and our main result}
			\label{table:local-min-results}
			\begin{tabular}{llcll}
				\toprule
				Reference & Width  & Activation & Input Data \\
				\midrule
				Auer et al. & 1      & Continuous, bounded \footnotemark[2] & Positive measure \\
				Swirszcz et al. & 2 or 3  & Sigmoid, ReLU & Fixed\\
				Zhou et al. & 1     & ReLU  & Fixed\\
				Safran et al.\footnotemark[1] & 6 to 20  & ReLU  & Gaussian \\
				Venturi et al.\footnotemark[1] & Any  & $L^2(\mathbb{R}, e^{-x^2/2})$ & Adversarial\\
				Yun et al. & 2   & Small nonlinearity & Fixed \\
				 Theorem \ref{thm::deep-bad_localmin} & Any  & Generic smooth & Generic  \\
				\bottomrule
			\end{tabular}%
			\begin{tablenotes}
				\item[1] In these two examples, the objective function is the population risk, which is a different setting from the empirical risk minimization.
				\item[2] The actual requirement is that $l(\cdot, \sigma(\cdot))$ is continuous and bounded, where $l(\cdot, \cdot)$ and $\sigma(\cdot)$ are the loss function and the activation function, respectively.
			\end{tablenotes}
		\end{threeparttable}
	\end{table*}%

	\subsection{Other Related Works}\label{subsec:: other related-works}
	We discuss a few other related works in this section.
	
	\textbf{Absence of sub-optimal local minima under extra modifications}.
	There are some works that proved no sub-optimal local minima
	for non-linear networks, but their settings are quite
	special. \cite{liang2018understanding} assumes special training datasets (such as linearly separable data),
	\cite{soltanolkotabi2017learning} assumes quadratic activation,
	and  \cite{liang2018adding} assumes a special exponential activation
	and an extra regularizer. 
	These assumptions and modifications are not very practical
	and will not be the focus of this paper.
	We will study the rather general setting along the line of
	\cite{yu1995local}.
	
	\textbf{Numerical experiments on wide networks}.
	It is widely reported in numerical experiments that over-parameterized neural networks have nice landscape (see, e.g., \cite{livni2014computational,goodfellow2014qualitatively,lopez2018easing,geiger2018jamming,garipov2018loss}). 
	However, these works do not prove a rigorous theoretical result. 
	In particular, two interesting works \cite{garipov2018loss, draxler2018essentially}
	showed that in the current neural networks for image classification
	there are no barriers between different global minima.
	However, this only implies that there is no sub-optimal basin  and does not rule out the possiblity that sub-optimal local minima
	exist (which can be flat).

	\textbf{Positive results on linear and ultra-wide networks}.
	For deep linear networks,
a series of works \cite{kawaguchi2016deep, swirszcz2016local,lu2017depth, laurent2018deep, zhang2019depth} proved that no sub-optimal local minima
exist under very mild conditions. 
The assumption of linear activation does not match the practice where
 people always use non-linear activation functions. 
 
	For deep non-linear networks, \cite{jacot2018neural, allen2018convergence, du2018gradientb, zou2018stochastic, zou2019improved} proved that ``GD can converge to global minima'' for deep neural networks under the assumptions of a large number of neurons  and special initialization. 
	These results assume the iterates stay in a tiny 
	local region and are thus not far from training a linear network (called
	 ``lazy training'' by \cite{chizat2018global}).
	In addition, \cite{chizat2018global} provided numerical evidence that
	 practical training is different from lazy training,
	 thus these results do not necessarily explain the real training of neural networks.
	
	\textbf{Other theoretical results on landscape or convergence}.
	There have been many works on the landscape or convergence analysis of shallow neural-nets. 
	References \cite{freeman2016topology, soudry2017exponentially,haeffele2017global, ge2017learning, gao2018learning, feizi2017porcupine, panigrahy2017convergence} analyzed the global landscape
	of various shallow networks. 
	References
	\cite{laurent2017multilinear,tian2017analytical,soltanolkotabi2019theoretical,mei2018mean,brutzkus2017globally,zhong2017recovery,li2017convergence,brutzkus2017sgd,wang2018learning,du2018power,oymak2019towards,janzamin2015beating, mondelli2018connection} analyzed gradient descent for shallow networks.
	Along another line, \cite{mei2018mean,sirignano2018mean,chizat2018global,rotskoff2018neural} analyzed the limiting behavior of SGD when the number of neurons goes to infinity. 
	These results often have stronger assumptions or only study a subset
	of the optimization landscape, thus are not directly related
	to our question whether sub-optimal local minima exist under 
	the condition of Claim \ref{thm::Yu no local min}. 
	We refer the readers to \cite{sun2019optimization} for a more detailed
	overview of these results. 
	
	\section{Network Model}\label{sec::model}
	\subsection{Network Structure}
	Consider a fully connected neural network with $H$ hidden layers. Assume that the $h$-th hidden layer contains $d_h$ neurons for $1\leq h\leq H$, and the input and output layers contain $d_0$ and $d_{H+1}$ neurons, respectively. Given an input sample $\mathbf{x} \in \mathbb{R}^{d_0}$, the input to the $i$-th neuron of the $h$-th hidden layer (called ``pre-activation'' in deep learning area), denoted by $z_{h,i}$, is given by
	\begin{subequations}
		\begin{gather}
		z_{1, i}(\mathbf{x})= \sum_{j=1}^{d_0} w_{1, i, j}x_j+b_{1, i}, \quad 1\leq i\leq d_1 \\
		z_{h, i}(\mathbf{x})= \sum_{j=1}^{d_{h-1}}w_{h, i, j}z_{h-1, j}(\mathbf{x})+b_{h, i}, \quad 1\leq i\leq d_h, \quad 2\leq h \leq H
		\end{gather}
	\end{subequations}
	where $x_j$ is the $j$-th entry of the input data, $w_{h, i, j}$ is the weight from the $j$-th neuron of the $(h-1)$-th layer to the $i$-th neuron of the $h$-th layer, $b_{h, i}$ is the bias added to the $i$-th neuron of the $h$-th layer.
	
	Let $\sigma(\cdot)$ be the neuron activation function. Then the output of the $i$-th neuron of the $h$-th hidden layer (called ``post-activation''), denoted by $t_{h,i}$, is given by
	\begin{equation}
	t_{h,i}(\mathbf{x}) = \sigma\left( z_{h,i}(\mathbf{x}) \right), \quad 1 \leq i \leq d_h, \quad 1\leq h \leq H.
	\end{equation}
	Finally, the $i$-th output of the network, denoted by $t_{H+1, i}$, is given by
	\begin{equation}
	t_{H+1, i}(\mathbf{x})=\sum_{j=1}^{d_H}w_{H+1, i, j}t_{H, j}(\mathbf{x}), \quad 1 \leq i \leq d_{H+1}
	\end{equation}
	where $w_{H+1, i, j}$ is the weight to the output layer, defined similarly to that in the hidden layers.
	
	Then, we define $W_h \in \mathbb{R}^{d_h\times d_{h-1}}$ and $\mathbf{b}_h \in \mathbb{R}^{d_{h}}$ as the weight matrix and the bias vector from the $(h-1)$-th layer to the $h$-th layer. The entries of each matrix are given by
	\begin{equation}
	(W_h)_{i,j} = w_{h, i, j},\quad\!\! (\mathbf{b}_h)_i = b_{h,i}.
	\end{equation}

	\subsection{Training Data}
	Consider a training dataset consisting of $N$ samples. Noting that the input and output dimensions are $d_0$ and $d_{H+1}$ respectively, we denote the $n$-th sample by $\left(\mathbf{x}^{(n)}, \mathbf{y}^{(n)}\right)$, $n=1,\cdots, N$, where $\mathbf{x}^{(n)}\in \mathbb{R}^{d_0}, \mathbf{y}^{(n)} \in \mathbb{R}^{d_{H+1}}$ are the $n$-th input and output samples, respectively. We can rewrite all the samples in a matrix form, i.e.
	\begin{subequations}
		\begin{align}
		X \triangleq & \left[\mathbf{x}^{(1)}, \mathbf{x}^{(2)}, \cdots, \mathbf{x}^{(N)}\right]\in\mathbb{R}^{d_0 \times N} \\
		Y \triangleq & \left[\mathbf{y}^{(1)}, \mathbf{y}^{(2)}, \cdots, \mathbf{y}^{(N)} \right]\in\mathbb{R}^{d_{H+1}\times N}.
		\end{align}
	\end{subequations}
	
	With the input data given, we can represent the input and output of each hidden-layer neuron by
	\begin{subequations}
		\begin{gather}
		z_{h,i,n} = z_{h,i}\left(\mathbf{x}^{(n)}\right)\\
		t_{h,i,n} = t_{h,i}
		\left(\mathbf{x}^{(n)}\right)
		\end{gather}
	\end{subequations}
	for $h = 1,2,\cdots, H$, $i = 1,2,\cdots,d_h$, and $n = 1,2, \cdots, N$. Then, we define $Z_h \in \mathbb{R}^{d_h \times N}$ and $T_h \in \mathbb{R}^{d_h \times N}$ as the input and output matrix of the $h$-th layer with
	\begin{subequations}
		\begin{gather}
		(Z_{h})_{n,i} =  z_{h,i,n} \\
		(T_{h})_{n,i} =  t_{h,i,n}.
		\end{gather}
	\end{subequations}
	Similarly, we denote the output matrix by $T_{H+1}\in \mathbb{R}^{d_{H+1} \times N}$, where
	\begin{subequations}
		\begin{align}
		(T_{H+1})_{i,n} = t_{H+1,i}\left(\mathbf{x}^{(n)}\right)
		\end{align}
	\end{subequations}
	for $i = 1,2,\cdots, d_{H+1}$, $n = 1, 2,\cdots, N$.
	
	\subsection{Training Loss}
	Let $\Theta$ denote all the network weights, i.e.
	\begin{equation}
	\Theta=(W_1, \mathbf{b}_1, W_2, \mathbf{b}_2 \cdots, W_H, \mathbf{b}_H, W_{H+1})
	\end{equation}
	In this paper, we use the quadratic loss function to characterize the training error. That is, given the training dataset $(X,Y)$, the empirical loss is given by
	\begin{equation}
	E(\Theta) = ||Y - {T}_{H+1}(\Theta)||^2_F.
	\end{equation}
	Here we treat the network output $T_{H+1}$ as a function of the network weights. Then, the training problem of the considered network is to find $\Theta$ to minimize the empirical loss $E$.

	\section{Main Theorems and Discussions}\label{sec::main-result}
	\subsection{General Example of Bad Local Minima}\label{subsec::bad-local_min}
	In this subsection, we present our main result of bad local minima. To this end, we first specify the assumptions on the data samples and the activation function.
	
	\begin{assumption} \label{ass::inputdata}
		\quad
		\begin{enumerate}[label=(\alph*)]
			\item \label{ass::inputdata_a}
			The input dimension $d_0$ satisfies $d_0^2/2+3d_0/2 < N$.
			\item \label{ass::inputdata_b} All the vectors in the set
			\begin{multline}
			\mathcal{X} \triangleq  \{\mathbf{1}_N\} \cup \{X(i,:) | 1 \leq i \leq d_0\} \cup  
			\{X(i,:)\circ X(j,:)| 1 \leq i ,j \leq d_0\}.
			\end{multline}
		 are linearly independent, where ``$\circ$'' represents Hadamard product. 
	
		\end{enumerate}
	\end{assumption}
	
	The set $\mathcal{X} \subseteq \mathbb{R}^N$ has in total $d_0^2/2 +3 d_0/2+1 $ vectors, including $\mathbf{1}_N$, all the rows of $X$, and the Hadamard product between any two rows of $X$.
	Notice that if Assumption \ref{ass::inputdata}\ref{ass::inputdata_a} is satisfied, Assumption \ref{ass::inputdata}\ref{ass::inputdata_b} holds for generic input data. That is, provided \ref{ass::inputdata}\ref{ass::inputdata_a}, the input data violating Assumption \ref{ass::inputdata}\ref{ass::inputdata_b} only constitutes a zero-measure set in $\mathbb{R}^{d_0\times N}$. Further, if Assumption \ref{ass::inputdata}\ref{ass::inputdata_a} holds, Assumption \ref{ass::inputdata}\ref{ass::inputdata_b} can be always achieved if we allow an arbitrarily small perturbation on the input data.
	
	\begin{assumption}
		\label{ass::non-linear-activation}
		There exists $a\in \mathbb{R}$ and $\delta>0$ such that
		\begin{enumerate}[label=(\alph*)]
			\item $\sigma$ is twice differentiable on $[a-\delta, a+\delta]$.
			\item $\sigma(a), \sigma'(a), \sigma''(a) \not = 0$.
		\end{enumerate}
	\end{assumption}
	Assumption \ref{ass::non-linear-activation} is very mild as it only requires the activation function to have continuous and non-zero derivatives up to the second order in an arbitrarily small region. It holds for many widely used activations such as ELU, sigmoid, softplus, Swish, and so on. In fact, the function class specified by Assumption \ref{ass::non-linear-activation} is a dense set in the space of continuous functions in the sense of uniform convergence. Therefore, by Assumption \ref{ass::non-linear-activation} we specify a ``generic" class of activation functions.
	
	\begin{theorem}
		\label{thm::deep-bad_localmin}
		Consider a multi-layer neural network with input data $X\in \mathbb{R}^{d_0\times N}$ and $N\geq 3$. Suppose that Assumption \ref{ass::inputdata} and \ref{ass::non-linear-activation} hold, and the last hidden layer has at least two neurons, i.e., $d_{H} \geq 2$. Then there exists an output matrix $Y\in \mathbb{R}^{d_{H+1}\times N}$ such that the empirical loss $E$ has a sub-optimal local minimum.
	\end{theorem}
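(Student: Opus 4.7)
My plan is to exhibit an explicit pair $(\Theta^*, Y)$ and verify by direct computation the first- and second-order conditions for a local minimum, with $E(\Theta^*) > 0$. Following the paper's announced recipe, first-order optimality will reduce to linear equations on $Y$ while second-order optimality will reduce to inequalities; Assumption~\ref{ass::inputdata} and Assumption~\ref{ass::non-linear-activation} together will make the system jointly solvable with $Y \neq 0$.

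\textbf{Construction.} Take $W_1 = 0$ and $\mathbf{b}_1 = a\mathbf{1}_{d_1}$, so that every pre-activation in the first hidden layer equals $a$ and $T_1 = \sigma(a) \mathbf{1}_{d_1 \times N}$. Using $d_H \geq 2$, choose the later-layer parameters so that $T_{H+1}(\Theta^*) = 0$ while the downstream composition $W_{H+1} W_H \cdots W_2$ acts nontrivially on perturbations of $T_1$; a convenient recipe is to take $W_{H+1}$ with $W_{H+1}\mathbf{1}_{d_H} = 0$ (e.g.\ paired columns of opposite sign) and generic nonzero intermediate weights. Expanding the first-layer activation as $\sigma(a + U) = \sigma(a) + \sigma'(a) U + \tfrac12 \sigma''(a) U^{\circ 2} + O(U^3)$ with $U = \Delta W_1 X + \Delta \mathbf{b}_1 \mathbf{1}^\top$ and propagating through layers $2,\dots,H+1$, one checks that each row of the perturbation $T_{H+1}(\Theta^* + \Delta)$ depends on the sample index $n$ only through $\mathbf{1}_N$, rows of $X$, and Hadamard products of pairs of rows of $X$; that is, each row of the perturbation lies in $\mathrm{span}(\mathcal{X})$.

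\textbf{First- and second-order optimality.} By Assumption~\ref{ass::inputdata}, $\mathcal{X}$ consists of $d_0^2/2 + 3d_0/2 + 1$ linearly independent vectors in $\mathbb{R}^N$, so $\mathcal{X}^\perp$ is nontrivial. Pick each row of $Y$ to be a nonzero vector in $\mathcal{X}^\perp$. Then $\langle Y, T_{H+1}^{\text{lin}}(\Delta)\rangle = 0$ for every $\Delta$, giving $\nabla E(\Theta^*) = 0$. Moreover $\langle Y, T_{H+1}^{\text{quad}}(\Delta)\rangle = 0$ as well, because the quadratic term (carrying the $\sigma''(a) U^{\circ 2}$ contribution) also lies row-wise in $\mathrm{span}(\mathcal{X})$. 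The remaining quadratic variation of $E$ is therefore $\|T_{H+1}^{\text{lin}}(\Delta)\|_F^2 \geq 0$. Since $T_{H+1}(\Theta^*) = 0$ and $Y \neq 0$, we have $E(\Theta^*) = \|Y\|_F^2 > 0$, while the global infimum is $0$ (the network is wide enough to interpolate), so $\Theta^*$ is sub-optimal.

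\textbf{Main obstacle.} The delicate step is handling directions $\Delta$ in the kernel of the linearized map, where $\|T_{H+1}^{\text{lin}}(\Delta)\|_F^2 = 0$. There the analysis must be pushed to third and fourth order: the cubic cross term $-2\langle Y, T_{H+1}^{(3)}(\Delta)\rangle$ should again vanish by $Y \perp \mathcal{X}$ after careful bookkeeping of which Hadamard products appear, and then the quartic term must dominate. This is where $\sigma''(a) \neq 0$ and the paired structure of $W_{H+1}$ enter essentially, curving the image of the network map inward so that the degenerate directions acquire strictly positive higher-order contributions. Extending from $H=1$ to arbitrary depth should proceed by induction, propagating the rank-one structure of each hidden activation at $\Theta^*$ through the expansion; the single-layer analysis is the core, and the multi-layer extension is mostly bookkeeping, but carefully verifying that every direction in the kernel of the linearized map is controlled by the higher-order terms is where the bulk of the proof effort should lie.
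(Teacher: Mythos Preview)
Your proposal has a genuine gap at exactly the place you flagged as delicate. By choosing each row of $Y$ in $\mathcal{X}^\perp$, you kill the cross terms $\langle Y, T_{H+1}^{(1)}\rangle$ and $\langle Y, T_{H+1}^{(2)}\rangle$, but your claim that the cubic cross term ``should again vanish by $Y\perp\mathcal{X}$'' is false: the third-order expansion of $\sigma$ produces triple Hadamard products $X_{(j,:)}\circ X_{(j',:)}\circ X_{(j'',:)}$, which are \emph{not} in $\mathcal{X}$. Concretely, take $H=1$, $d_0=1$, $d_1=3$, $d_2=1$, $W_2=(c_1,c_2,c_3)$ with $\sum c_i=0$, and perturb only $W_1$ by $\epsilon\mathbf{p}$ with $W_2\mathbf{p}=0$. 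Then $T_2^{\mathrm{lin}}=0$, but $T_2^{(3)}=\tfrac{\epsilon^3}{6}\sigma'''(a)(\sum_i c_ip_i^3)\,\mathbf{x}^{\circ3}$, and for suitable $\mathbf{p}$ (e.g.\ $c=(1,1,-2)$, $\mathbf{p}=(2,0,1)$) one has $\sum c_ip_i^3\neq0$. Since $\mathbf{x}^{\circ3}\notin\mathrm{span}(\mathcal{X})$, generically $\langle Y,\mathbf{x}^{\circ3}\rangle\neq0$, so $E(\Theta^*+\epsilon\Delta)-E(\Theta^*)$ has a nonzero $\epsilon^3$ term and $\Theta^*$ is a saddle, not a local minimum. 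Your paired-column device for $W_{H+1}$ does not rescue this once $d_H\geq3$.

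The paper avoids this entirely by a different mechanism: instead of full orthogonality $\Delta Y\perp\mathcal{X}$, it imposes orthogonality only to $\mathbf{1}$, to $X_{(j,:)}$, and to the \emph{off-diagonal} products $X_{(j,:)}\circ X_{(j',:)}$, while requiring the \emph{strict inequality} $[\sigma'(a)]^{H-1}\sigma''(a)\langle\Delta Y_{(i,:)},X_{(j,:)}\circ X_{(j,:)}\rangle>0$ on the diagonal ones (Lemma~\ref{lemma::dim_requirement} guarantees such $\Delta Y$ exists). Combined with all \emph{positive} weights in layers $2,\dots,H{+}1$ (not your sign-paired $W_{H+1}$), this makes the second-order cross term $\langle\Delta Y,T_{H+1}(\Theta')-T_{H+1}(\Theta)\rangle$ itself nonnegative (Lemma~\ref{lemma::perturb_direction}), so $E(\Theta')\geq E(\Theta)$ follows directly from the decomposition $E(\Theta')-E(\Theta)=2\langle\Delta Y,\cdot\rangle+\|\cdot\|^2$ with no higher-order analysis needed. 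A secondary gap: your sub-optimality argument assumes realizability (``wide enough to interpolate''), but the theorem only assumes $d_H\geq2$; the paper instead exhibits an explicit $\Theta^*$ with $E(\Theta^*)<E(\Theta)$.
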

	This is a rather general counter-example. Theorem \ref{thm::deep-bad_localmin} states that for generic input data and non-linear activation functions, there exists output data
	such that the loss has a sub-optimal local minimum regardless of the depth and width (besides the requirement of $d_H \geq 2$). 
	\ifnonsmooth
	The activation considered in Assumption \ref{ass::non-linear-activation} is required to has non-zero derivatives up to the second order in a small region. This implies that the activation should contain a tiny ``smooth curve''. In what follows, we consider networks with activation functions that contain a tiny ``segment''.
	
	\begin{assumption}
		\label{ass::partial-linear-activation}
		\quad
		\begin{enumerate}[label=(\alph*)]
			\item \label{ass::partial-linear-activation_a}There exists $a\in \mathbb{R}$ and $\delta>0$, such that $\sigma$ is linear in $(a-\delta, a+\delta)$.
			\item \label{ass::partial-linear-activation_b}Each hidden layer is wider than the input layer, i.e., $d_h > d_0$ for $h=1, 2, \cdots, H$. 
			\item \label{ass::partial-linear-activation_c}The training data satisfies $\mathrm{rank}\left(\left[X^\top,\mathbf{1}_N, Y^\top\right]\right) > \mathrm{rank}\left(\left[X^\top, \mathbf{1}_N\right]\right) $.
		\end{enumerate}
	\end{assumption}
	Assumption \ref{ass::partial-linear-activation}\ref{ass::partial-linear-activation_a} requires the activation to be at least ``partially linear", i.e., linear in an arbitrarily small interval, which holds for many activation functions, such as piecewise-linear activations  ReLU and leaky ReLU, and
	partially-linear-partially-nonlinear activations ELU and SeLU. Assumption \ref{ass::partial-linear-activation}\ref{ass::partial-linear-activation_b} requires each hidden layer to be wider than the input layer.
	Assumption \ref{ass::partial-linear-activation}\ref{ass::partial-linear-activation_c} holds for generic training data $(X, Y)$ satisfying $N > d_0+ 1$.
	
	\begin{theorem}
		\label{thm::partiallinear-bad-localmin}
		Consider a fully-connected deep neural network with data samples $X\in \mathbb{R}^{d_0\times N}, Y\in \mathbb{R}^{d_{H+1}\times N}$. Suppose that Assumption \ref{ass::partial-linear-activation} holds. Then the empirical loss $E$ has local minimum $\Theta$ with $E(\Theta) > 0$.
	\end{theorem}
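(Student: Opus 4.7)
The plan is to construct a parameter configuration $\Theta^*$ at which every hidden-layer pre-activation strictly lies inside the linear segment $(a-\delta, a+\delta)$ where $\sigma(z)=\alpha z+\beta$. Since pre-activations are continuous in $\Theta$, the set of such configurations is open in parameter space, and consequently an entire open neighborhood $U$ of $\Theta^*$ consists of parameters at which the network acts as an affine function of the input. Inside $U$ the empirical loss equals the residual of the realized affine map, which is lower bounded by the best linear-regression residual on $(X,Y)$; if $\Theta^*$ realizes the optimal affine map, then $\Theta^*$ is a local minimum, and Assumption \ref{ass::partial-linear-activation}\ref{ass::partial-linear-activation_c} guarantees that its loss value is strictly positive.

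Concretely, first let $(A^*, c^*)$ be a minimizer of $\min_{A,c}\|Y-AX-c\mathbf{1}_N^\top\|_F^2$; by Assumption \ref{ass::partial-linear-activation}\ref{ass::partial-linear-activation_c}, $Y$ is not an exact affine function of $X$, so the residual $E^*:=\|Y-A^*X-c^*\mathbf{1}_N^\top\|_F^2$ is strictly positive. Next, pick a small scalar $\epsilon>0$ and, for each $h=1,\ldots,H$, set $W_h=\epsilon\tilde{W}_h$ with $\tilde{W}_h$ of full rank (possible since $d_h>d_0$ by Assumption \ref{ass::partial-linear-activation}\ref{ass::partial-linear-activation_b}), while choosing the bias vectors $b_h$ iteratively so that $Z_h = a\mathbf{1}_{d_h}\mathbf{1}_N^\top + O(\epsilon)$. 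For $\epsilon$ small enough, every entry of every $Z_h$ lies strictly inside $(a-\delta,a+\delta)$, and a direct induction gives $T_h = \alpha^h \epsilon^h(\tilde{W}_h\cdots\tilde{W}_1)X + \mu_h \mathbf{1}_N^\top$ for a constant vector $\mu_h$ depending only on $(\alpha,\beta,a)$ and the chosen biases. Finally, choose $W_{H+1}\in\mathbb{R}^{d_{H+1}\times d_H}$ so that $\alpha^H\epsilon^H W_{H+1}(\tilde{W}_H\cdots\tilde{W}_1)=A^*$ and $W_{H+1}\mu_H=c^*$ simultaneously; by arranging the biases so that $\mu_H$ is not in the column span of $\tilde{W}_H\cdots\tilde{W}_1$, the matrix $[\tilde{W}_H\cdots\tilde{W}_1\mid\mu_H]\in\mathbb{R}^{d_H\times(d_0+1)}$ has rank $d_0+1\leq d_H$, so such $W_{H+1}$ exists.

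By construction, $T_{H+1}(\Theta^*) = A^*X + c^*\mathbf{1}_N^\top$, so $E(\Theta^*) = E^*>0$. By continuity of $\Theta\mapsto Z_h$, there is an open neighborhood $U$ of $\Theta^*$ throughout which all hidden pre-activations remain inside $(a-\delta,a+\delta)$; on $U$ the network still implements an affine map $x\mapsto A(\Theta)x + c(\Theta)$, so $E(\Theta)=\|Y-A(\Theta)X-c(\Theta)\mathbf{1}_N^\top\|_F^2\geq E^*=E(\Theta^*)$ by optimality of $(A^*,c^*)$. Hence $\Theta^*$ is a local minimum with strictly positive loss. The main obstacle is the construction of $\Theta^*$ itself: the two requirements of keeping every pre-activation trapped inside the tiny interval $(a-\delta,a+\delta)$ while also realizing a prescribed affine map pull the weights in opposite directions (small hidden-layer weight magnitudes for the first, meaningful effective rank for the second), and their simultaneous satisfaction is enabled precisely by Assumption \ref{ass::partial-linear-activation}\ref{ass::partial-linear-activation_b} together with the degrees of freedom provided by the biases $b_h$ used to recenter each $Z_h$.
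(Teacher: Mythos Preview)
Your approach is essentially the same as the paper's: trap all pre-activations in the linear segment, so that in a whole neighborhood the network realizes an affine map of $X$, and then anchor the construction at the optimal affine regressor so that no nearby parameter can beat it. The paper phrases the construction in terms of achieving $\mathrm{row}(T_h)=\mathrm{row}\!\left(\begin{smallmatrix}X\\ \mathbf{1}_N^\top\end{smallmatrix}\right)$ at every hidden layer (via a small-perturbation rank lemma) and then sets $W_{H+1}\in\arg\min_V\|Y-VT_H\|_F^2$, whereas you build the affine map $A^*X+c^*\mathbf{1}_N^\top$ explicitly; these are equivalent.

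One genuine omission: your construction silently assumes $\alpha\neq 0$. When the slope of $\sigma$ on $(a-\delta,a+\delta)$ vanishes, the factor $\alpha^H\epsilon^H$ is zero and the equation $\alpha^H\epsilon^H W_{H+1}(\tilde W_H\cdots\tilde W_1)=A^*$ cannot be satisfied unless $A^*=0$. The paper treats this degenerate case separately (the network output is constant in a neighborhood, so one simply takes $W_{H+1}$ to minimize $\|Y-\beta V\mathbf{1}\|_F^2$), and your write-up should do the same. Apart from this edge case, the argument is correct.
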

	
	Theorem \ref{thm::partiallinear-bad-localmin} gives the condition where the network with ``partially linear" activations has a local minimum with non-zero training error. Compared to Theorem \ref{thm::deep-bad_localmin}, Theorem \ref{thm::partiallinear-bad-localmin} does not require a specific choice of the output data $Y$, but holds for generic data $(X, Y)$. However, the requirement that the network is wide in every layer is stronger than the requirement of Theorem \ref{thm::deep-bad_localmin}.
	Specifically, if the network is realizable, 
	the considered network has a bad local minimum:
	
	\begin{corollary}
		\label{cor::partial-linear-activation}
		Consider a fully-connected deep neural network with data samples $X\in \mathbb{R}^{d_0\times N}, Y\in \mathbb{R}^{d_{H+1}\times N}$. Suppose that Assumption \ref{ass::partial-linear-activation} holds and that the network is realizable for $(X, Y)$. Then the empirical loss $E(\Theta)$ has a sub-optimal local minimum.
	\end{corollary}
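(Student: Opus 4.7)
The plan is to observe that this corollary is essentially a one-line consequence of Theorem \ref{thm::partiallinear-bad-localmin} combined with the definition of realizability. Under Assumption \ref{ass::partial-linear-activation}, Theorem \ref{thm::partiallinear-bad-localmin} already supplies a parameter configuration $\Theta^{\star}$ that is a local minimum of $E$ with strictly positive empirical loss $E(\Theta^{\star}) > 0$. To upgrade ``local minimum with positive loss'' to ``sub-optimal local minimum,'' I just need a point strictly better than $\Theta^{\star}$, which the realizability hypothesis provides for free: by definition there exists some $\Theta^{\circ}$ with $E(\Theta^{\circ}) = 0 < E(\Theta^{\star})$, so $\Theta^{\star}$ cannot be a global minimum.

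Concretely, the steps are: first, verify that the hypotheses of Theorem \ref{thm::partiallinear-bad-localmin} are literally the same as what Assumption \ref{ass::partial-linear-activation} provides here (no additional assumption is needed beyond what is already stated in the corollary), and apply it to obtain $\Theta^{\star}$ with $E(\Theta^{\star}) > 0$ that is a local minimum. Second, invoke realizability to produce $\Theta^{\circ}$ with $E(\Theta^{\circ}) = 0$. Third, conclude that $\Theta^{\star}$ is a local but not global minimum of $E$, hence sub-optimal in the terminology of the paper.

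\textbf{Main obstacle.} There is essentially no obstacle at the level of the corollary itself, because all the difficult work is embedded in Theorem \ref{thm::partiallinear-bad-localmin}. The only subtle point to check is consistency: realizability (existence of $\Theta^{\circ}$ with $E(\Theta^{\circ})=0$) must be compatible with Assumption \ref{ass::partial-linear-activation}\ref{ass::partial-linear-activation_c}, which forbids $Y$ from lying in the row span of $[X^{\top}, \mathbf{1}_N]$. These two conditions are compatible precisely because the network is nonlinear and wider than the input in every hidden layer (Assumption \ref{ass::partial-linear-activation}\ref{ass::partial-linear-activation_b}): such networks can fit arbitrary $Y$ through nonlinear transformations even when $Y$ is not a linear-plus-bias function of $X$. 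So both hypotheses can hold simultaneously, and the conclusion follows immediately.
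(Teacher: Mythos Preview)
Your proposal is correct and matches the paper's approach: the paper treats this corollary as an immediate consequence of Theorem~\ref{thm::partiallinear-bad-localmin} together with realizability, without giving a separate proof. One minor remark: your ``main obstacle'' paragraph about checking compatibility between realizability and Assumption~\ref{ass::partial-linear-activation}\ref{ass::partial-linear-activation_c} is unnecessary, since both are given as hypotheses of the corollary and you are not asked to exhibit data satisfying them simultaneously.
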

	
	Remark: After the first version, the authors of \cite{christof2020stability} point out that their technique in \cite{christof2020stability} can potentially help to improve Corollary \ref{cor::partial-linear-activation} by relaxing the realizability assumption and the width requirement. We leave this for future work.
	\fi

	One limitation of Theorem \ref{thm::deep-bad_localmin} is that the output data $Y$ need to be chosen based on the input data, thus $Y$ lies in a zero measure set for a given $X$.
   We have removed the restriction of zero-measure $X$ compared to many earlier works, but have added the restriction of zero-measure $Y$ in Theorem \ref{thm::deep-bad_localmin}. 
   One many wonder whether this is a reasonable setting. 
   
In our opinion, this setting is reasonable in the context of the literature. Existing positive results on wide networks \cite{yu1995local} \cite{li2018over}
  \cite{nguyen2018loss2} all require generic input $X$ but make no restriction on $Y$.
Therefore, for a negative result we only need to prove the existence of sub-optimal local-min for certain $ Y$: this would mean that it is impossible to prove the absence of sub-optimal local-min under the common assumption of ``generic $X$ and \textit{any} $Y$''. Thus our result is an important first step to understand the existence of sub-optimal local minima. As an immediate corollary, the main result of \cite{yu1995local} does not hold.
      
      	\begin{corollary}
   Claim \ref{thm::Yu no local min}, which is Theorem 3 of \cite{yu1995local},
   does not hold. There is a counter-example to it.  
		\end{corollary}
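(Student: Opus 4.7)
The plan is to produce a single instance that (i) satisfies all hypotheses of Claim~\ref{thm::Yu no local min} and (ii) falls under Theorem~\ref{thm::deep-bad_localmin}, so that the sub-optimal local minimum guaranteed by the latter directly contradicts the former.

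First I would fix the architecture to match Yu and Chen's setting: take $H=1$ hidden layer with sigmoid activation, input dimension $d_0=1$, any $N\ge 3$, and hidden width $m = d_1 \ge N$. With $d_0 = 1$ we have $d_0^{2}/2 + 3d_0/2 = 2 < N$, so Assumption~\ref{ass::inputdata}\ref{ass::inputdata_a} holds. The set $\mathcal{X}$ of Assumption~\ref{ass::inputdata}\ref{ass::inputdata_b} reduces to the three vectors $\mathbf{1}_N$, $X(1,:)$, and $X(1,:)\circ X(1,:)$, whose linear independence is a single non-vanishing Vandermonde-type polynomial condition in the entries of $X$, hence holds for generic $X \in \mathbb{R}^{1\times N}$. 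Sigmoid is analytic with $\sigma(a),\sigma'(a),\sigma''(a)$ all non-zero at, say, $a = 1$, so Assumption~\ref{ass::non-linear-activation} is satisfied. Finally $d_H = d_1 \ge N \ge 2$, so the last hidden layer has at least two neurons.

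Next I would observe that the input condition required by Claim~\ref{thm::Yu no local min}, namely that there exists an index $k$ with $x^{(n)}_k \neq x^{(n')}_k$ for every $n \neq n'$, is also a generic condition on $X$: in our $d_0 = 1$ setup it just says that the $N$ scalar inputs are pairwise distinct, which fails only on a zero-measure subset of $\mathbb{R}^{1 \times N}$. Since the intersection of two generic sets is still generic (in particular non-empty), I can pick a single $X$ that simultaneously satisfies Assumption~\ref{ass::inputdata} and Yu--Chen's distinctness hypothesis.

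For this $X$, Theorem~\ref{thm::deep-bad_localmin} furnishes an output matrix $Y$ for which the empirical loss $E(\Theta)$ admits a sub-optimal local minimum. The resulting instance $(X,Y)$, together with the sigmoid activation and width $m \ge N$, satisfies every hypothesis of Claim~\ref{thm::Yu no local min}, yet violates its conclusion. This is the desired counter-example.

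I do not expect any substantive obstacle: the only point requiring care is the compatibility of the two ``genericity'' conditions on $X$, and this is immediate because each excludes only a measure-zero set. The entire corollary is thus essentially a specialization-of-parameters argument on top of Theorem~\ref{thm::deep-bad_localmin}.
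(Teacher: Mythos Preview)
Your proposal is correct and matches the paper's approach: the paper states the corollary as an immediate consequence of Theorem~\ref{thm::deep-bad_localmin} without giving a separate proof, and you have simply made that specialization explicit by verifying Assumptions~\ref{ass::inputdata} and~\ref{ass::non-linear-activation} for $H=1$, $d_0=1$, sigmoid activation, and $d_1\ge N\ge 3$, and by noting that Yu--Chen's distinct-inputs hypothesis is compatible with the genericity needed for Assumption~\ref{ass::inputdata}\ref{ass::inputdata_b}.
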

      
Of course one could always ask whether a stronger negative example can be constructed, and we do go one step further. The restriction on $Y$ can be relaxed: the Theorem below shows that sub-optimal local minima can exist for a positive measure of $Y$ (while we still allows generic $X$) for sigmoid activation.
   
     
	
	\begin{theorem}
		\label{thm::bad_localmin_sigmoid}
		Consider a $1$-hidden-layer neural network with $d_0 = d_2 = 1$ and $d_1 \geq N > 5$. Suppose that the activation is sigmoid function. Then for generic $X \in \mathbb{R}^{1\times N}$,
		 there exists a positive measure of $Y \in \mathbb{R}^{1 \times N}  $
		 such that the network has a sub-optimal local minimum.
	\end{theorem}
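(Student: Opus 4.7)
The plan is to use Theorem~\ref{thm::deep-bad_localmin} as a starting point and then upgrade its ``certain $Y$'' conclusion to ``positive measure of $Y$'' by a local stability argument. The hypotheses of Theorem~\ref{thm::deep-bad_localmin} are met here: $d_0 = 1$ gives $d_0^2/2 + 3d_0/2 = 2 < N$, we have $d_H = d_1 \geq 2$, and sigmoid satisfies Assumption~\ref{ass::non-linear-activation} (non-vanishing $\sigma, \sigma', \sigma''$ on a generic small interval). Thus for generic $X$ there exist $Y^* \in \mathbb{R}^{1\times N}$ and $\Theta^*$ with $E(\Theta^*; Y^*) > 0$ that is a local minimum. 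The goal is then to show that for all $Y$ in some open neighborhood of $Y^*$ there is a nearby $\Theta(Y)$ that remains a sub-optimal local minimum.

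The natural tool is the implicit function theorem applied to $G(\Theta, Y) := \nabla_\Theta E(\Theta, Y) = 0$. Since $E$ is quadratic in $Y$, $G$ is smooth in both arguments. If the Hessian $H^* = \partial G/\partial \Theta$ at $(\Theta^*, Y^*)$ were invertible, IFT would immediately yield a smooth branch $\Theta(Y)$ of critical points in a neighborhood $U$ of $Y^*$; by continuity the Hessian stays positive definite and $E(\Theta(Y); Y)$ stays strictly positive on a (possibly smaller) neighborhood, giving a positive-measure set of $Y$ with sub-optimal local minima.

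The main obstacle is that in the overparameterized regime $d_1 \geq N$, $H^*$ is typically \emph{rank-deficient}: the Jacobian $J$ of the network map is $N \times 3d_1$, so $J^{\top}J$ has rank at most $N < 3d_1$; the residual correction $-2\sum_n r_n^* \nabla_\Theta^2 f_n$ is block-diagonal across neurons and does not automatically fill all null directions; further flat directions come from any neuron with $v_i = 0$ and from permutation symmetry. My plan to handle this is: (i) choose $\Theta^*$ carefully, possibly by slightly perturbing the Theorem~\ref{thm::deep-bad_localmin} construction and exploiting the analyticity of sigmoid, so that every degeneracy of $H^*$ is accounted for by a trivial symmetry (reparametrization along which $E$ is exactly constant); (ii) restrict $G$ to a slice transverse to those symmetries on which the reduced Hessian becomes invertible, and apply IFT there; (iii) lift the critical point from the slice back to the full parameter space, where it is still a local minimum since the quotient directions contribute no change in $E$. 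The restriction $N > 5$ plausibly enters to ensure enough first-order conditions can be met to make $\Theta^*$ a critical point with $r^* \neq 0$, while still leaving enough residual ``signal'' for the reduced Hessian to be strictly positive definite; the sigmoid-specific features (analyticity, strict monotonicity, bounded derivatives of all orders) are what allow one to exclude degenerate configurations for generic $X$.
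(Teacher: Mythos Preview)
Your high-level plan---build a sub-optimal local minimum for one specific $Y^*$, then argue it persists for all nearby $Y$---matches the paper's strategy. But the execution you sketch diverges in a crucial way, and the gap is real.

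\textbf{Where it breaks.} You acknowledge that the Hessian at $\Theta^*$ is rank-deficient and propose to quotient out ``trivial symmetries'' before applying the implicit function theorem. The problem is that for the Theorem~\ref{thm::deep-bad_localmin} construction (where effectively all hidden neurons share the same pre-activation), the flat manifold through $\Theta^*$ is large and not captured by permutation symmetry plus ``$v_i=0$'' neurons alone: the output depends only on $\sum_i v_i$, so there is a $(d_1-1)$-dimensional flat direction in $v$ alone, and further flat directions from moving the $(w_i,b_i)$'s in coordinated ways. Identifying a transverse slice on which the reduced Hessian is \emph{invertible}---not merely positive semidefinite---is exactly the hard part, and your proposal does not say how to do it. Slightly perturbing $\Theta^*$ generically will not make the Hessian full-rank either, because $3d_1 > N$ forces $J^\top J$ to be singular for \emph{every} parameter, and the second-order correction from the residual does not fill all the null directions.

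\textbf{What the paper does instead.} Rather than fight the degeneracy inside the wide network, the paper collapses the construction to a \emph{single-neuron} network via $(\underline{v},\underline{w},\underline{b}) = (\sum_i v_i,\, w,\, b) \in \mathbb{R}^3$. In this $3$-dimensional problem the local minimum is genuinely \emph{strict} (one checks first-order stationarity and that a certain $2\times 2$ matrix is positive definite). Strict local minima are stable under $C^0$ perturbations of the objective, so for every $(\tilde X,\tilde Y)$ near $(X,Y^*)$ the one-neuron network still has a nearby local minimum $\underline{\tilde\Theta}$. One then ``splits'' that single neuron back into $d_1$ copies---setting all $(\tilde w_i,\tilde b_i)=(\underline{\tilde w},\underline{\tilde b})$ and choosing any positive $\tilde v_i$ summing to $\underline{\tilde v}$---and verifies directly, using the inequality $\langle \Delta\tilde y,\, \tilde v'\sigma(\tilde z')-\tilde v\sigma(\tilde z)\rangle \geq 0$ inherited from the one-neuron analysis, that this is a local minimum of the wide network. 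Realizability (since $d_1\geq N$) then gives sub-optimality.

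\textbf{On $N>5$.} Your guess about the role of this hypothesis is off. The paper does \emph{not} start from the Theorem~\ref{thm::deep-bad_localmin} construction (which has $W_1=0$) but builds a sharper one with $w\neq 0$: it needs the six vectors
\[
\sigma(\mathbf z),\ \sigma'(\mathbf z),\ \sigma'(\mathbf z)\circ\mathbf x,\ \sigma''(\mathbf z),\ \sigma''(\mathbf z)\circ\mathbf x,\ \sigma''(\mathbf z)\circ\mathbf x\circ\mathbf x \in \mathbb{R}^N
\]
to be linearly independent, which forces $N\geq 6$. This is what allows one to pick $\Delta\mathbf y$ orthogonal to the first four and having strictly positive inner product with the last two---precisely the conditions that make the $2\times 2$ second-order matrix positive definite in the one-neuron reduction.
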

	
	This result implies that it is impossible to prove the absence of sub-optimal local-min even under a more general assumption of ``generic $X$ and \textit{generic} $Y$''. Note that we are not aware of any existing work that could utilize the condition of ``generic $Y$'' to prove a positive result, so our Theoerm \ref{thm::bad_localmin_sigmoid} actually disproves a hypothesis that was not considered before: if relaxing the condition of $Y$ from ``any $Y$'' to ``generic $Y$'', then every local-min is a global-min for wide networks.
In Table \ref{table:positive and negative},
we summarize our results in terms of the condition on $Y$,
and the corresponding positive claims that are disproved by
our results. 


	\begin{table*}[htbp]
		\footnotesize
		\centering
		\begin{threeparttable}[b]
			\caption{Positive and Negative Results} 
			\label{table:positive and negative}
			\begin{tabular}{p{25pt}p{140pt}p{140pt} } 
				\toprule
			          &  Positive claim: no bad local-min  &  Negative claim: bad local-min exists  \\
				\midrule
			   &  any $Y$ (\cite{yu1995local}, incorrect result)   &   $\exists$ $Y$  (Thm \ref{thm::deep-bad_localmin}) \\
  	\midrule  
			    & generic $Y$ (no result) &  $\exists$ positive measure of $Y$ (Thm \ref{thm::bad_localmin_sigmoid}) \\
				\bottomrule
			\end{tabular}
			\begin{tablenotes}
				\item Remark: Assuming generic input data $X$, 1-hidden-layer neural-net
				with enough neurons, and sigmoid activation.
			\end{tablenotes}
		\end{threeparttable}
	\end{table*}

Finally, we emphasize that our contribution is not merely finding sub-optimal local minima for one setting, but also introducing a general approach for constructing sub-optimal local-min that may apply to many settings. It is by this general approach that we are able to extends our result from a specific  $Y$ to a positive measure of $Y$. 
	
	
	\ifsmalldata
	
	\subsection{No Bad Local-Min for Small Data Set}\label{subsec::no-bad-local_min}
	The understanding of local minima for neural networks is divided. 
	On one hand, many researchers thought over-parameterization eliminates bad local minima
	and thus the results of this paper is a bit surprising.
	On the other hand, experts may think the existence of bad local-min is not surprising
	since symmetry can easily cause bad local minima. More specifically, a common intuition is that if there are two distinct global minima with barriers in between, then bad local minima can arise on the paths connecting these two global minima. However, this intuition is not rigorous, since it is possible that all points between the two global minima are saddle points. For instance, the function $ F(v, w) = (1 - v w)^2  $ contains two branches of global minima in the positive orthant and the negative orthant respectively, but on the paths connecting the two branches, there are no other local minma but only saddle points.
	
	In this subsection, we rigorously prove that for a 1-hidden-layer network, if the number of data samples is no more than $2$, then there is no bad local-min for a large class of activations (though not a dense set of continuous functions for the case with $2$ data samples). Recall that in Theorem \ref{thm::deep-bad_localmin} the number of data samples is at least $3$. This reveals an interesting phenomenon that the size of training data will also affect the existence of local minimum.
	First, if the network has one data sample and one neuron, we gives a sufficient and necessary condition for the existence of sub-optimal local minima. This result shows that networks with one data sample have no bad local minima for almost all continuous activations.
	
	\begin{assumption}
		\label{ass::activation2}
		The activation function $\sigma$ is continuous. Further, for any $t\in\mathbb{R}$, if $\sigma(t) = 0$, $t$ is not a local minimum or local maximum of $\sigma$.
	\end{assumption}
	Assumption \ref{ass::activation2} identifies a class of functions that have no local minimum or maximum with zero function values. They constitute a dense set in the space of continuous functions.
	
	
	\begin{theorem}
		\label{thm::bad_localmin_1neuron}
		Consider a $1$-hidden-layer neural network with $d_1 = N = 1$ and data sample $x,y \in\mathbb{R}$ with $y\not =0$. Then the network is realizable, and the empirical loss $E(\Theta)$ has no bad local minima if and only if Assumption \ref{ass::activation2} holds.
	\end{theorem}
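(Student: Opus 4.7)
The plan is to prove both implications of the biconditional. Throughout, I would use the fact that with $d_1 = N = 1$, the parameters are $(w_1, b_1, w_2) \in \mathbb{R}^3$ and the loss is $E(w_1, b_1, w_2) = (y - w_2 \sigma(w_1 x + b_1))^2$, together with the hypothesis $y \neq 0$.

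For the ``if'' direction, I would first note that Assumption \ref{ass::activation2} rules out $\sigma \equiv 0$: if $\sigma$ were identically zero, every $t$ would be a (weak) local min and local max of $\sigma$, violating the zero-value clause. So there exists $t_1$ with $\sigma(t_1) \neq 0$, and the choice $(w_1, b_1, w_2) = (0, t_1, y/\sigma(t_1))$ makes $E = 0$, proving realizability. To rule out bad local minima, I would take any $\Theta^* = (w_1^*, b_1^*, w_2^*)$ with $E(\Theta^*) > 0$, set $s^* = \sigma(w_1^* x + b_1^*)$ and $r^* = y - w_2^* s^* \neq 0$, and split into two cases. If $s^* \neq 0$, the one-dimensional slice $\gamma \mapsto E(w_1^*, b_1^*, w_2^* + \gamma) = (r^* - \gamma s^*)^2$ has nonzero derivative $-2 r^* s^*$ at $\gamma = 0$, so $\Theta^*$ is not a local min. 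If $s^* = 0$, then the pre-activation $u^* := w_1^* x + b_1^*$ is neither a local min nor a local max of $\sigma$ by the assumption, so every neighborhood of $u^*$ contains values with $\sigma(t)$ of each sign. Perturbing $b_1$ alone to realize such a $t$, and if necessary nudging $w_2$ slightly (with a sign choice depending on whether $w_2^* = 0$), I can make $w_2 \sigma(w_1 x + b_1)$ a small quantity with the same sign as $y$, which yields $E < y^2 = E(\Theta^*)$.

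For the ``only if'' direction, suppose Assumption \ref{ass::activation2} is violated. If $\sigma \equiv 0$, then $E \equiv y^2 > 0$, so the network is not realizable; this already violates the theorem's conjunction. Otherwise, there exists $t_0$ with $\sigma(t_0) = 0$ at which $\sigma$ has a local extremum. Assume (WLOG) that it is a local minimum, so $\sigma(t) \geq 0$ on some interval $(t_0 - \eta, t_0 + \eta)$; the local-max case is symmetric with the opposite sign convention. I would take $\Theta^* = (0, t_0, v^*)$ with $v^* = -\mathrm{sgn}(y)$: then $E(\Theta^*) = y^2 > 0$, and for every sufficiently small perturbation $(w_1, b_1, w_2)$ the pre-activation stays inside $(t_0-\eta, t_0+\eta)$ while $w_2$ retains the sign of $v^*$, so $w_2\sigma(w_1 x + b_1)$ has sign opposite to $y$ (or vanishes). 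This yields $|y - w_2\sigma(w_1 x + b_1)| \geq |y|$, hence $E \geq y^2$, so $\Theta^*$ is a sub-optimal local minimum.

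The main obstacle is the $s^* = 0$ subcase of the forward direction: it requires exploiting both the two-sided oscillation of $\sigma$ near $u^*$ (guaranteed by ``not a local extremum'') and a careful sign match against $w_2^*$ and $y$, with a separate treatment when $w_2^* = 0$ so that the nudge in $w_2$ itself provides the missing sign. The backward construction, by contrast, is clean once one recognizes that assigning $v^*$ the sign opposite to $y$ turns the one-sided behavior of $\sigma$ near $t_0$ into a local barrier that no small perturbation can cross.
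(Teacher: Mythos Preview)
Your proposal is correct and follows essentially the same route as the paper's proof: both argue realizability from $\sigma\not\equiv 0$, handle the sufficiency by the same two-case split on whether $\sigma$ vanishes at the current pre-activation (adjust $v$ if not; exploit the non-extremum property if so), and establish necessity by planting the pre-activation at a zero-value extremum of $\sigma$ with $v$ chosen to have sign opposite to $y$. Your version is marginally cleaner in two respects: you reach any desired pre-activation via the bias rather than via $w=t/x$ (so you never implicitly need $x\neq 0$, which the paper's argument uses without stating), and you dispose of the degenerate case $\sigma\equiv 0$ explicitly in the necessity direction. One small point worth making explicit in your write-up: in the ``otherwise'' branch of necessity you should note that $\sigma\not\equiv 0$ gives realizability (global value $0$), so that the constructed local minimum with value $y^2>0$ is indeed sub-optimal.
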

	
	Next, for networks with 2 data samples and two hidden neurons, we also establish a theorem that guarantees the non-existence of sub-optimal local minima.
	
	\begin{assumption}
		\label{ass::activation3}
		The activation function $\sigma$ is analytic and satisfies the following conditions:
		\begin{enumerate}[label=(\alph*)]
			\item $\sigma(0) \not = 0$; 
			\item $\sigma'(t) \not = 0$,\quad\!\! $\forall t \in \mathbb{R}$.
		\end{enumerate}
	\end{assumption}
	Assumption \ref{ass::activation3} holds for a wide class of strictly increasing/decreasing analytic functions, e.g., exponential functions, but these functions are not dense in the space of continuous functions.
	\begin{theorem}
		\label{thm::bad_localmin_2neuron}
		Consider a $1$-hidden-layer neural network with $d_1 = N = 2$ and data samples $x^{(1)}, x^{(2)}, y^{(1)}, y^{(2)}\in\mathbb{R}$. Suppose that $x^{(1)}\neq x^{(2)}$ and that Assumption \ref{ass::activation3} holds. Then the network is realizable, and the empirical loss $E(\Theta)$ has no sub-optimal local minima. 
	\end{theorem}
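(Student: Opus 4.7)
The plan is to reduce the absence of sub-optimal local minima to a first-order critical-point analysis plus one explicit escape-direction construction in a single degenerate case. The network computes
\[
\hat y^{(n)} = v_1 \sigma(z_1^{(n)}) + v_2 \sigma(z_2^{(n)}), \qquad z_i^{(n)} = w_i x^{(n)} + b_i, \qquad n \in \{1,2\},
\]
with six scalar parameters $\Theta = (v_1, v_2, w_1, w_2, b_1, b_2)$. Realizability is immediate: pick $(w_1, b_1) = (0,0)$ and $(w_2, b_2) = (1,0)$, so the hidden-layer feature matrix has columns $\sigma(0)\,[1,1]^\top$ and $[\sigma(x^{(1)}), \sigma(x^{(2)})]^\top$; these are independent because $\sigma(0) \neq 0$ and strict monotonicity of $\sigma$ (from $\sigma' \neq 0$) together with $x^{(1)} \neq x^{(2)}$ yields $\sigma(x^{(1)}) \neq \sigma(x^{(2)})$. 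A $2 \times 2$ linear solve then fits any target $(y^{(1)}, y^{(2)})$.

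For the main claim, let $\Theta^*$ be any critical point, denote residuals $r^{(n)} = \hat y^{(n)} - y^{(n)}$, and split into cases according to how many of $v_1^*, v_2^*$ vanish. The stationarity conditions read
\[
\sum_n r^{(n)} \sigma(z_i^{(n)}) = 0, \qquad v_i \sum_n r^{(n)} \sigma'(z_i^{(n)}) = 0, \qquad v_i \sum_n r^{(n)} \sigma'(z_i^{(n)}) x^{(n)} = 0,
\]
for $i = 1, 2$. In the generic case where at least one $v_i^*$ is nonzero (say $v_1^* \neq 0$), the last two equations for $i=1$ form a $2 \times 2$ linear system in $(r^{(1)}, r^{(2)})$ whose matrix has determinant $\sigma'(z_1^{*(1)}) \sigma'(z_1^{*(2)}) (x^{(2)} - x^{(1)})$, nonzero by Assumption \ref{ass::activation3}(b) and $x^{(1)} \neq x^{(2)}$. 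Hence $r^{(1)} = r^{(2)} = 0$ and $E(\Theta^*) = 0$, so such critical points are global minima.

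The only remaining case is $v_1^* = v_2^* = 0$, which is the main obstacle. Here the $w_i$- and $b_i$-equations are vacuous, $E(\Theta^*) = (y^{(1)})^2 + (y^{(2)})^2$, and criticality reduces to $\sum_n y^{(n)} \sigma(z_i^{*(n)}) = 0$ for $i = 1,2$. I would show this cannot be a local minimum unless $y^{(1)} = y^{(2)} = 0$ by coupling three coordinates: perturb $v_1 = \epsilon$, $w_1 = w_1^* + \delta_w$, $b_1 = b_1^* + \delta_b$, keeping the rest fixed. A direct computation gives
\[
E(\epsilon, \delta) = (y^{(1)})^2 + (y^{(2)})^2 - 2 \epsilon\, a(\delta) + \epsilon^2\, b(\delta),
\]
with $a(\delta) = \sum_n y^{(n)} \sigma((w_1^* + \delta_w)x^{(n)} + b_1^* + \delta_b)$ and $b(\delta) > 0$ for small $\delta$ (strict monotonicity forbids $\sigma$ from vanishing at two distinct points, so $b(\delta) > 0$ whenever either $w_1^* \neq 0$ or $\delta_w \neq 0$; the corner $w_1^* = 0$ with $\sigma(b_1^*) = 0$ is handled by taking $\delta_b \neq 0$). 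Analyticity gives $\nabla a(0) = \bigl(\sum_n y^{(n)}\sigma'(z_1^{*(n)})x^{(n)},\ \sum_n y^{(n)}\sigma'(z_1^{*(n)})\bigr)$, and the same nonzero-determinant argument as above forces $\nabla a(0) = 0$ to imply $y^{(1)} = y^{(2)} = 0$. Hence for some arbitrarily small $\delta$ we have $a(\delta) \neq 0$; optimizing $\epsilon = a(\delta)/b(\delta)$ then yields improvement $a(\delta)^2/b(\delta) > 0$, contradicting local-minimality.

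The principal obstacle is precisely this $v_1^* = v_2^* = 0$ case: each individual coordinate perturbation through $\Theta^*$ has nonnegative second derivative (the Hessian is block-triangular with zero off-diagonal blocks between the $v$- and $(w,b)$-layers there), so a descent direction only appears once one simultaneously activates both layers. After spotting the coupling, the rest of the proof is a clean $2 \times 2$ linear-algebra argument built around the invertibility of the matrix with rows $(\sigma'(z_i^{*(1)}), \sigma'(z_i^{*(2)}))$ and $(\sigma'(z_i^{*(1)}) x^{(1)}, \sigma'(z_i^{*(2)}) x^{(2)})$; this is the sole place where Assumption \ref{ass::activation3}(b) and $x^{(1)} \neq x^{(2)}$ are used substantively, while $\sigma(0) \neq 0$ enters only through realizability.
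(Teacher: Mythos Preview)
Your proof is correct and takes a somewhat different, more self-contained route than the paper's. Both rest on the same $2\times 2$ invertibility: the matrix with rows $\bigl(\sigma'(z_1^{(1)}),\,\sigma'(z_1^{(2)})\bigr)$ and $\bigl(\sigma'(z_1^{(1)})x^{(1)},\,\sigma'(z_1^{(2)})x^{(2)}\bigr)$ has nonzero determinant. The paper does not pass to critical points; it shows directly that every non-global minimum admits a first-order descent direction (perturb $v_1$ if that partial derivative is nonzero, otherwise $w_1$ or $b_1$, using the invertibility to guarantee one of those works), and when $\mathbf{v}=\mathbf{0}$ it invokes an external result on over-parameterized networks rather than building an escape direction by hand. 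By restricting to critical points you dispatch the $v_1^*\neq0$ case in one line (stationarity forces $r=0$) and replace the external citation with your explicit coupled $(v_1,w_1,b_1)$-perturbation. The paper's argument works verbatim for any convex differentiable loss; yours is tied to the quadratic loss but is fully self-contained.

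Two minor cleanups in the $v^*=0$ case. Your aside that the Hessian there is ``block-triangular with zero off-diagonal blocks'' is not right: $\partial^2 E/\partial v_1\partial w_1\big|_{\Theta^*}=-2\sum_n y^{(n)}\sigma'(z_1^{*(n)})x^{(n)}=-2\,\partial_{\delta_w}a(0)$ is precisely the nonzero cross-term that makes $\Theta^*$ a strict saddle and powers your escape direction. And in the corner $b(0)=0$ (i.e.\ $w_1^*=0$ with $\sigma(b_1^*)=0$), one has $a(\delta)$ of order $|\delta|$ but $b(\delta)$ of order $|\delta|^2$, so your choice $\epsilon=a(\delta)/b(\delta)$ does not stay small as $\delta\to0$; to conclude that $\Theta^*$ is not a local minimum you should instead take $\epsilon$ small of the correct sign (any $0<|\epsilon|<2|a(\delta)|/b(\delta)$ strictly decreases $E$, and here that upper bound diverges). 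Neither point affects the substance of your argument.
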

	
	Theorem \ref{thm::bad_localmin_1neuron} and \ref{thm::bad_localmin_2neuron} only consider the case of $d_1 = N \leq 2$. However, the conclusions of no bad local minima directly carry over to the case with $d_1 > N  $ and $N \leq 2$. The reason is that if we have $d_1 > N = 2$ or $d_1 > N = 1 $, by Theorem \ref{thm::bad_localmin_1neuron} or \ref{thm::bad_localmin_2neuron}, any sub-network with exactly $N$ hidden-layer neurons is realizable and has no bad local minima. Then, from any sub-optimal point of the original network, we can find a strictly decreasing path to the global minimum by only optimizing the weights of a sub-network. Therefore, the original network also admits no bad local minima.
	
	Our results on small dataset are somewhat counter-intuitive. In general, to determine whether sub-optimal local-min exists is a challenging task for networks of practical sizes. A natural idea is to begin with a simplest toy model, say, networks with one or two data samples, and attempt to extend the result to the general case. 
	
	Now, we see that if the training set has only one or two data samples, and the activation meets some special requirements, over-parameterized networks have no bad local minima. This result is quite positive, echoing with other positive results on over-parameterized neural networks \cite{yu1995local, venturi2018spurious,nguyen2018loss2,nguyen2019connected}. Then, a direct conjecture is that, as the size of training set grows, the network still contains no bad local minima if appropriate conditions are posted on the activation function. However, our main result shows that this is not true. In fact, once the size of dataset exceeds two, bad local minima exist for almost all networks regardless of the width and depth. It turns out that the results on toy models do not reveal the true landscape property of general models, and even convey misleading information.
	
	\fi 

\section{Proof Overview}
	
\subsection{Proof Overview of Theorem \ref{thm::deep-bad_localmin}}
To prove a point is a local minimum, a natural method is to verify the first-order and second-order conditions, i.e., to show that the gradient is zero and the Hessian is positive definite.
However, computing the Hessian of a multi-layer neural network is extremely difficult due to the huge number of network parameters and the parameter coupling between layers.

In this paper, we verify the local minimum directly by its definition, i.e., to show that there exists a neighborhood such that any perturbation within this neighborhood can not decrease the loss value. 
Specifically, let $Y$, $\hat{Y}$ and $\hat{Y}'$ be the ground-truth output (part of the training data), the network output at a certain weight configuration, and the network output at any perturbed weight configuration in a sufficiently small neighbourhood, respectively. To show that the considered weight configuration is indeed a local minimum, it suffices to show that
\begin{equation}
\label{eq::overview1}
    0\leq\|Y-\hat{Y}'\|_F^2-\|Y-\hat{Y}\|_F^2=\|\hat{Y}'-\hat{Y}\|_F^2+2\langle \hat{Y}-Y, \hat{Y}'-\hat{Y}\rangle_F.
\end{equation}
Moreover, it suffices to show that \begin{equation}
    \label{eq::half_space}
    \langle \hat{Y}-Y, \hat{Y}'-\hat{Y}\rangle_F\geq0.
\end{equation}
This inequality means that for any weight perturbation, the resulting output $\hat{Y}'$ can only lie in a half-space. In that sense, to show the existence of sub-optimal local minimum, a key difficulty is to control all the potential ``perturbation directions", i.e., to find a weight configuration such that any perturbed output is restricted in a desired half-space. In the proof of Theorem \ref{thm::deep-bad_localmin}, we perform two important tricks, detailed as follows.

First, we carefully design the network weights such that the perturbation directions are the same for each hidden layer and only depend on the input data. Then, for each hidden layer we characterize a quadratic approximation of the potential perturbed direction, obtained simply by Taylor expansion to the second order. Specifically, for the output of each hidden layer, the perturbation direction can be represented by the span of the ``zero-order direction" $\{\mathbf{1}_N\}$, the ``first-order directions"
\begin{equation}
    \mathcal{X}_{1} \triangleq \left\{X_{(1,:)}, X_{(2,:)}, \cdots, X_{(d_0,:)}\right\},
\end{equation}
the ``second-order directions"
\begin{equation}
    \mathcal{X}_{2} \triangleq \left\{ X_{(i,:)}\circ X_{(j,:)} |  1 \leq i,j\leq d_0 \right\},
\end{equation}
and some infinitesimal directions which can be ignored if the considered neighbourhood is sufficiently small. Moreover, for some of the second-order directions, the possible coefficients are restricted to be all-positive or all-negative. In this way, we avoid the potential perturbation directions from expanding across different layers, so that our setting allows an arbitrary depth.

Second, we pick appropriate output data $Y$, such that each row of $\hat{Y}-Y$ is orthogonal to the zero-order, the first-order, and a portion of the second-order perturbation directions. For the remaining second-order directions with restricted coefficients, we require them to lie in the same half space characterizd by \eqref{eq::half_space}. It turns out that finding such $Y$ is equivalent to solving an equation-inequality system (specified by \eqref{eq::proof_thm1_Y_construct}). As such, proving the existence of the desired $Y$ boils down to analyzing the solvability of a linear system. 

So far, we have constructed a local minimum, but not necessarily a sub-optimal
one. At the final step, we construct a new weight configuration with a lower objective
value, implying that the constructed local minimum is indeed sub-optimal.

Remark: The final step of proving sub-optimality can be done in a different way. Suppose that the network is wide enough, then the network is realizable, meaning that for generic training data, the global minimum achieves zero empirical loss. Since the constructed local-min has a non-zero empirical loss, it is sub-optimal. In the first version of this paper, we use this idea to establish our theorem, which requires the network to be wide enough. But in this version, we directly construct a point with smaller loss at the final step. This weaken the condition on the network width. Now we only need two neurons in the last hidden layer.

\subsection{Proof Overview of Theorem \ref{thm::bad_localmin_sigmoid}}
To prove Theorem \ref{thm::bad_localmin_sigmoid}, we must show that the considered network has a local minimum for any output data $Y$ in a positive-measure subset of $\mathbb{R}^{d_y\times N}$. However, this is challenging even for the 1-hidden-layer network. The reason is that the local minimum we constructed by the technique of Theorem \ref{thm::deep-bad_localmin} is non-strict, and non-strict local minimum may disappear if we slightly perturb the input or output data (which leads to a small modification of the
optimization landscape). 

To tackle this problem, we map the considered model to a single-neuron network, which admits a strict local minimum. Specifically, our proof consists of four stages. First, we show that the considered network has a sub-optimal local minimum for generic $X$ and a specific $Y$. This stage inherits the idea of Theorem \ref{thm::deep-bad_localmin}'s proof. Second, we show that the constructed local minimum in the first stage corresponds to a strict local minimum of a 1-neuron sigmoid network. This is done by ``merging" all the hidden neurons in the wide network into a single neuron. Third, we consider an arbitrary perturbation on the training data in a sufficiently small region. Since a strict local minimum persists after a sufficiently small perturbation on the landscape, the ``perturbed" 1-neuron network also has a local minimum that is close to the original one. Finally, we prove the existence of sub-optimal local minimum for the wide network with the perturbed data. This local minimum, correspondingly, is produced by ``splitting" the hidden neuron of the perturbed 1-neuron network. In this way, we prove the existence of local minimum for a positive measure of $Y$.

\section{Proof of Theorem \ref{thm::deep-bad_localmin}}\label{sec::proof_main}
	\subsection{Preliminaries}
	For convenience, we first introduce the following notations. For $1 \leq h_1 \leq h_2 \leq H$, let 
	\begin{equation}
	\Theta_{[h_1:h_2]} = (W_{h_1}, \mathbf{b}_{h_1},W_{h_1+1}, \mathbf{b}_{h_1+1},\cdots,W_{h_2}, \mathbf{b}_{h_2} )
	\end{equation}
	be the weights from the $h_1$-th layer to the $h_2$-th layer and
	\begin{equation}
	\Theta_{[h_1:(H+1)]} = (W_{h_1}, \mathbf{b}_{h_1},W_{h_1+1}, \mathbf{b}_{h_1+1},\cdots,W_{H}, \mathbf{b}_{H}, W_{H+1} )
	\end{equation}
	be the weights from the $h_1$-th layer to the $(H+1)$-th layer. Then for the $i$-th neuron in the $h$-th hidden layer, the input and output are functions of $\Theta_{[1:h]}$, written as $Z_{h}\left(\Theta_{[1:h]}\right)$ and $T_{h}\left(\Theta_{[1:h]}\right)$, respectively.
	
	Before we start the construction, we discuss some requirements that the desired local-minimum should satisfy. For the first hidden layer, we set
	\begin{subequations}
		\label{eq::proof_thm1_localmin_construct}
		\begin{gather}
		\label{eq::proof_thm1_localmin_construct1}
	 W_1 = \mathbf{0}\\
	 	\label{eq::proof_thm1_localmin_construct2}
	\mathbf{b}_1 = a \cdot \mathbf{1}_{d_1}.
	\end{gather}
	For the other hidden layers, we require the weights to be positive, i.e.,
	\begin{equation}
	 \label{eq::proof_thm1_localmin_construct3}
	 w_{h,i,j} > 0 
	 \end{equation}
	 and the biases to satisfy
	 \begin{gather}
	\label{eq::proof_thm1_localmin_construct4}
	 b_{h,i} = a - \sigma(a)\sum^{d_{h-1}}_{j = 1} w_{h,i,j}
	\end{gather}
	for $2 \leq h \leq H$, $ 1\leq i \leq d_h$, $1 \leq j \leq d_{h-1}$.
	For the output layer, we set
	\begin{equation}
	\label{eq::proof_thm1_localmin_construct5}
	    W_{H+1} = \frac{1}{d_{H}} \cdot \mathbf{1}_{d_{h+1}\times d_{h}}. \footnote{In fact, if replace the constructed $W_{H+1}$ by any matrix with all positive entires, our proof is still valid.}
	\end{equation}
	\end{subequations}
    Noting that the weights to the first hidden layer are zero, with biases in \eqref{eq::proof_thm1_localmin_construct2} and \eqref{eq::proof_thm1_localmin_construct4}, the input to every hidden-layer neuron equals to $a$, regardless of the input data sample, i.e.,
	\begin{equation}
		z_{h,i,n} = a, \quad t_{h,i,n} = \sigma(a), \quad h = 1,2,\cdots, H, \quad \forall i,n.
	\end{equation}
	With \eqref{eq::proof_thm1_localmin_construct5}, the network output is given by
	\begin{equation}
	T_{H+1} = \frac{1}{d_{H}}\cdot \mathbf{1}_{d_{H+1}\times d_H} \cdot  \sigma\left(a\cdot\mathbf{1}_{d_H \times N}\right) = \sigma(a)\cdot \mathbf{1}_{d_{H+1}\times N}.
	\end{equation}

	Subsequently, we construct the output data $Y$. Given the weights satisfying \eqref{eq::proof_thm1_localmin_construct}, denote
	\begin{equation}
	    \Delta Y \triangleq T_{H+1}(\Theta) - Y = \sigma(a)\cdot \mathbf{1}_{d_{H+1} \times N}  - Y.
	\end{equation}
	We would like each row of $\Delta Y$ to satisfy
	\begin{subequations}
		\label{eq::construct_Y}
		\begin{gather}
		\label{eq::inner1}
		\left\langle \Delta Y_{(i,:)}, \mathbf{1}\right\rangle = 0 \\ 
		\label{eq::innerX}
		\left\langle \Delta Y_{(i,:)}, X_{(j,:)}\right\rangle = 0 \\
		\label{eq::innerXXdiff}
		\left\langle \Delta Y_{(i,:)}, X_{(j,:)} \circ X_{(j',:)}\right\rangle = 0 \\
		\label{eq::innerXXsame}
		[\sigma'(a)]^{H-1}\sigma''(a)\left\langle \Delta Y_{(i,:)}, X_{(j,:)} \circ X_{(j,:)}\right\rangle  > 0
		\end{gather}
	\end{subequations}
	For all $i = 1,2,\cdots, d_{H+1}$ and $j,j' = 1,2,\cdots, d_0$ with $j \not= j'$. To show the existence of such $Y$, we present the following lemma.
	\begin{lemma}
	\label{lemma::dim_requirement}
		Consider arbitrary two integers $L_1$ and $L_2$ with $L_1 \leq L_2$. Suppose that we have a set of linearly independent vectors $\{\mathbf{a}_1, \mathbf{a}_2, \cdots, \mathbf{a}_{L_1} \} \subseteq \mathbb{R}^{L_2}$. Then for any $1 \leq d < L_1$ we can find another set of linearly independent vectors $\{\mathbf{u}_1, \mathbf{u}_2, \cdots, \mathbf{u}_{L_2 - d}\} \subseteq \mathbb{R}^{L_2}$, such that
		\begin{subequations}
			\label{eq::lemma4_origin}
			\begin{align}
			\langle\mathbf{u}_l, \mathbf{a}_j \rangle&= 0, \quad \!\! 1\leq j \leq d\\
			\langle \mathbf{u}_l, \mathbf{a}_j \rangle &> 0, \quad \!\! d< j \leq L_1
			\end{align}		
		\end{subequations}
		for any $1\leq l \leq L_2-d$.
	\end{lemma}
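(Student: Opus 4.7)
The plan is to exploit the subspace $V \subseteq \mathbb{R}^{L_2}$ defined by the equality constraints and then use the remaining degrees of freedom to satisfy the strict inequalities while maintaining linear independence. Concretely, let
\[
V \;\triangleq\; \{\mathbf{u}\in\mathbb{R}^{L_2} : \langle \mathbf{u},\mathbf{a}_j\rangle=0,\ j=1,\dots,d\}.
\]
Since $\mathbf{a}_1,\dots,\mathbf{a}_d$ are linearly independent, $V$ has dimension exactly $L_2-d$, which is the number of vectors we must produce.

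Next I would reduce the inequality constraints to a problem inside $V$. For each $j$ with $d<j\leq L_1$, let $\tilde{\mathbf{a}}_j$ denote the orthogonal projection of $\mathbf{a}_j$ onto $V$. For any $\mathbf{u}\in V$, $\langle \mathbf{u},\mathbf{a}_j\rangle=\langle \mathbf{u},\tilde{\mathbf{a}}_j\rangle$. I claim $\tilde{\mathbf{a}}_{d+1},\dots,\tilde{\mathbf{a}}_{L_1}$ are linearly independent in $V$: any nontrivial linear combination of them equals $\sum_{j>d}c_j\mathbf{a}_j$ minus an element of $\operatorname{span}\{\mathbf{a}_1,\dots,\mathbf{a}_d\}$, which would contradict the linear independence of $\mathbf{a}_1,\dots,\mathbf{a}_{L_1}$. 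Consequently the linear map
\[
\phi:V\to\mathbb{R}^{L_1-d},\qquad \mathbf{u}\mapsto(\langle\mathbf{u},\tilde{\mathbf{a}}_{d+1}\rangle,\dots,\langle\mathbf{u},\tilde{\mathbf{a}}_{L_1}\rangle)
\]
has rank $L_1-d$. Because $L_1-d\leq L_2-d=\dim V$, $\phi$ is surjective, so I can pick $\mathbf{u}_0\in V$ with $\phi(\mathbf{u}_0)=(1,1,\dots,1)$; this $\mathbf{u}_0$ satisfies all equalities and all strict inequalities.

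To produce $L_2-d$ linearly independent solutions, fix any basis $\mathbf{v}_1,\dots,\mathbf{v}_{L_2-d}$ of $V$ and set
\[
\mathbf{u}_l \;=\; \mathbf{u}_0 \;+\; \epsilon\,\mathbf{v}_l,\qquad l=1,\dots,L_2-d,
\]
for a sufficiently small $\epsilon>0$. Each $\mathbf{u}_l$ lies in $V$, so the equalities are preserved. For the inequalities, $\langle \mathbf{u}_l,\mathbf{a}_j\rangle = 1+\epsilon\langle \mathbf{v}_l,\mathbf{a}_j\rangle$, which is strictly positive once $\epsilon$ is smaller than $1/\max_{l,j}|\langle \mathbf{v}_l,\mathbf{a}_j\rangle|$. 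Finally, writing $\mathbf{u}_0=\sum_k\alpha_k\mathbf{v}_k$ in the chosen basis, the coefficient matrix of $\{\mathbf{u}_l\}$ in this basis is $\mathbf{1}\alpha^\top+\epsilon I$, a rank-one update of $\epsilon I$ whose determinant is $\epsilon^{L_2-d-1}(\epsilon+\sum_k\alpha_k)$; this is nonzero for all sufficiently small $\epsilon>0$ except at most one value, so the $\mathbf{u}_l$ are linearly independent.

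The only genuinely nontrivial step is the independence of the projections $\tilde{\mathbf{a}}_{d+1},\dots,\tilde{\mathbf{a}}_{L_1}$, which hinges on the hypothesis that $\mathbf{a}_1,\dots,\mathbf{a}_{L_1}$ are linearly independent together with $d<L_1$. Everything else is bookkeeping: a dimension count to get $\dim V=L_2-d$, surjectivity of a linear map, and a small-perturbation argument to upgrade a single vector $\mathbf{u}_0$ into a basis-sized collection while preserving the open inequality conditions.
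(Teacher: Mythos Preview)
Your proof is correct and takes a genuinely different route from the paper's. The paper extends $\{\mathbf{a}_1,\dots,\mathbf{a}_{L_1}\}$ to a basis $\{\mathbf{a}_1,\dots,\mathbf{a}_{L_2}\}$ of $\mathbb{R}^{L_2}$, reads off the dual basis $\{\mathbf{a}'_1,\dots,\mathbf{a}'_{L_2}\}$ from the rows of the inverse matrix (so that $\langle \mathbf{a}'_l,\mathbf{a}_j\rangle=\delta_{l,j}$), and then sets $\mathbf{u}_l=\sum_{i=1}^{L_2-d} q_{i,l}\,\mathbf{a}'_{d+i}$ where $Q=(q_{i,l})$ is any full-rank matrix with all-positive entries. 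Orthogonality to $\mathbf{a}_1,\dots,\mathbf{a}_d$ and positivity of $\langle \mathbf{u}_l,\mathbf{a}_j\rangle=q_{j-d,l}$ for $d<j\le L_1$ are then immediate from biorthogonality, and linear independence follows from $Q$ being full rank. Your approach instead works intrinsically in the constraint subspace $V$, obtains one feasible vector $\mathbf{u}_0$ via surjectivity of the evaluation map, and perturbs a basis of $V$ around $\mathbf{u}_0$ using a rank-one-plus-identity determinant calculation. The paper's argument is shorter and fully explicit; your perturbation argument is a bit more bookkeeping but is more robust in that it would adapt without change to any open condition in place of strict positivity.
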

	From Assumption \ref{ass::inputdata}, the vectors in $\mathcal{X}$ are linearly independent. Define a subset of $\mathcal{X}$ as
	\begin{equation}
	\mathcal{X}_0 = \left\{X_{(j,:)} \circ X_{(j,:)} | j = 1, 2, \cdots, d_0\right\} \subseteq \mathcal{X}
	\end{equation}
	consisting of $d_0$ vectors in $\mathcal{X}$. For simplicity, we denote
	\begin{equation}
	    L = |\mathcal{X}| - |\mathcal{X}_0| = N - d_0^2/2 - d_0/2-1.
	\end{equation}
	From Assumption \ref{ass::inputdata}\ref{ass::inputdata_a}, we have $L > d_0 -1 \geq 0$. Then, by Lemma \ref{lemma::dim_requirement}, we can find $L$ linearly independent $N$-dimensional vectors $\{\mathbf{u}_1, \mathbf{u}_2, \cdots, \mathbf{u}_{L}\}$, such that each $\mathbf{u}_l$ has positive inner product with vectors in $\mathcal{X}_0$, and is orthogonal with all the vectors in $\mathcal{X}\backslash \mathcal{X}_0$. That is,
	\begin{subequations}
		\label{eq::construct_u}
		\begin{gather}
		\left\langle \mathbf{u}_l, \mathbf{1} \right\rangle= 0 \\
		\left\langle \mathbf{u}_l,  X_{(j,:)} \right\rangle= 0\\
		\left\langle \mathbf{u}_l, X_{(j,:)}\circ X_{(j',:)} \right\rangle= 0\\
		\left\langle \mathbf{u}_l, X_{(j,:)} \circ X_{(j,:)} \right\rangle> 0
		\end{gather}
	\end{subequations}
	for all $1 \leq l \leq L$ and $1 \leq j,j' \leq N$ with $j \not = j'$. We construct $Y$ as
	\begin{equation}
	\label{eq::proof_thm1_Y_construct}
	Y_{(i,:)} = \sigma(a)\cdot \mathbf{1}_{N} - \sum^{L}_{l = 1} \alpha_{i,l} \mathbf{u}_{l}
	\end{equation}
	for any $1\leq i \leq d_{H+1}$, where each $\alpha_{i,l}$ has the same sign with $[\sigma'(a)]^{H-1}\sigma''(a)$. Then
	\begin{equation}
	\Delta Y_{(i,:)} = \sigma(a)\cdot \mathbf{1}_{N} - Y_{(i,:)} = \sum^{L}_{l = 1} \alpha_{i,l} \mathbf{u}_{l}.
	\end{equation}
	It can be readily verified that such $\Delta Y$ satisfies \eqref{eq::construct_Y}.
	
	Given the constructed $Y$, in what follows, we show that there exists $\Theta$ satisfying \eqref{eq::proof_thm1_localmin_construct}, such that $\Theta$ is a sub-optimal local minimum of the empirical loss $E$.
	
	\subsection{Existence of Local Minimum}
	Consider a weight configuration $\Theta$ satisfying \eqref{eq::proof_thm1_localmin_construct} and a small perturbation $\Theta'$ around it. Based on $\Theta$ and $\Theta'$, we construct an ``intermediate" weight configuration as
	\begin{equation}
	\label{eq::tilde_W}
	\tilde{\Theta}' \triangleq \left(W_1, \mathbf{b}'_1, W'_2, \mathbf{b}'_2, \cdots, W'_H, \mathbf{b}'_H, W'_{H+1}\right)
	\end{equation}
	where the weights to the first hidden layer are picked from $\Theta$, while the biases to the first hidden layer and the remaining weights and biases are all from $\Theta'$. Clearly, $\tilde{\Theta}'$ is ``closer" to $\Theta$, i.e.,
	\begin{equation}
		\|\tilde{\Theta}' - \Theta\| \leq \|\Theta' - \Theta\|.
	\end{equation}
	For this intermediate weight configuration $\tilde{\Theta}'$, we have the following lemma.
	\begin{lemma}
		\label{lemma::perturb_direction}
		Consider a fully-connected deep neural network. Suppose that Assumption 1 and 2 hold. Further, the training data $Y$ satisfies \eqref{eq::proof_thm1_Y_construct}. Then, for the $h$-th layer where $h= 1,2,\cdots,H$, there exists $\Theta_{[1:h]}$ satisfying \eqref{eq::proof_thm1_localmin_construct1}-\eqref{eq::proof_thm1_localmin_construct4} and $\delta^{(h)},\gamma^{(h)}>0$, such that for any $\Theta_{[1:h]}' \in B\left(\Theta_{[1:h]}, \delta^{(h)}\right)$, we have
		\begin{multline}
		\label{eq::lemma2_lower_bound}
		[\sigma'(a)]^{(H-h)}\cdot\left\langle \Delta Y_{(i,:)}, (T_h)_{(j,:)}\left(\Theta_{[1:h]}'\right) - (T_h)_{(j,:)}\left(\tilde{\Theta}_{[1:h]}'\right)\right\rangle \\\geq \gamma^{(h)} \left\|(T_h)_{(j,:)}\left(\Theta_{[1:h]}'\right) - (T_h)_{(j,:)}\left(\tilde{\Theta}_{[1:h]}'\right)\right\|^2_2
		\end{multline}
		for any $1 \leq i \leq d_{h+1}$ and $1\leq j \leq d_h$. Further, for any $\delta_0 > 0$, there exists $\Theta_{[1:h]}' \in B\left(\Theta_{[1:h]},\delta_0 \right)$ such that
		\begin{equation}\label{eq::lemma2_nonzero}
		 [\sigma'(a)]^{(H-h)} \cdot  \left\langle \Delta Y_{(i,:)}, (T_h)_{(j,:)}\left(\Theta_{[1:h]}'\right)\right\rangle > 0
		\end{equation}
		for any  $1 \leq i \leq d_{h+1}$ and $1\leq j \leq d_h$.
	\end{lemma}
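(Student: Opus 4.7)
The plan is to prove the lemma by induction on $h$. The guiding observation is that $\tilde{\Theta}'_{[1:h]}$ coincides with $\Theta'_{[1:h]}$ everywhere except in the first-layer weights ($W_1 = \mathbf{0}$ versus $W'_1$), so every pre-activation under $\tilde{\Theta}'$ is constant in $n$ and $t_{h',k,\cdot}(\tilde{\Theta}'_{[1:h']}) \propto \mathbf{1}_N$ at every layer $h' \leq h$. Hence $\Delta t_{h-1,k,\cdot} := t_{h-1,k,\cdot}(\Theta'_{[1:h-1]}) - t_{h-1,k,\cdot}(\tilde{\Theta}'_{[1:h-1]})$ carries the entire $n$-dependence propagated from $W'_1$ through the first $h-1$ layers, and is the natural quantity to recurse on.

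For the base case $h=1$, I would Taylor-expand $\sigma(z_{1,j,n}(\Theta'))$ to second order around $z = b'_{1,j}$ using $z_{1,j,n}(\Theta') - b'_{1,j} = \sum_k w'_{1,j,k} X_{k,n}$, and pair the expansion with $\Delta Y_{(i,:)}$. The zero-order contribution vanishes by construction, the first-order term vanishes by \eqref{eq::innerX}, and the off-diagonal second-order terms vanish by \eqref{eq::innerXXdiff}. What survives is $\tfrac{1}{2}\sigma''(b'_{1,j})\sum_k (w'_{1,j,k})^2 \langle \Delta Y_{(i,:)}, X_{(k,:)}\circ X_{(k,:)}\rangle$, which, after multiplying by $[\sigma'(a)]^{H-1}$ and using \eqref{eq::innerXXsame}, is strictly positive and of magnitude at least a positive constant times $\|W'_{1,(j,:)}\|^2$. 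Combining with the first-order upper bound $\|\Delta t_{1,j,\cdot}\|^2 \leq C_0 \|W'_{1,(j,:)}\|^2$ and taking $\delta^{(1)}$ small enough to absorb the cubic remainder yields \eqref{eq::lemma2_lower_bound} at $h=1$.

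For the inductive step $h\geq 2$, I would write $t_{h,j,n}(\Theta') - t_{h,j,n}(\tilde{\Theta}') = \sigma(u_{h,j} + \Psi_{h,j,n}) - \sigma(u_{h,j})$, with $u_{h,j}$ the ($n$-independent) pre-activation under $\tilde{\Theta}'$ and $\Psi_{h,j,n} := \sum_k w'_{h,j,k} \Delta t_{h-1,k,n}$. A first-order Taylor expansion yields the main term $\sigma'(u_{h,j})\Psi_{h,j,n}$ plus a remainder of order $\Psi_{h,j,n}^2$. After pairing with $\Delta Y_{(i,:)}$ and multiplying by $[\sigma'(a)]^{H-h}$, the sign of $[\sigma'(a)]^{H-h}\sigma'(u_{h,j})$ matches that of $[\sigma'(a)]^{H-h+1}$ for small perturbation, so invoking the inductive hypothesis term-by-term yields a lower bound of $(1-o(1))\gamma^{(h-1)}\sum_k w'_{h,j,k}\|\Delta t_{h-1,k,\cdot}\|^2$. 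A Cauchy--Schwarz inequality with the positive weights $w'_{h,j,k}$ bounds both the Taylor remainder and $\|\Delta t_{h,j,\cdot}\|^2$ by constant multiples of the same quantity, closing the induction. Claim \eqref{eq::lemma2_nonzero} then follows by choosing $\Theta'_{[1:h]}$ equal to $\Theta_{[1:h]}$ except for an arbitrarily small nonzero perturbation of $W_1$ alone: then $\tilde{\Theta}'_{[1:h]} = \Theta_{[1:h]}$, $t_{h,j,\cdot}(\tilde{\Theta}') = \sigma(a)\mathbf{1}_N$, and $\langle \Delta Y_{(i,:)}, t_{h,j,\cdot}(\tilde{\Theta}')\rangle = 0$ by \eqref{eq::inner1}, so \eqref{eq::lemma2_nonzero} reduces to \eqref{eq::lemma2_lower_bound}, which is strictly positive whenever $\Delta t_{h,j,\cdot}\neq 0$.

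The main obstacle is the bookkeeping in the inductive step: because the main term and the Taylor remainder both scale as $\sum_k \|\Delta t_{h-1,k,\cdot}\|^2$, the remainder constant (involving $\|\Delta Y\|_\infty$, $\|\sigma''\|_\infty$, and the sizes of $w'_{h,j,k}$) must stay strictly smaller than $\gamma^{(h-1)}$. This is arranged by picking the base weights $w_{h,j,k}>0$ in \eqref{eq::proof_thm1_localmin_construct3} small enough and then $\delta^{(h)}$ small enough that throughout $B(\Theta_{[1:h]},\delta^{(h)})$ the signs $\mathrm{sgn}(\sigma'(u_{h,j})) = \mathrm{sgn}(\sigma'(a))$, $\mathrm{sgn}(\sigma''(u_{h,j})) = \mathrm{sgn}(\sigma''(a))$, and $w'_{h,j,k}>0$ are preserved.
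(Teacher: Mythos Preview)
Your proposal is correct and follows essentially the same route as the paper: induction on $h$, a second-order Taylor expansion at layer~1 that exploits \eqref{eq::inner1}--\eqref{eq::innerXXsame} to kill all but the diagonal quadratic terms, and at layer $h\geq 2$ a first-order expansion whose main term is handled by the inductive hypothesis while the quadratic remainder is dominated by choosing the base weights $w_{h,j,k}$ sufficiently small (the paper makes this explicit via a threshold $w_{\max}^{(h)}$). Your treatment of \eqref{eq::lemma2_nonzero}---perturbing only $W_1$ so that $\tilde{\Theta}'_{[1:h]}=\Theta_{[1:h]}$ and $\langle \Delta Y_{(i,:)},\sigma(a)\mathbf{1}_N\rangle=0$---is a clean variant of the paper's inductive argument and works for the same reason.
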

	Roughly speaking, for each hidden layer, there exists $\Theta_{[1:h]}$ satisfying \eqref{eq::proof_thm1_localmin_construct} such that for a sufficiently close perturbation $\Theta'_{[1:h]}$, the ``perturbation directions" of the $h$-th layer's output are restricted (formally characterized by \eqref{eq::lemma2_lower_bound}). Now consider a weight configuration $\Theta$ where $\Theta_{[1:H]}$ is specified by Lemma \ref{lemma::perturb_direction} and $W_{H+1}$ is specified by \eqref{eq::proof_thm1_localmin_construct5}. Obviously, such $\Theta$ satisfies \eqref{eq::proof_thm1_localmin_construct}. In what follows, we show that $\Theta$ is indeed a local minimum.

	Let's inspect the difference of the training loss between $\Theta$ and $\Theta'$, given by
	\begin{align}
	\label{eq::loss_decompose}
	&E(\Theta') - E(\Theta) \nonumber \\
	=& \|T_{H+1}(\Theta') - Y\|^2_F - \|T_{H+1}(\Theta) - Y\|^2_F \nonumber\\
	=& 2\left\langle \Delta Y,T_{H+1}(\Theta') - T_{H+1}(\Theta)\right\rangle_F + \|T_{H+1}(\Theta') - T_{H+1}(\Theta)\|_F^2.
	\end{align}
Therefore $E(\Theta') - E(\Theta) \geq 0$ if
	\begin{equation} 
	\left\langle 
	\Delta Y,T_{H+1}(\Theta') - T_{H+1}(\Theta)\right\rangle_F \geq 0.
	\end{equation}	
We can further decompose $T_{H+1}(\Theta') - T_{H+1}(\Theta)$ as
\begin{align}
	T_{H+1}(\Theta') - T_{H+1}(\Theta) =\left[ T_{H+1}(\tilde{\Theta}') - T_{H+1}(\Theta)\right] +\left[ T_{H+1}(\Theta') - T_{H+1}(\tilde{\Theta}')\right].
	\end{align}
	To prove that $\Theta$ is a local minimum, it suffices to show that for any $\Theta'$ that is sufficiently close to $\Theta$, we have
	\begin{subequations}
		\label{eq::fproduct}
		\begin{gather}
		\label{eq::fproduct_1}
		\left\langle \Delta Y,  T_{H+1}(\tilde{\Theta}') - T_{H+1}(\Theta) \right\rangle_F \geq 0
		\\
		\label{eq::fproduct_2}
		\left\langle \Delta Y, T_{H+1}(\Theta') - T_{H+1}(\tilde{\Theta}') \right\rangle_F \geq 0 .
		\end{gather}
	\end{subequations}

	We first show that, for the constructed $\Theta$ and any $\Theta'$, \eqref{eq::fproduct_1} holds. In fact, since $W_1 = \mathbf{0}$, each network output is invariant to the input data, i.e., each $t_{h,i,n}$ is independent of $n$. As a result, for the output layer we have 
	\begin{subequations}
		\begin{gather}
		(T_{H+1})_{i,1}(\Theta) = (T_{H+1})_{i,2}(\Theta) = \cdots = (T_{H+1})_{i,N}(\Theta) \\ 
		(T_{H+1})_{i,1}(\tilde{\Theta}') = (T_{H+1})_{i,2}(\tilde{\Theta}') = \cdots = (T_{H+1})_{i,N}(\tilde{\Theta}')
		\end{gather}
	\end{subequations}
	for $i = 1, 2, \cdots,d_{H+1}$. Thus, each row of the network output matrix can be written as
	\begin{subequations}
		\begin{gather}
		(T_{H+1})_{(i,:)}(\Theta) = (T_{H+1})_{i,1}(\Theta) \cdot \mathbf{1}_N \\ (T_{H+1})_{(i,:)}(\tilde{\Theta}') = (T_{H+1})_{i,1}(\tilde{\Theta}') \cdot \mathbf{1}_N
		\end{gather}
	\end{subequations}
	and from \eqref{eq::inner1} we have
	\begin{align}
	&\left\langle   \Delta Y , T_{H+1}(\tilde{\Theta}')-T_{H+1}(\Theta)\right\rangle_F \nonumber\\
	=&  \sum^{d_{H+1}}_{i=1}\left\langle \Delta Y_{(i,:)} , (T_{H+1})_{(i,:)}(\tilde{\Theta}')-(T_{H+1})_{(i,:)}(\Theta)\right\rangle\nonumber \\
	=&\sum^{d_H+1}_{i=1}\left[(T_{H+1})_{i,1}(\tilde{\Theta}')-(T_{H+1})_{i,1}(\Theta) \right]  \cdot \left\langle \Delta Y_{(i,:)} ,\mathbf{1}_N \right\rangle  = 0,
	\end{align}
	implying that \eqref{eq::fproduct_1} is satisfied. 
	
	Now define
	\begin{equation}
		\delta^{(H+1)} = \min \left\{\delta^{(H)},\frac{1}{d_{H}}\right\}
	\end{equation} 
	where $\delta^{(H)}$ is specified in Lemma  \ref{lemma::perturb_direction}. Since $w_{H+1,i,j} = 1/d_{H}$, we have $w_{H+1,i,j}'\geq 0 $ for any $\Theta' \in B\left(\Theta,\delta^{(H+1)}\right)$.
	Further, from Lemma \ref{lemma::perturb_direction}, we have
	\begin{subequations}
	\begin{align}
		&\left\langle \Delta Y , T_{H+1}(\Theta') - T_{H+1}(\tilde{\Theta}') \right\rangle_F \nonumber \\
		= &\sum^{d_{H+1}}_{i=1} \left\langle\Delta Y_{(i,:)}, \left(T_{H+1}\right)_{(i,:)}(\Theta') - \left(T_{H+1}\right)_{(i,:)}(\tilde{\Theta}') \right\rangle\\
		= & \sum^{d_{H+1}}_{i=1}\sum^{d_{H}}_{j=1} w'_{H+1,i,j} \left\langle\Delta Y_{(i,:)}, \left(T_H\right)_{(j,:)}(\Theta') - \left(T_H\right)_{(j,:)}(\tilde{\Theta}')\right\rangle \\
		\label{eq::thm1_final_inner_product}
		\geq & \sum^{d_{H+1}}_{i=1}\sum^{d_{H}}_{j=1}\gamma^{(H)} w'_{H+1,i,j}  \left\|(T_h)_{(j,:)}(\Theta') - (T_h)_{(j,:)}(\tilde{\Theta}')\right\|^2_2 \geq 0.
	\end{align}
	\end{subequations}	
	Thus, \eqref{eq::fproduct_2} is also satisfied, and hence $\Theta$ is indeed a local minimum.

	\subsection{Sub-Optimality of Constructed Local-Min}
	Note that \eqref{eq::innerXXsame} implies $\Delta Y \not = \mathbf{0}$, and hence
	\begin{equation}
		E(\Theta) = \left\|\Delta Y\right\|^2_F > 0.
	\end{equation}
	To show that the constructed local minimum $\Theta$ is sub-optimal, in what follows we construct a weight configuration $\Theta^* = \left(W_1^*,\mathbf{b}_1^*, \cdots, W_H^*,\mathbf{b}_H^*,W_{H+1}^*\right)$ with $E(\Theta^*) < E(\Theta)$.
	
	From Lemma \ref{lemma::perturb_direction}, for the constructed local minimum $\Theta$, there exists $\Theta'_{[1:H]} \in B\left(\Theta_{[1:H]},\delta^{(H)}\right)$ such that 
	\begin{equation}
	\label{eq::proof_thm1_subconstruct}
	\left\langle \Delta Y_{(i,:)}, (T_H)_{(j,:)}\left(\Theta_{[1:H]}'\right)\right\rangle > 0
	\end{equation}
	for any  $1 \leq i \leq d_{H+1}$ and $1\leq j \leq d_H$. We first set
	\begin{equation}
		\Theta^*_{[1:(H-1)]} = \Theta'_{[1:(H-1)]}
	\end{equation}
	implying $T_{H-1}\left(\Theta^*_{[1:(H-1)]}\right) = T_{H-1}\left( \Theta'_{[1:(H-1)]}\right)$. Next, we let
	\begin{subequations}
	\begin{gather}
	\left(W^*_{H}\right)_{(1,:)} = \left(W'_{H}\right)_{(1,:)}, 
	\quad \left(\mathbf{b}^*_H\right)_{1} = \left(\mathbf{b}'_H\right)_{1}\\  \left(W^*_{H}\right)_{(j,:)} = \mathbf{0}_{d_{H-1}}, \quad \left(\mathbf{b}^*_H\right)_j = a, \quad j =2, 3, \cdots, d_H
	\end{gather}
	\end{subequations}
	yielding
	\begin{subequations}
		\begin{gather} 
		\left(T_{H}\right)_{(1,:)}\left(\Theta^*_{[1:H]}\right) = \left(T_{H}\right)_{(1,:)}\left(\Theta'_{[1:H]}\right) \\
		\left(T_H\right)_{(j,:)}\left(\Theta^*_{[1:H]}\right) = \sigma(a)\cdot \mathbf{1}_N,\quad j = 2,3,\cdots, d_H.
		\end{gather}
	\end{subequations} 
	Finally, we set 
	\begin{equation}
		(W_{H+1})_{(i,:)} = \left[v, \frac{1}{d_H-1}, \frac{1}{d_H-1}, \cdots, \frac{1}{d_H-1}\right]^\top \in \mathbb{R}^{d_H}
	\end{equation}
	for every $i=1,2,\cdots, d_{H+1}$, and then represent the empirical loss in terms of $v$, yielding
	\begin{subequations}
		\begin{align}
			E\left(\Theta^*\right) &= \left\|Y - T_{H+1}\left(\Theta^*\right)\right\|^2_F \\
			& = \sum^{d_{H+1}}_{i=1} \left\|Y_{(i,:)} - \sigma(a) \cdot \mathbf{1}_{N} -  v\cdot \left(T_{H}\right)_{(1,:)}\left(\Theta'_{[1:H]}\right)  \right\|_2^2 \\ 
			& = d_{H+1} v^2 \left\|\mathbf{m}\right\|^2_2 + \sum^{d_{H+1}}_{i=1}\left\|\Delta  Y_{(i,:)}\right\|_2^2 - 2 \sum^{d_{H+1}}_{i=1}\left\langle \Delta Y_{(i,:)}, v\cdot \mathbf{m} \right\rangle \\
			\label{eq::proof_thm1_suboptimal}
			& = d_{H+1}v^2 \left\|\mathbf{m}\right\|^2_2 + E(\Theta) - 2v \sum^{d_{H+1}}_{i=1}\left\langle \Delta Y_{(i,:)}, \mathbf{m} \right\rangle
		\end{align}
	\end{subequations}
	where we define $\mathbf{m} \triangleq \left(T_{H}\right)_{(1,:)}\left(\Theta'_{[1:H]}\right)$ to ease notation. From \eqref{eq::proof_thm1_subconstruct} we have $\sum^{d_{H+1}}_{i=1}\left\langle \Delta Y_{(i,:)}, \mathbf{m}\right\rangle > 0 $. From the form of \eqref{eq::proof_thm1_suboptimal}, there exists $v < 0$ with sufficiently small $|v|$ such that
	\begin{equation}
	E\left(\Theta^*\right) <E(\Theta)
	\end{equation} 
	and hence $\Theta$ is sub-optimal. We complete the proof.

	\section{Conclusion}\label{sec::conclusion}
	
	In this paper, we studied the loss landscape of  of nonlinear neural networks. Specifically, we construct bad local minima for networks with almost all analytic activations, any depth, and any width. 
	This result solves a long-standing question of ``whether sub-optimal local minima exist in general over-parameterized neural networks", and the answer is somewhat astonishingly negative. Nevertheless, combining with other positive results, we believe that this work reveals the exact landscape of over-parameterized neural networks, which is not as nice as people generally think but much better than general non-convex functions. This work also provides a future research direction of how to avoid such sub-optimal local minima effectively in a general setting during the training process, and calls for a deeper understanding of the empirical efficiency of training neural networks.

	\appendix
	
	\section{Useful Lemma on Real Analytic Functions}
	Before we present the formal proofs, we borrow a useful result from \cite{mityagin2015zero}, stated formally as the following lemma.
	\begin{lemma}
		\label{lemma::analytic}
		For any $d\in \mathbb{Z}^+$, let $f: \mathbb{R}^d\rightarrow \mathbb{R}$ be a real analytic function on $\mathbb{R}^d$. If $f$ is not identically zero, then its zero set $\Omega=\{\mathbf{a} \in \mathbb{R}^d \mid f(\mathbf{a})=0\}$ has zero measure.
	\end{lemma}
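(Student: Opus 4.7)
The plan is to prove Lemma \ref{lemma::analytic} by induction on the dimension $d$, combining the one-dimensional identity theorem for real analytic functions with Fubini's theorem.

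For the base case $d = 1$, I would invoke the classical fact that the zeros of a non-identically-zero real analytic function on $\mathbb{R}$ are isolated. At any zero $x_0$ either all derivatives of $f$ vanish there (in which case $f \equiv 0$ by the identity theorem on the connected domain $\mathbb{R}$, contradicting the hypothesis), or there is a smallest $k \geq 1$ with $f^{(k)}(x_0) \neq 0$, yielding a local factorization $f(x) = (x - x_0)^k g(x)$ with $g(x_0) \neq 0$. Hence $x_0$ is isolated, $\Omega$ is at most countable, and so has Lebesgue measure zero.

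For the inductive step, assume the lemma in dimension $d - 1$, split coordinates as $\mathbf{a} = (\mathbf{a}', t) \in \mathbb{R}^{d-1} \times \mathbb{R}$, and for each fixed $\mathbf{a}'$ consider the real analytic function $h_{\mathbf{a}'}(t) := f(\mathbf{a}', t)$. Fubini's theorem gives
\[
\mathrm{meas}_d(\Omega) = \int_{\mathbb{R}^{d-1}} \mathrm{meas}_1\bigl(\{t \in \mathbb{R} : h_{\mathbf{a}'}(t) = 0\}\bigr)\, d\mathbf{a}',
\]
and by the base case the integrand vanishes whenever $h_{\mathbf{a}'} \not\equiv 0$. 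It therefore suffices to show that $A := \{\mathbf{a}' \in \mathbb{R}^{d-1} : h_{\mathbf{a}'} \equiv 0\}$ has $(d-1)$-dimensional measure zero. Using the one-dimensional identity theorem again, $h_{\mathbf{a}'} \equiv 0$ iff all its Taylor coefficients at $t = 0$ vanish, so
\[
A = \bigcap_{n \geq 0}\bigl\{\mathbf{a}' : c_n(\mathbf{a}') = 0\bigr\}, \qquad c_n(\mathbf{a}') := \partial^n_t f(\mathbf{a}', 0),
\]
where each $c_n$ is real analytic on $\mathbb{R}^{d-1}$.

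If some $c_n$ is not identically zero, the inductive hypothesis implies $\{c_n = 0\}$, and hence $A \subseteq \{c_n = 0\}$, has measure zero, completing the argument. Otherwise all $c_n$ vanish identically on $\mathbb{R}^{d-1}$, which in particular forces every mixed partial $\partial^\alpha_{\mathbf{a}'} \partial^n_t f(0, 0) = \partial^\alpha c_n(0) = 0$, so the full multivariable Taylor series of $f$ at the origin is identically zero. By real analyticity $f$ vanishes on a neighborhood of the origin, and the identity theorem on the connected open set $\mathbb{R}^d$ then forces $f \equiv 0$, contradicting the hypothesis. The main obstacle I anticipate is precisely this last passage from ``all $c_n$ are identically zero'' to ``$f \equiv 0$'': it relies on the fact that a real analytic function is locally represented by a convergent power series (not merely formally equal to it), and on a careful invocation of the multivariable identity theorem to promote local vanishing on a ball to global vanishing on $\mathbb{R}^d$. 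The measurability needed for Fubini is immediate since zero sets of continuous functions are closed.
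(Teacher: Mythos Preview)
Your proof is correct. The paper, however, does not prove this lemma at all: it simply quotes the result from Mityagin's note \emph{The zero set of a real analytic function} (arXiv:1512.07276) and uses it as a black box. So there is no ``paper's own proof'' to compare against in any substantive sense.

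That said, your argument is essentially the standard one, and it is also the argument Mityagin gives: induction on $d$, with the one-dimensional identity theorem supplying the base case and Fubini's theorem driving the inductive step via the slice functions $h_{\mathbf{a}'}$. Your handling of the only delicate point --- showing that the exceptional set $A=\{\mathbf{a}': h_{\mathbf{a}'}\equiv 0\}$ is $(d-1)$-null by looking at the analytic coefficient functions $c_n(\mathbf{a}')=\partial_t^n f(\mathbf{a}',0)$ and invoking the inductive hypothesis on some non-vanishing $c_n$ --- is clean and correct, as is the contradiction in the degenerate case where all $c_n$ vanish identically. The concerns you flag (local power-series representation, the multivariable identity theorem on the connected domain $\mathbb{R}^d$, and measurability for Fubini) are all standard and pose no genuine obstacle. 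In short: you have supplied a complete proof where the paper was content to cite one.
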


	Lemma \ref{lemma::analytic} states that the zero set of a real analytic function is either the whole domain or zero-measure. Thus, to verify that a real analytic function $f: \mathbb{R}^d\rightarrow \mathbb{R}$ is not identically zero, it suffices to find one $\mathbf{a}\in \mathbb{R}^d$ such that $f(\mathbf{a})\not=0$. Further, if $f$ is not identically zero, then it is non-zero for generic input. That is, for any $\mathbf{a} \in \mathbb{R}^d$ and any $\epsilon>0$, there exists $\mathbf{a}' \in B(\mathbf{a}, \epsilon)$ with $f(\mathbf{a}') \not= 0$. This lemma will be applied multiple times in our proofs.
	
	\ifnonsmooth

	\section{Proof of Theorem \ref{thm::partiallinear-bad-localmin}}
	From Assumption \ref{ass::partial-linear-activation}, $\sigma$ is linear in $(a - \delta, a + \delta)$, say
	\begin{equation}
	\label{eq::proof_thm2_linear_part}
	\sigma(t) = \alpha  (t - a) + \beta, \quad\!\! t\in (a - \delta, a + \delta).
	\end{equation}

\subsection{Local-Min for Degenerate Case}
In this subsection, we prove the theorem for the degenerate case with $\alpha = 0$, i.e., $\sigma(t) = \beta$ for all $t \in (a-\delta, a+\delta)$. In this case, we can construct a trivial local minimum. First, we set
	\begin{subequations}
		\begin{gather}
	 	W_{h} = \mathbf{0}_{d_h \times d_{h-1}}\\
	 	\mathbf{b}_h = a \cdot \mathbf{1}_{d_h}
	 	\end{gather}
	\end{subequations}
	for $h = 1, 2, \cdots, H$. It can be readily verified that the input to every hidden-layer neuron is $a$ for any input data sample, yielding
	\begin{equation}
		T_h(\Theta) = \beta \cdot \mathbf{1}_{d_h \times N}, \quad h = 1, 2, \cdots, H.
	\end{equation}
	Then we let
	\begin{equation}
	\label{eq::proof_thm2_trivial_base}
		W_{H+1} \in \argminA_{V \in \mathbb{R}^{d_{H+1} \times d_H}} \|Y - \beta V \mathbf{1}_{d_H\times N}\|^2_F
	\end{equation}
	which is a minimizer of a convex optimization problem. From Assumption \ref{ass::partial-linear-activation}\ref{ass::partial-linear-activation_c}, we see that the dimension of $\text{row}(Y)$ is greater than that of $\text{row}(\mathbf{1}_{d_H \times N})$, implying $E(\Theta) > 0$.

	Since $\alpha = 0$, a sufficiently small perturbation on $\Theta$ will not change the output of each hidden neuron. That is, there exists $\delta_1 >0 $ such that for any $\Theta' \in B(\Theta,\delta_1)$, we have $z_{h,i,n} \in B(a,\delta)$, and hence $T_h(\Theta') = \beta \cdot \mathbf{1}_{d_h \times N}$ for each hidden layer. We further have
	\begin{subequations}
		\begin{align}
		E(\Theta') &= \|Y - T_{H+1}(\Theta')\|^2_F \\ 
		& = \|Y - \beta W'_{H+1}\mathbf{1}_{d_H \times N} \|^2_F\\
		\label{eq::proof_thm2_trivial}
		& \geq\| Y  - \beta W_{H+1}\mathbf{1}_{d_H \times N} \|^2_F \\
		& = E(\Theta) > 0
		\end{align}
	\end{subequations}
	where \eqref{eq::proof_thm2_trivial} follow from \eqref{eq::proof_thm2_trivial_base}. This implies that $\Theta$ is a local minimum of $E$ with $E(\Theta) > 0$.
	
	\subsection{Local-Min for Non-Degenerate Case}
	In what follows, we construct local minimum for the case $\alpha \not= 0$. Our target is to design the weights to meet
	\begin{subequations}
		\label{eq::proof_thm2_linear_con}
		\begin{gather}
		\label{eq::proof_thm2_linear_con1}
		Z_h \in B\left(a\cdot \mathbf{1}_{d_h \times N}, \delta\right)\\
		\label{eq::proof_thm2_linear_con2}
		\mathrm{row}\left(T_{h}(\Theta)\right) = \mathrm{row}\left(\left[\begin{matrix}
		X \\
		\mathbf{1}_N^\top
		\end{matrix}\right] \right)
		\end{gather}
	\end{subequations}
	for all $1 \leq h  \leq H$. This is done by induction, detailed as follows.
	
	We begin with the weights to the first hidden layer. Notice that $d_1 > d_0$, we let 
	\begin{equation}
	W_1 = V_1 \in \mathbb{R}^{d_1 \times d_0}
	\end{equation}
	with $V_1$ satisfies $\|V_1X\|_F < \delta/2$, to be determined later, and 
	\begin{equation}
		\mathbf{b}_1 = a \mathbf{1}_{d_1} + \mathbf{u}_1
	\end{equation}
	where $\mathbf{u}_1 \in \mathbb{R}^{d_1}$ is an arbitrary vector satisfying $\|\mathbf{u}_1\|_2 < \delta/(2N)$, also to be determined later. Noting that
	\begin{equation}
		Z_1 =  W_1X + \mathbf{b}_1\mathbf{1}^\top_N = a\cdot \mathbf{1}_{d_1\times N} + V_1 X + \mathbf{u}_1\mathbf{1}^\top_N
	\end{equation}
	and
	\begin{equation}
	\left\|V_1 X + \mathbf{u}_1\mathbf{1}_N^\top\right\|_F \leq \|V_1X\|_F + N\|\mathbf{u}_1\|_2 < \delta,
	\end{equation}
	we have $Z_1 \in B(a\cdot \mathbf{1}_{d_1\times N}, \delta)$. That is, \eqref{eq::proof_thm2_linear_con1} holds. From \eqref{eq::proof_thm2_linear_part}, we have
	\begin{subequations}
	\begin{align}
	T_1 =&\sigma\left(Z_1\right) 
	= \alpha \left(V_1X + \mathbf{u}_1\mathbf{1}_N^\top \right)+ \beta\mathbf{1}_{d_1 \times N} \\
	=& \left(\left[V_1, \mathbf{u}_1\right] + \left[\mathbf{0}_{d_1 \times d_0}, \frac{\beta}{\alpha} \mathbf{1}_{d_1}\right]\right) \left(\alpha \cdot \left[
	\begin{matrix}
	X \\
	\mathbf{1}_N^\top
	\end{matrix}
	\right]\right).
	\end{align}
	\end{subequations}
	To proceed, we present a following lemma.
	\begin{lemma}
		\label{lemma::proof_thm2_matrix_existence}
		Consider arbitrary two matrices $A \in \mathbb{R}^{L_1 \times L_2}$ and $B \in \mathbb{R}^{L_2 \times L_3}$ with $L_1 \geq L_2$. For any $\epsilon >0 $, there exists $C \in \mathbb{R}^{L_1 \times L_2}$ with $\|C\|_F <\epsilon$, such that
		\begin{equation}
			\label{eq::proof_thm2_matrix_existence}
			\mathrm{row}\left((C+A)B\right) =  \mathrm{row}\left(B\right).
		\end{equation}
	\end{lemma}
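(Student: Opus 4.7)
The inclusion $\mathrm{row}((C+A)B)\subseteq \mathrm{row}(B)$ holds for every $C$, since each row of $(C+A)B$ is a linear combination of rows of $B$. Hence the two row spaces coincide if and only if their dimensions agree, i.e., $\mathrm{rank}((C+A)B)=\mathrm{rank}(B)$. Let $r=\mathrm{rank}(B)$. So the goal reduces to producing $C$ with $\|C\|_F<\epsilon$ such that $(C+A)B$ has rank $r$.

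The plan is to peel off the rank of $B$ via a rank factorization. Write $B=PQ$ with $P\in\mathbb{R}^{L_2\times r}$ of rank $r$ and $Q\in\mathbb{R}^{r\times L_3}$ of rank $r$. Since $Q$ has full row rank $r$, for any $M\in\mathbb{R}^{L_1\times L_2}$ one has $\mathrm{rank}(MPQ)=\mathrm{rank}(MP)$. Thus it suffices to find $C$ in the $\epsilon$-ball around the origin such that $M:=C+A$ satisfies $\mathrm{rank}(MP)=r$.

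Now I would invoke Lemma \ref{lemma::analytic}. Each $r\times r$ minor of $MP$ is a polynomial (hence real analytic) function of the entries of $C$. The set of ``bad'' $C$ for which $(C+A)P$ has rank strictly less than $r$ is exactly the common zero set of all these minors. To apply Lemma \ref{lemma::analytic}, I need to exhibit one choice of $C$ for which at least one minor is nonzero. Using $L_1\geq L_2\geq r$ and $\mathrm{rank}(P)=r$, pick any $r$ rows of $P$ indexed by $I\subset\{1,\dots,L_2\}$ that form an invertible $r\times r$ block, and define $M_0\in\mathbb{R}^{L_1\times L_2}$ whose first $r$ rows are the indicator vectors $e_i^\top$ for $i\in I$ and whose remaining rows are zero. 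Then $M_0 P$ has as its first $r$ rows the corresponding $r\times r$ invertible block of $P$, so $\mathrm{rank}(M_0 P)=r$. Taking $C_0=M_0-A$ witnesses a value of $C$ at which the relevant minor does not vanish.

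Consequently the analytic function in $C$ given by that minor is not identically zero, so by Lemma \ref{lemma::analytic} its zero set has measure zero in $\mathbb{R}^{L_1\times L_2}$. The bad set (where all $r\times r$ minors vanish simultaneously) is contained in this zero set, hence has measure zero as well. In particular it cannot contain the open ball $B(\mathbf{0},\epsilon)$, so there exists $C\in B(\mathbf{0},\epsilon)$ with $\|C\|_F<\epsilon$ and $\mathrm{rank}((C+A)P)=r$, which yields $\mathrm{rank}((C+A)B)=r=\mathrm{rank}(B)$ as required. There is no serious obstacle; the only mild point to get right is the reduction via the rank factorization so that the rank condition becomes a single non-vanishing polynomial inequality, after which the ``generic perturbation'' step is immediate from Lemma \ref{lemma::analytic}.
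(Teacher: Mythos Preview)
Your proof is correct and follows essentially the same approach as the paper: reduce the row-space equality to the rank condition $\mathrm{rank}((C+A)B)=\mathrm{rank}(B)$, encode this via a polynomial in $C$ (you use an $r\times r$ minor of $(C+A)P$ through the rank factorization $B=PQ$, while the paper uses $\det\big(D_1(C+A)BB^\top(C+A)^\top D_1^\top\big)$ for suitable $D_1,D_2$), exhibit a witness $C$ where the polynomial is nonzero, and invoke Lemma~\ref{lemma::analytic} to conclude the bad set is measure zero and hence cannot contain the open ball $B(\mathbf{0},\epsilon)$. The only trivial edge case you don't state explicitly is $B=\mathbf{0}$ (where $r=0$ and the claim is immediate), which the paper handles separately.
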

	From Lemma \ref{lemma::proof_thm2_matrix_existence}, there exists $[V_1,  \mathbf{u}_1]$ with
	\begin{equation}
	\label{eq::proof_thm2_bound1}
	   \left\|[V_1, \mathbf{u_1}]\right\|_F < \min\left\{\frac{\delta}{2\max\{\|X\|_F,1\}}, \frac{\delta}{2N}\right\}
	\end{equation}
	such that
	\begin{subequations}
		\label{eq::proof_thm2_condition2_w1}
		\begin{align}
		\mathrm{row}\left(T_1\right) = & \mathrm{row}\left( \left(\left[V_1, \mathbf{u}_1\right] + \left[\mathbf{0}_{d_1 \times d_0}, \frac{\beta}{\alpha} \mathbf{1}_{d_1}\right]\right) \left(\alpha \cdot \left[
		\begin{matrix}
		X \\
		\mathbf{1}_N^\top
		\end{matrix}
		\right]\right)\right) \\
		= & \mathrm{row}\left(\alpha \cdot \left[
		\begin{matrix}
		X \\
		\mathbf{1}_N^\top
		\end{matrix}
		\right]\right)\\
		= & \mathrm{row}\left(\left[
		\begin{matrix}
		X \\
		\mathbf{1}_N^\top
		\end{matrix}
		\right]\right).
		\end{align}
	\end{subequations}
	Note that \eqref{eq::proof_thm2_bound1} implies $\|V_1X\|_F < \delta/2$ and $\|\mathbf{u}_1\|_2 < \delta/(2N)$. Therefore, condition \eqref{eq::proof_thm2_linear_con} is met for the first hidden layer.
	
	Suppose that \eqref{eq::proof_thm2_linear_con} holds for the $(h-1)$-th hidden layer ($1 < h \leq H$), following a similar analysis we can construct $W_{h}$ and $\mathbf{b}_h$ to meet \eqref{eq::proof_thm2_linear_con} for the $h$-th hidden layer. Specifically, From Lemma \ref{lemma::proof_thm2_matrix_existence}, there exists $V_h \in \mathbb{R}^{d_h \times d_{h-1}}$ and $\mathbf{u}_h \in \mathbb{R}^{d_h}$ with
	\begin{equation}
	\label{eq::proof_thm2_boundh}
	\left\|[V_h, \mathbf{u}_h]\right\|_F < \min\left\{\frac{\delta}{2\|T_{h-1}\|_F}, \frac{\delta}{2N}\right\}
	\end{equation}
	such that
	\begin{subequations}
		\label{eq::proof_thm2_condition2_wh}
		\begin{align}
		& \mathrm{row}\left( \left(\left[V_h, \mathbf{u}_h\right] + \left[\mathbf{0}_{d_h \times d_{h-1}}, \frac{\beta}{\alpha} \mathbf{1}_{d_h}\right]\right) \left(\alpha \cdot \left[
		\begin{matrix}
		T_{h-1} \\
		\mathbf{1}_N^\top
		\end{matrix}
		\right]\right)\right) \\
		= &  \mathrm{row}\left(\alpha \cdot \left[
		\begin{matrix}
		T_{h-1} \\
		\mathbf{1}_h^\top
		\end{matrix}
		\right]\right) =  \mathrm{row}\left(\left[
		\begin{matrix}
		X \\
		\mathbf{1}_h^\top
		\end{matrix}
		\right]\right).
		\end{align}
	\end{subequations}
	We let 
	\begin{equation}
	W_h = V_h, \quad \mathbf{b}_h = a \mathbf{1}_{d_h} + \mathbf{u}_h.
	\end{equation}
	From \eqref{eq::proof_thm2_boundh}, we have $\|V_hT_{h-1}\|_F < \delta/2$ and $\|\mathbf{u}_h\|_2 < \delta/(2N)$. Then
	\begin{equation}
	Z_h =  W_hT_{h-1} + \mathbf{b}_h\mathbf{1}^\top_N = a\cdot \mathbf{1}_{d_h\times N} + V_h T_{h-1} + \mathbf{u}_h\mathbf{1}^\top_N
	\end{equation}
	where
	\begin{equation}
	\left\|V_h T_{h-1} + \mathbf{u}_h\mathbf{1}_N^\top\right\|_F \leq \|V_hT_{h-1}\|_F + N\|\mathbf{u}_h\|_2 < \delta.
	\end{equation}
	Thus, $Z_h \in B(a\cdot \mathbf{1}_{d_h\times N}, \delta)$, and hence \eqref{eq::proof_thm2_linear_con1} holds. Then, from \eqref{eq::proof_thm2_linear_part} we have
	\begin{subequations}
		\label{eq::proof_thm2_condition2_wh2}
		\begin{align}
		T_h =&\sigma\left(Z_h\right) 
		= \alpha \left(V_hT_{h-1} + \mathbf{u}_h\mathbf{1}_N^\top \right)+ \beta\mathbf{1}_{d_h \times N} \\
		=& \left(\left[V_h, \mathbf{u}_h\right] + \left[\mathbf{0}_{d_h \times d_{h-1}}, \frac{\beta}{\alpha} \mathbf{1}_{d_h}\right]\right) \left(\alpha \cdot \left[
		\begin{matrix}
		T_{h-1} \\
		\mathbf{1}_N^\top
		\end{matrix}
		\right]\right).
		\end{align}
	\end{subequations}
	Combining \eqref{eq::proof_thm2_condition2_wh} and \eqref{eq::proof_thm2_condition2_wh2}, we conclude that condition \eqref{eq::proof_thm2_linear_con} also holds for the $h$-th hidden layer. By induction, we can construct $(W_1, \mathbf{b}_1, W_2, \mathbf{b}_2, \cdots, W_H, \mathbf{b}_H)$ such that \eqref{eq::proof_thm2_linear_con} holds for all hidden layers.
	  
	Finally, we set the weights to the output layer as
	\begin{equation}
	W_{H+1} \in \argminA_{V \in \mathbb{R}^{d_{H+1} \times d_H}} \|Y - V T_H\|^2_F.
	\end{equation}
	which is a minimizer of a convex optimization problem. From \eqref{eq::proof_thm2_linear_con2}, we see that $W_{H+1}$ equivalently minimizes the distance from $Y$ to $\mathrm{row}\left(\left[\begin{matrix}
	X \\
	\mathbf{1}_N^\top
	\end{matrix}\right]\right)$, i.e.,
	\begin{subequations}
	\begin{align}
	E(\Theta) & = \min_{V \in \mathbb{R}^{d_{H+1} \times d_H}}\left\| Y - V T_H(\Theta) \right\|^2_F\\
	& = \min_{V \in \mathbb{R}^{d_{H+1} \times (d_H+1)}}\left\| Y - V  \left[\begin{matrix}
	X \\
	\mathbf{1}_N^\top
	\end{matrix}\right]\right\|^2_F.
	\end{align}
	\end{subequations}
	From Assumption \ref{ass::partial-linear-activation}\ref{ass::partial-linear-activation_c}, we further have $E(\Theta)>0$.
	
	To complete the proof, it suffices to show that the constructed $\Theta$ is indeed a local minimum. From Assumption \ref{ass::partial-linear-activation}, there exists $\delta_2$ such that for any $\Theta' \in B(\Theta,\delta_2)$, the input to any hidden-layer neuron is within $(a-\delta, a+\delta)$. Then, it can be shown by induction that
	\begin{equation}
	\mathrm{row}(T_h(\Theta')) \in \mathrm{row}\left(\left[\begin{matrix}
	X \\
	\mathbf{1}_N^\top
	\end{matrix}\right]\right)
	\end{equation}
	for $h = 1, 2,\cdots, H$. Therefore,
	\begin{align}
	E(\Theta') &= \left\| Y - W'_{H+1} T_H\left(\Theta'\right)\right\|^2_F \nonumber\\
	&\geq \min_{V \in \mathbb{R}^{d_{H+1} \times (d_H+1)}}\left\| Y - V  \left[\begin{matrix}
	X \\
	\mathbf{1}_N^\top
	\end{matrix}\right]\right\|^2_F \nonumber\\
	&=  E(\Theta)
	\end{align}
	Thus, $\Theta$ is a local minimum with $E(\Theta)>0$.
	
	\fi

	\section{Proof of Theorem \ref{thm::bad_localmin_sigmoid}}
	We consider a $1$-hidden-layer neural network with sigmoid function. We show that for generic $X$, there exists an output vector $Y\in \mathbb{R}^{1 \times N}$ such that the network has a bad local minimum. Moreover, there exists $\delta > 0$ such that for any perturbed input and output data $X' \in B(X,\delta)$, $Y' \in B(Y, \delta)$, the resulting network still has a bad local minimum.
	
	The proof consists of four stages. First, we show that the considered 1-hidden-layer sigmoid network has sub-optimal local minimum for generic $X$. Second, we show that a corresponding 1-neuron sigmoid network has a strict local minimum. Third, we consider an arbitrary perturbation in a sufficiently small region on the training data, and show that local minimum still exists for the ``perturbed" 1-neuron network. Fourth, we prove the existence of sub-optimal local minimum for the wide network with the perturbed data.
	
	Note that we consider a single-hidden-layer network with $d_0 = d_2 = 1$. For notational brevity, in this proof we denote the data and the weights in vector forms and throw out the subscripts, i.e.,
	\begin{subequations}
		\begin{gather}
		\mathbf{x} = X^\top \in \mathbb{R}^{N}, \quad \!\! \mathbf{y} = Y^\top \in \mathbb{R}^{N} \\
		\mathbf{b} = \mathbf{b}_1 \in \mathbb{R}^{d_1}, \quad \!  \mathbf{w} = W_1 \in \mathbb{R}^{d_1}, \quad \!\! \mathbf{v} =W_2^\top \in \mathbb{R}^{d_1}, 
		\end{gather}
	In this way, we represent the training data by $(\mathbf{x}, \mathbf{y})$ and re-parameterize the weight configuration as $\Theta = (\mathbf{v}, \mathbf{w}, \mathbf{b})$. Then, the network output and the empirical loss are respectively represented by
	\begin{gather}
		\mathbf{t} = \mathbf{v}^\top \sigma\left(\mathbf{w}\mathbf{x}^\top + \mathbf{b} \cdot \mathbf{1}_N^\top\right) \in \mathbb{R}^N \\
		E(\Theta) = \|\mathbf{y} - \mathbf{t}\|^2_2.
	\end{gather}
	\end{subequations}
	
	\subsection{Sub-Optimal Local-Min Construction}
	In the first stage, we aim to construct sub-optimal local minimum for generic $X$. We first set $\mathbf{w} = w \cdot \mathbf{1}_{d_1}$ and $\mathbf{b} = b \cdot \mathbf{1}_{d_1}$ for some $w, b \in \mathbb{R}$. That is, the input weights to all hidden neurons are the same, and so are the biases. Denote 
	\begin{equation}
		\mathbf{z}_i \triangleq (Z_1)_{(i,:)} \in \mathbb{R}^N
	\end{equation} 
	as the ``input vector" of the the $i$-th hidden neuron. Noting that the ``input vector" of each hidden neuron is the same, we denote
	\begin{equation}
	\mathbf{z} = \mathbf{z}_1 = \mathbf{z}_2 = \cdots = \mathbf{z}_{d_1} = w \cdot \mathbf{x} + b \cdot \mathbf{1}_N.
	\end{equation}
	
	We consider the following set consisting of 6 $N$-dimensional vectors
	\begin{equation}
	\mathcal{A} = \{ \sigma(\mathbf{z}) ,\, \sigma'(\mathbf{z}),\, \sigma'(\mathbf{z})\circ \mathbf{x},\, \sigma''(\mathbf{z}), \sigma''(\mathbf{z})\circ \mathbf{x},\, \sigma''(\mathbf{z})\circ \mathbf{x}\circ \mathbf{x}\} \subseteq \mathbb{R}^N.
	\end{equation}
	We claim that for generic $(w, b, \mathbf{x})$, the vectors in $\mathcal{A}$ are linearly independent. Define a matrix
	\begin{equation}
	A \triangleq [\sigma(\mathbf{z}) ,\, \sigma'(\mathbf{z}),\, \sigma'(\mathbf{z})\circ \mathbf{x},\, \sigma''(\mathbf{z}), \sigma''(\mathbf{z})\circ \mathbf{x},\, \sigma''(\mathbf{z})\circ \mathbf{x}\circ \mathbf{x}]\in \mathbb{R}^{N \times 6}
	\end{equation}
	with columns formed by the vectors in $\mathcal{A}$. Noting that $N \geq 6$, let $A_0 \in \mathbb{R}^{6\times 6}$ be the sub-matrix consisting of the first 6 rows of $A$. Since the sigmoid function is analytic, $\mathrm{det}(A_0)$ can be represented as a polynomial of analytic functions with respect to $(w,b,\mathbf{x})$. Thus, $\det(A_0)$ is also analytic with respect to $(w,b,\mathbf{x})$. Set $w=1, b= 0$, and $x_i = i-1$ for $i=1,2,\cdots, N$. We have
	\begin{equation}
	A_0 = \begin{bmatrix}
	\sigma(0) & \sigma'(0) & 0 & \sigma''(0) & 0 & 0 \\
	\sigma(1) & \sigma'(1) & \sigma'(1) & \sigma''(1)& \sigma''(1) & \sigma''(1)\\
	\sigma(2) & \sigma'(2) & 2\sigma'(2) & \sigma''(2) & 2\sigma''(2) & 4\sigma''(2) \\
	\sigma(3) & \sigma'(3) & 3\sigma'(3) & \sigma''(3) & 3\sigma''(3) & 9\sigma''(3) \\
	\sigma(4) & \sigma'(4) & 4\sigma'(4) & \sigma''(4) & 4\sigma''(4) & 16\sigma''(4) \\
	\sigma(5) & \sigma'(5) & 5\sigma'(5) & \sigma''(5) & 5\sigma''(5) & 25\sigma''(5)
	\end{bmatrix}
	\end{equation}
	which is full-rank. This implies that $\det(A_0)$, as a function of $(w,b, \mathbf{x})$, is not identically zero. By Lemma \ref{lemma::analytic}, for generic $(w,b,\mathbf{x})$, $\det(A_0)$ is non-zero, so $A_0$ is full-rank, implying that $A$ is of full column rank. We further have that for generic $\mathbf{x} \in \mathbb{R}^N$, there exists $(w, b)$ such that the vectors in $\mathcal{A}$ are linearly independent.
	
	Given that the vectors in $\mathcal{A}$ are linearly independent, by Lemma \ref{lemma::dim_requirement} we can find $N-4$ linearly-independent vectors $\{\mathbf{u}_l\}$, each of $N$ dimensions, such that
	\begin{subequations}\label{eq::construct_u2}
		\begin{gather}
		\langle \mathbf{u}_l , \sigma(\mathbf{z}) \rangle = 0 \\ 
		\langle \mathbf{u}_l , \sigma'(\mathbf{z}) \rangle = 0 \\
		\langle \mathbf{u}_l , \sigma'(\mathbf{z})\circ\mathbf{x} \rangle = 0 \\ 
		\langle \mathbf{u}_l , \sigma''(\mathbf{z}) \circ \mathbf{x} \rangle = 0 \\ 
		\langle \mathbf{u}_l , \sigma''(\mathbf{z}) \rangle > 0 \\ 
		\langle \mathbf{u}_l , \sigma''(\mathbf{z})  \circ \mathbf{x} \circ \mathbf{x} \rangle > 0 .
		\end{gather}
	\end{subequations}
	for $l=1,2,\cdots, N-4$. We then set $\mathbf{v}$ to be an arbitrary vector with entries all-positive or all-negative, i.e.,
	\begin{equation}
	\label{eq::proof_thm3_v_same_sign}
	v_i \cdot v_{i'} > 0
	\end{equation}
	for all $1 \leq i,i' \leq d_1$. Finally, we let
	\begin{equation}
	\label{eq::proof_thm3_construct_Y1}
	\mathbf{y} = \mathbf{t} - v_1 \sum^{N-4}_{l=1} \alpha_l \mathbf{u}_l
	\end{equation}
	where each $\alpha_l$ is an arbitrary positive value.
	
	In what follows, we show that the constructed $\Theta = (\mathbf{v}, w\cdot\mathbf{1}_{d_1}, b\cdot\mathbf{1}_{d_1})$ is indeed a local minimum of the considered network with training data $(\mathbf{x}, \mathbf{y})$. We provide the following lemma that identifies a sufficient condition for the local minimum.
	\begin{lemma}
		\label{lemma::localmin_system1}
		Consider a $1$-hidden-layer neural network with $d_0 = d_2= 1$ and twice-differentiable activation. Suppose that at a weight configuration $\Theta = (\mathbf{v},\mathbf{w}, \mathbf{b})$, all the equations and inequalities
		\begin{subequations}
			\label{eq::localmin_system1}
			\begin{gather}
			\langle \Delta \mathbf{y}, \sigma(\mathbf{z}_i) \rangle = 0 \\
			\langle \Delta \mathbf{y}, \sigma'(\mathbf{z}_i) \rangle = 0 \\
			\langle \Delta \mathbf{y}, v_i \sigma'(\mathbf{z}_i) \circ \mathbf{x} \rangle = 0 \\
			\langle \Delta \mathbf{y}, v_i \sigma''(\mathbf{z}_i) \circ \mathbf{x} \rangle = 0 \\
			\langle \Delta \mathbf{y}, v_{i} \sigma''(\mathbf{z}_i)\rangle > 0 \\
			\langle \Delta \mathbf{y}, v_i \sigma''(\mathbf{z}_i) \circ \mathbf{x} \circ \mathbf{x} \rangle > 0
			\end{gather}
		\end{subequations}
		hold for all $1 \leq i \leq d_1$, where $\Delta \mathbf{y} \triangleq \mathbf{t} - \mathbf{y} $. Then $\Theta$ is a local minimum of $E$.
	\end{lemma}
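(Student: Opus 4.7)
The plan is to work from the identity
\begin{equation*}
E(\Theta') - E(\Theta) = \|\mathbf{t}' - \mathbf{t}\|_2^2 + 2\langle \Delta\mathbf{y}, \mathbf{t}' - \mathbf{t}\rangle,
\end{equation*}
which follows by expanding $\|\mathbf{y}-\mathbf{t}'\|_2^2$ exactly as in \eqref{eq::overview1}. Since $\|\mathbf{t}'-\mathbf{t}\|_2^2 \geq 0$, it suffices to show $\langle \Delta\mathbf{y}, \mathbf{t}' - \mathbf{t}\rangle \geq 0$ for every $\Theta'$ in a sufficiently small ball around $\Theta$. Writing $\Delta v_i = v_i'-v_i$, $\Delta w_i = w_i'-w_i$, $\Delta b_i = b_i'-b_i$, and $\Delta\mathbf{z}_i = \Delta w_i\, \mathbf{x} + \Delta b_i\, \mathbf{1}_N$, a second-order Taylor expansion of $\sigma$ around $\mathbf{z}_i$ multiplied by $v_i' = v_i + \Delta v_i$ yields
\begin{align*}
v_i' \sigma(\mathbf{z}_i') - v_i \sigma(\mathbf{z}_i) &= \Delta v_i\, \sigma(\mathbf{z}_i) + v_i\, \sigma'(\mathbf{z}_i)\circ \Delta\mathbf{z}_i + \Delta v_i\, \sigma'(\mathbf{z}_i)\circ\Delta\mathbf{z}_i \\
&\quad + \tfrac{1}{2}\, v_i\, \sigma''(\mathbf{z}_i)\circ\Delta\mathbf{z}_i\circ\Delta\mathbf{z}_i + R_i,
\end{align*}
with $\|R_i\|_2 = O\big((|\Delta w_i|+|\Delta b_i|)^3 + |\Delta v_i|(|\Delta w_i|+|\Delta b_i|)^2\big)$ uniformly near $\Theta$. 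Pairing with $\Delta\mathbf{y}$ decouples the sum across $i$.

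I then process the five explicit terms using \eqref{eq::localmin_system1}. Condition (a) kills $\Delta v_i\langle\Delta\mathbf{y},\sigma(\mathbf{z}_i)\rangle$; (b) and (c) together kill $\langle\Delta\mathbf{y},\, v_i\sigma'(\mathbf{z}_i)\circ\Delta\mathbf{z}_i\rangle$. Because the strict positivity in (e) forces $v_i\neq 0$, dividing out $v_i$ in (c) yields $\langle\Delta\mathbf{y},\sigma'(\mathbf{z}_i)\circ\mathbf{x}\rangle=0$, and combined with (b) this makes the cross term $\Delta v_i\langle\Delta\mathbf{y},\sigma'(\mathbf{z}_i)\circ\Delta\mathbf{z}_i\rangle$ vanish as well. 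Using
\begin{equation*}
\Delta\mathbf{z}_i\circ\Delta\mathbf{z}_i = (\Delta w_i)^2\,\mathbf{x}\circ\mathbf{x} + 2\Delta w_i\Delta b_i\,\mathbf{x} + (\Delta b_i)^2\,\mathbf{1}_N,
\end{equation*}
condition (d) kills the $\Delta w_i\Delta b_i$ piece, and (e)--(f) leave
\begin{equation*}
\langle\Delta\mathbf{y}, v_i'\sigma(\mathbf{z}_i') - v_i\sigma(\mathbf{z}_i)\rangle = \tfrac{1}{2}(\Delta b_i)^2\, \kappa_i^b + \tfrac{1}{2}(\Delta w_i)^2\, \kappa_i^w + \langle\Delta\mathbf{y}, R_i\rangle,
\end{equation*}
with $\kappa_i^b := \langle\Delta\mathbf{y}, v_i\sigma''(\mathbf{z}_i)\rangle>0$ and $\kappa_i^w := \langle\Delta\mathbf{y}, v_i\sigma''(\mathbf{z}_i)\circ\mathbf{x}\circ\mathbf{x}\rangle>0$.

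Setting $\kappa := \min_i\min\{\kappa_i^b, \kappa_i^w\} > 0$, the remainder bound $|\langle\Delta\mathbf{y}, R_i\rangle| = O\big((|\Delta v_i|+|\Delta w_i|+|\Delta b_i|)((\Delta w_i)^2 + (\Delta b_i)^2)\big)$ is absorbed by the positive quadratic once $\|\Theta'-\Theta\|$ is small enough, so that summing over $i$ gives
\begin{equation*}
\langle\Delta\mathbf{y}, \mathbf{t}'-\mathbf{t}\rangle \geq \tfrac{\kappa}{4}\sum_{i=1}^{d_1}\big((\Delta w_i)^2 + (\Delta b_i)^2\big) \geq 0.
\end{equation*}
Combined with $\|\mathbf{t}'-\mathbf{t}\|_2^2 \geq 0$ this shows $E(\Theta') \geq E(\Theta)$, i.e.\ $\Theta$ is a local minimum. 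The main obstacle is the mixed cross-term $\Delta v_i\sigma'(\mathbf{z}_i)\circ\Delta\mathbf{z}_i$: the hypothesis supplies only the $v_i$-weighted orthogonality in (c), so one must first invoke (e) to rule out $v_i = 0$ before dividing; otherwise this potentially sign-indefinite second-order contribution could dominate and destroy the argument.
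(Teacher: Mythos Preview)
Your proof is correct and follows essentially the same route as the paper: both reduce to $\langle\Delta\mathbf{y},\mathbf{t}'-\mathbf{t}\rangle\geq 0$, Taylor-expand per neuron, use conditions (a)--(d) to kill the first-order and mixed second-order terms, and let the positive quadratic from (e)--(f) absorb the remainder; the only cosmetic difference is that the paper keeps $v_i'$ factored out of the expansion (bounding $|v_i'|\in(|v_i|/2,\,3|v_i|/2)$) instead of splitting $v_i'=v_i+\Delta v_i$ and handling the cross-term $\Delta v_i\,\sigma'(\mathbf{z}_i)\circ\Delta\mathbf{z}_i$ explicitly. One small slip: with $\sigma$ only twice differentiable the Taylor remainder is merely $o(\|\Delta\mathbf{z}_i\|_2^2)$, not the $O\big((|\Delta w_i|+|\Delta b_i|)^3\big)$ you wrote, but your absorption step only needs the $o$-estimate and so goes through unchanged.
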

To show that $\Theta$ is a local minimum, it suffices to verify \eqref{eq::localmin_system1}. From \eqref{eq::proof_thm3_construct_Y1} we have
	\begin{equation}
	 	\Delta \mathbf{y} = \mathbf{t} - \mathbf{y} = v_1 \sum^{N-4}_{l=1} \alpha_l \mathbf{u}_l
	\end{equation}
	Then, from \eqref{eq::construct_u2} and \eqref{eq::proof_thm3_v_same_sign} we obtain
	\begin{subequations}
		\label{eq::localmin_system_sigmoid}
		\begin{gather}
		\langle \Delta \mathbf{y}, \sigma(\mathbf{z}) \rangle  =  v_1 \sum ^{N-4}_{l = 1} \alpha_{l} \langle \mathbf{u}_{l} , \sigma(\mathbf{z})\rangle = 0 \\
		\langle \Delta \mathbf{y}, \sigma'(\mathbf{z}) \rangle  =  v_1 \sum ^{N-4}_{l = 1} \alpha_{l} \langle \mathbf{u}_{l} , \sigma'(\mathbf{z})\rangle = 0\\
		\langle \Delta \mathbf{y}, v_i \sigma'(\mathbf{z}) \circ \mathbf{x} \rangle  = v_1 v_i  \sum ^{N-4}_{l = 1} \alpha_{l} \langle \mathbf{u}_{l} , \sigma'(\mathbf{z}) \circ \mathbf{x}\rangle = 0 \\
		\langle \Delta \mathbf{y}, v_i \sigma''(\mathbf{z}) \circ \mathbf{x} \rangle  = v_1 v_i  \sum ^{N-4}_{l = 1} \alpha_{l} \langle \mathbf{u}_{l} , \sigma''(\mathbf{z}) \circ \mathbf{x}\rangle = 0\\
		\langle \Delta \mathbf{y}, v_i \sigma''(\mathbf{z}) \rangle = v_1 v_i  \sum ^{N-4}_{l = 1} \alpha_{l} \langle \mathbf{u}_{l} , \sigma''(\mathbf{z})\rangle > 0 \\
		\langle \Delta \mathbf{y}, v_i \sigma''(\mathbf{z}) \circ \mathbf{x}\circ \mathbf{x} \rangle =  v_1 v_i  \sum ^{N-4}_{l = 1} \alpha_{l} \langle \mathbf{u}_{l} , \sigma''(\mathbf{z}) \circ \mathbf{x}\circ \mathbf{x}\rangle  > 0
		\end{gather}
	\end{subequations}
	for all $1 \leq i \leq d_1$. By Lemma \ref{lemma::localmin_system1}, $\Theta = (\mathbf{v},w \cdot \mathbf{1},b\cdot \mathbf{1})$ is a local minimum of $E$. Noting that $\Delta \mathbf{y} \not = \mathbf{0}$, we have
	\begin{equation}
	E(\Theta) = \|\Delta \mathbf{y}\|^2_2 > 0.
	\end{equation}
	
	Finally, we note that the sigmoid activation function is analytic with non-zero derivatives at the origin up to the $(N-1)$-th order, i.e.,
	\begin{equation}
		\sigma(0), \sigma'(0), \sigma''(0), \cdots, \sigma^{(N-1)}(0) \not =  0.
	\end{equation}
	From the results in \cite{li2018over}, if $d_1 \geq N$ and $\mathbf{x}$ has distinct entries, the network is realizable, i.e., $\min_\Theta E(\Theta) = 0$, regardless of $\mathbf{y}$. Notice that the set of $\mathbf{x}$ with distinct $x_i$ is generic in $\mathbb{R}^N$. We conclude that for generic $\mathbf{x}$, our constructed local minimum $\Theta$ is sub-optimal.
	
	\subsection{Strict Local-min for 1-Neuron Network}
	In the previous subsection, we show that for generic $\mathbf{x}$, there exists $\mathbf{y}$ such that the considered network has a sub-optimal local minimum $\Theta = (\mathbf{v}, w\cdot\mathbf{1}_{d_1}, b\cdot\mathbf{1}_{d_1})$. Now, we considered a 1-neuron sigmoid network with the same training data $(\mathbf{x}, \mathbf{y})$. The empirical loss is represented by
    \begin{subequations}
	\begin{equation}
	\underline{E}(\underline{\Theta}) = \| \mathbf{y} - \underline{v}\sigma(\underline{w}\mathbf{x} + \underline{b} \cdot \mathbf{1})\|^2_2.
	\end{equation}
	where $\underline{\Theta} = (\underline{v},\underline{w},\underline{b})$ is the weight configuration, and $\underline{v},\underline{w}, \underline{b} \in\mathbb{R}$ are the weight to the output layer, the weight to the hidden layer, and the bias to the hidden layer, respectively. In this proof, we use underlined notations to represent variables relating to the 1-neuron network. Similar to the original network, we define
		\begin{gather}
		\underline{\mathbf{z}} \triangleq \underline{w} \cdot \mathbf{x} + \underline{b}  \cdot \mathbf{1} \\ 
		\underline{\mathbf{t}} \triangleq  \underline{v} \cdot \sigma(\underline{\mathbf{z}}) \\
		\Delta \underline{\mathbf{y}} \triangleq  \underline{\mathbf{t}}- \mathbf{y}.
		\end{gather}
	\end{subequations}
 	
 	Given the constructed bad local minimum $\Theta = (\mathbf{v}, w\cdot \mathbf{1},b \cdot \mathbf{1})$ for the original wide network. We set
 	\begin{equation}
 	\underline{v} =  \sum^{d_1}_{i=1}v_i, \quad \underline{w} = w, \quad \underline{b} = b.
 	\end{equation}
 	In the following, we show that $\underline{\Theta} = (\underline{v}, \underline{w}, \underline{b})$ is a strict local minimum of $\underline{E}$.
 	
 	As $\Theta$ satisfies equation/inequality system \eqref{eq::localmin_system_sigmoid} for the original network, it can be readily verified that $\underline{\Theta}$ also satisfies \eqref{eq::localmin_system_sigmoid} for the 1-neuron network. We can then compute the gradient of $\underline{E}$ at $\underline{\Theta}$ as
	\begin{subequations}
		\begin{align}
		\frac{\partial \underline{E}}{\partial \underline{v}} &= 2 \left\langle \Delta \underline{\mathbf{y}} , \sigma(\underline{\mathbf{z}})\right\rangle = 0 \\ 
		 \frac{\partial \underline{E}}{\partial \underline{w}} &= 2 \underline{v} \left\langle \Delta \underline{\mathbf{y}} , \sigma'(\underline{\mathbf{z}} )\circ \mathbf{x}\right\rangle = 0 \\
		\frac{\partial \underline{E}}{\partial \underline{b}} &= 2 \underline{v}\left\langle \Delta \underline{\mathbf{y}} , \sigma'(\underline{\mathbf{z}})\right\rangle = 0
		\end{align}
	\end{subequations}
	Thus, the gradient $\nabla \underline{E}(\underline{\Theta}) = \mathbf{0}$, so $\underline{\Theta}$ is a stationary point of $\underline{E}$. 
	
	Now, consider an arbitrary perturbation $\underline{\Theta}' = (\underline{v}',\underline{w}',\underline{b}') = (\underline{v} + \Delta v, \underline{w} + \Delta w, \underline{b} + \Delta b)$ in the neighbourhood of $\underline{\Theta}$, with $(\Delta v, \Delta w, \Delta b) \not = (0,0,0)$. For brevity, in this proof we use prime notations for variables corresponding to the perturbed weights. We have
	\begin{align}
	\underline{\mathbf{t}}' - \underline{\mathbf{t}} &= \underline{v}'\sigma(\underline{\mathbf{z}}') -  \underline{v}\sigma(\underline{\mathbf{z}}) \nonumber \\
	&= \left[\underline{v}' \sigma(\underline{\mathbf{z}}') -  \underline{v}' \sigma(\underline{\mathbf{z}}) \right]+ 
	\left[\underline{v}'\sigma(\underline{\mathbf{z}}) -  \underline{v} \sigma(\underline{\mathbf{z}})\right] \nonumber \\
	& = \underline{v}'[\sigma(\underline{\mathbf{z}}') - \sigma(\underline{\mathbf{z}})] + \Delta v \sigma(\underline{\mathbf{z}})
	\end{align}
	We consider the following two cases.
	
	\subsubsection{Case 1: $(\Delta w, \Delta b) \not= (0,0)$}
	
	Note that $\mathbf{x}$ has distinct entries and hence linearly independent of $\mathbf{1}$. We have
	\begin{equation}
	\Delta \mathbf{z} \triangleq\Delta w \cdot \mathbf{x} + \Delta b \cdot \mathbf{1}_N \not = \mathbf{0}.
	\end{equation}
	Then, by Taylor's Theorem, we have
	\begin{subequations}
		\begin{align}
		&\sigma(\underline{\mathbf{z}}') - \sigma(\underline{\mathbf{z}})  \nonumber \\
		= &\sigma'(\underline{\mathbf{z}})\circ \Delta \mathbf{z} + \frac{1}{2} \sigma''(\underline{\mathbf{z}})\circ \Delta \mathbf{z} \circ \Delta \mathbf{z} + \mathbf{o}(\|\Delta \mathbf{z}\|_2^2) \\
		= & \Delta w \cdot \sigma'(\underline{\mathbf{z}})\circ \mathbf{x} + \Delta b \cdot \sigma'(\underline{\mathbf{z}})  + \frac{\Delta w^2}{2} \sigma''(\underline{\mathbf{z}})\circ \mathbf{x} \circ \mathbf{x} \nonumber \\ &+ \frac{\Delta b^2}{2} \sigma''(\underline{\mathbf{z}}) + \Delta w \Delta b \cdot \sigma''(\underline{\mathbf{z}})\circ \mathbf{x} + \mathbf{o}(\|\Delta \mathbf{z}\|_2^2)
		\end{align}
	\end{subequations}
	where $\mathbf{o}(\cdot)$ denotes an infinitesimal vector with
	\begin{equation}
		\label{eq::infi_def}
		\lim_{ t \rightarrow 0} \frac{\|\mathbf{o}(t)\|_2}{|t|} = 0.
	\end{equation}
	Then, noting that $\underline{\Theta}$ satisfies \eqref{eq::localmin_system1}, we have
	\begin{subequations}
		\label{eq::infi_analysis_sigmoid1}
		\begin{align}
		&\left\langle \Delta \underline{\mathbf{y}},
		\underline{\mathbf{t}}' - \underline{\mathbf{t}}\right\rangle \nonumber \\
		= & \underline{v}' \left\langle \Delta \underline{\mathbf{y}}, \sigma(\underline{\mathbf{z}}')- \sigma(\underline{\mathbf{z}})\right\rangle + \Delta v\left\langle \Delta \underline{\mathbf{y}},  \sigma(\underline{\mathbf{z}})\right\rangle \\
		= & \underline{v}' \Delta w \left\langle \Delta \underline{\mathbf{y}}, \sigma'(\underline{\mathbf{z}}) \circ \mathbf{x}\right\rangle + \underline{v}' \Delta b \left\langle \Delta \underline{\mathbf{y}}, \sigma'(\underline{\mathbf{z}})\right \rangle + \frac{1}{2} \underline{v}' \Delta w^2 \left\langle \Delta \underline{\mathbf{y}}, \sigma''(\underline{\mathbf{z}}) \circ \mathbf{x} \circ \mathbf{x}\right\rangle  \nonumber \\
		& + \frac{1}{2} \underline{v}' \Delta b^2 \left\langle \Delta \underline{\mathbf{y}}, \sigma''(\underline{\mathbf{z}}) \right\rangle + \underline{v}'\Delta w \Delta b \left\langle \Delta \underline{\mathbf{y}}, \sigma''(\underline{\mathbf{z}}) \circ \mathbf{x} \right\rangle 
		+ \underline{v}'\left\langle \Delta \underline{\mathbf{y}}, \mathbf{o}(\|\Delta\mathbf{z}\|_2^2) \right\rangle \\
		= & \frac{1}{2}  \Delta w^2 \left\langle \Delta \underline{\mathbf{y}}, \underline{v}' \sigma''(\underline{\mathbf{z}}) \circ \mathbf{x} \circ \mathbf{x}\right\rangle  + \frac{1}{2}  \Delta b^2 \left\langle \Delta \underline{\mathbf{y}},\underline{v}' \sigma''(\underline{\mathbf{z}}) \right\rangle  + \underline{v}'\left\langle \Delta \underline{\mathbf{y}}, \mathbf{o}(\|\Delta\mathbf{z}\|_2^2) \right\rangle.
		\end{align}
	\end{subequations}
	First, we note that there exists $\delta_1>0$ such that for any $\underline{\Theta}' \in B(\underline{\Theta}, \delta_1 )$ we have
	\begin{equation}
	\label{eq::v_prime}
	\underline{v}' \underline{v} > 0, \quad \!\! \frac{|\underline{v}|}{2}< |\underline{v}'| < \frac{3|\underline{v}|}{2},
	\end{equation}
	i.e, $\underline{v}'$ is of the same sign with $\underline{v}$, and deviates less than a half of $\underline{v}$. Then, we can denote
	\begin{subequations}
		\begin{gather}
		M_1 = \frac{1}{2}\left\langle \Delta \underline{\mathbf{y}}, \underline{v} \sigma''(\underline{\mathbf{z}}) \circ \mathbf{x} \circ \mathbf{x}\right\rangle > 0 \\
		M_2 = \frac{1}{2}\left\langle \Delta \underline{\mathbf{y}}, \underline{v} \sigma''(\underline{\mathbf{z}}) \right\rangle >0.
		\end{gather}
	\end{subequations}
	Then, from \eqref{eq::v_prime}, we have
	\begin{subequations}
		\label{eq::infi_analysis_shrink1_base}
		\begin{gather}
		\left\langle \Delta \underline{\mathbf{y}}, \underline{v}' \sigma''(\underline{\mathbf{z}}) \circ \mathbf{x} \circ \mathbf{x}\right\rangle >M_1 >  0 \\
		\left\langle \Delta \underline{\mathbf{y}}, \underline{v}' \sigma''(\underline{\mathbf{z}}) \right\rangle >M_2 >0.
		\end{gather}
		and
		\begin{align}
		\left| \underline{v}'\left\langle \Delta \underline{\mathbf{y}}, \mathbf{o}(\|\Delta\mathbf{z}\|_2^2) \right\rangle \right| \leq& \frac{3|\underline{v}|}{2} \left|\langle \Delta \underline{\mathbf{y}}, \mathbf{o}(\|\Delta\mathbf{z}\|_2^2) \rangle \right| \nonumber \\ 
		\leq& \frac{3|\underline{v}|}{2} \left\|\Delta \underline{\mathbf{y}}\right\|_2 \cdot \left\| \mathbf{o}(\|\Delta\mathbf{z}\|_2^2)\right\|_2
		\end{align}
	\end{subequations}
	Then, there exists $\delta_2>0$ such that for any $\underline{\Theta}' \in B(\underline{\Theta}, \delta_2 )$, we have
	\begin{subequations}
		\label{eq::infi_analysis_shrink2_base}
		\begin{align}
		\left\| \mathbf{o}(\|\Delta\mathbf{z}\|^2_2) \right\|_2 <& \frac{1}{12|\underline{v}|\cdot \|\Delta \underline{\mathbf{y}}\|_2}\min\left\{\frac{M_1}{\|\mathbf{x}\|_2^2}, \frac{M_2}{N}\right\} \|\Delta \mathbf{z}\|_2^2 \\
		\leq & \min\left\{\frac{M_1}{\|\mathbf{x}\|_2^2}, \frac{M_2}{N}\right\}\cdot \frac{\Delta w^2 \|\Delta \mathbf{x}\|_2^2+ \Delta b^2 \|\mathbf{1}_N\|_2^2}{6|\underline{v}| \cdot \|\Delta \underline{\mathbf{y}}\|_2}\\ 
		\leq & \frac{\Delta w^2 M_1 + \Delta b^2 M_2}{6|\underline{v}| \cdot \|\Delta \underline{\mathbf{y}}\|_2}.
		\end{align}
	\end{subequations}
	Let $\delta = \min\{\delta_1, \delta_2\}$. For any $\underline{\Theta}' \in B(\underline{\Theta}, \delta)$, we have
	\begin{subequations}
		\label{eq::perturb_dir1}
		\begin{align}
		&\left\langle \Delta \underline{\mathbf{y}},\underline{\mathbf{t}}' - \underline{\mathbf{t}}\right\rangle \nonumber \\
		\label{eq::infi_analysis_shrink1}
		> & \frac{1}{2} \left( \Delta w^2 M_1  + \Delta b^2 M_2 - 3 |\underline{v}| \cdot\| \Delta \underline{\mathbf{y}}\|_2 \cdot \left\|\mathbf{o}(\|\Delta\mathbf{z}\|_2^2)\right\|_2 \right) \\
		\label{eq::infi_analysis_shrink2}
		\geq & \frac{1}{4} \left( \Delta w^2 M_1  + \Delta b^2 M_2 \right) > 0
		\end{align}
	\end{subequations}
	where \eqref{eq::infi_analysis_shrink1} follows from \eqref{eq::infi_analysis_shrink1_base}, and \eqref{eq::infi_analysis_shrink2} follows from \eqref{eq::infi_analysis_shrink2_base}.
	
	Similar to \eqref{eq::loss_decompose}, we can decompose the difference of the empirical loss as
	\begin{align}
	\label{eq::loss_decompose_sigmoid}
	\underline{E}(\underline{\Theta}')-\underline{E}(\underline{\Theta}) = 2\left\langle \Delta \underline{\mathbf{y}}, \underline{\mathbf{t}}' - \underline{\mathbf{t}} \right\rangle + \|\underline{\mathbf{t}}' - \underline{\mathbf{t}} \|^2_2.
	\end{align}
	Then, \eqref{eq::perturb_dir1} implies $\underline{E}(\underline{\Theta}')-\underline{E}(\underline{\Theta})> 0$.
	
	\subsubsection{Case 2: $(\Delta w, \Delta b) = (0,0)$}
	In this case, $\underline{\mathbf{z}}' = \underline{\mathbf{z}}$ and $\Delta v \not = 0$. We have
	\begin{subequations}
		\begin{align}
		&\left\langle \Delta \underline{\mathbf{y}},\underline{\mathbf{t}}' - \underline{\mathbf{t}}\right\rangle =  \Delta v  \left\langle  \Delta \underline{\mathbf{y}},  \sigma(\underline{\mathbf{z}})\right\rangle  = 0.
		\end{align}
	\end{subequations}
	Then, noting that the sigmoid function $\sigma(t)$ is positive for any $t \in \mathbb{R}$, we have
	\begin{align}
		\underline{E}(\underline{\Theta}')-\underline{E}(\underline{\Theta}) &= 2\left\langle \Delta \underline{\mathbf{y}}, \underline{\mathbf{t}}' - \underline{\mathbf{t}} \right\rangle + \|\underline{\mathbf{t}}' - \underline{\mathbf{t}} \|^2_2 \nonumber \\
		& = \|\underline{\mathbf{t}}' - \underline{\mathbf{t}} \|^2_2 = \Delta v ^2 \|\sigma(\underline{\mathbf{z}})\|_2^2 > 0.
	\end{align}
	
	\subsubsection*{}
	Combining Case 1 and 2, we conclude that for any $\underline{\Theta}' \in B(\underline{\Theta},\delta)$ with $\underline{\Theta}' \not = \underline{\Theta}$, $\underline{E}(\underline{\Theta}')>\underline{E}(\underline{\Theta})$, and hence $\underline{\Theta}$ is a strict local minimum.

	\subsection{Local-Min for Perturbed Data}
	In this subsection, we show that local minimum still exists for slightly perturbed training data. Consider a perturbed training data set $(\tilde{\mathbf{x}}, \tilde{\mathbf{y}})$. With a slightly abuse of notation, in the remaining proof, we directly add a tilde on each symbol to denote the corresponding variable of networks with the perturbed data. For example, we denote the empirical loss of the 1-neuron sigmoid network by 
	\begin{equation}
	\underline{\tilde{E}}(\underline{\Theta}, \tilde{\mathbf{x}}, \tilde{\mathbf{y}}) = \| \tilde{\mathbf{y}} - \underline{v}\sigma(\underline{w}\cdot \tilde{\mathbf{x}} + \underline{b} \cdot \mathbf{1})\|^2_2.
	\end{equation}
	Clearly, $\underline{\tilde{E}}$ is analytic (and hence continuous) with respect to $(\underline{\Theta}, \tilde{\mathbf{x}}, \tilde{\mathbf{y}})$. Then, we define a real symmetric matrix 
	\begin{equation}
	\label{eq::proof_thm3_defH}
	H(\underline{\Theta}, \tilde{\mathbf{x}}, \tilde{\mathbf{y}}) = \left[\begin{matrix}
	\left\langle \Delta \underline{\tilde{\mathbf{y}}}, \underline{v} \sigma''(\underline{\tilde{\mathbf{z}}}) \circ \tilde{\mathbf{x}} \circ \tilde{\mathbf{x}}\right\rangle & 
	\left\langle \Delta \underline{\tilde{\mathbf{y}}}, \underline{v} \sigma''(\underline{\tilde{\mathbf{z}}}) \circ \tilde{\mathbf{x}} \right\rangle \\
	\left\langle \Delta \underline{\tilde{\mathbf{y}}}, \underline{v} \sigma''(\underline{\tilde{\mathbf{z}}}) \circ \tilde{\mathbf{x}}\right\rangle & 
	\left\langle \Delta \underline{\tilde{\mathbf{y}}}, \underline{v} \sigma''(\underline{\tilde{\mathbf{z}}}) \right\rangle \\
	\end{matrix}\right] \in \mathbb{R}^{2\times 2}.
	\end{equation}

	From the previous subsection, for training data $(\mathbf{x},\mathbf{y})$ we construct $\underline{\Theta}$, which is a strict local minimum of $\underline{\tilde{E}}(\cdot,\mathbf{x},\mathbf{y})$. 
	Since $\underline{\Theta}$ satisfies \eqref{eq::localmin_system_sigmoid} for $(\mathbf{x}, \mathbf{y})$, we have
	\begin{equation}
	H(\underline{\Theta}, \mathbf{x}, \mathbf{y}) =  \left[\begin{matrix}
	\left\langle \Delta \underline{\mathbf{y}}, \underline{v} \sigma''(\underline{\mathbf{z}}) \circ \mathbf{x} \circ \mathbf{x}\right\rangle & 
	0 \\
	0 & 
	\left\langle \Delta \underline{\mathbf{y}}, \underline{v} \sigma''(\underline{\mathbf{z}}) \right\rangle \\
	\end{matrix}\right]
	\end{equation}
	whose diagonal entries are both positive. Thus, $H(\underline{\Theta}, \mathbf{x}, \mathbf{y})$ is positive definite. Note that $\mathbf{x}$ has distinct entries, and each entry of $H$ is analytic with respect to $(\underline{\Theta}, \tilde{\mathbf{x}}, \tilde{\mathbf{y}})$. There exists $\delta_{1,1}>0$ such that for any $(\underline{\Theta}',\tilde{\mathbf{x}},\tilde{\mathbf{y}})\in B((\underline{\Theta},\mathbf{x},\mathbf{y}), \delta_{1,1})$, $\tilde{\mathbf{x}}$ has distinct entries and $H(\underline{\Theta}', \tilde{\mathbf{x}}, \tilde{\mathbf{y}})$ is still positive definite.

	To show the existence of local minimum for the perturbed data, we present the following lemma.
	\begin{lemma}
		\label{lemma::localmin_after_perturbation}
		Consider a continuous function $G(p,q)$. Suppose that for a given $q$, $G(\cdot,q)$ has a strict local minimum $p$. Then for any $\epsilon >0$, there exists $\delta>0$, such that for any $\tilde{q} \in B(q,\delta)$, $G(\cdot,\tilde{q})$ has a local minimum $\tilde{p}$ with $\|p - \tilde{p}\| < \epsilon$.
	\end{lemma}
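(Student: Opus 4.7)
The plan is to use a standard ``moat'' argument exploiting the strictness of the local minimum together with the compactness of a small closed ball around $p$. Since $p$ is a strict local minimum of $G(\cdot,q)$, I would first pick $\epsilon_0 \in (0,\epsilon)$ small enough so that $G(p,q) < G(p',q)$ for every $p' \in \overline{B(p,\epsilon_0)} \setminus \{p\}$. The sphere $\partial B(p,\epsilon_0)$ is compact and $G(\cdot,q)$ is continuous, so it attains its minimum on the sphere at some value $G(p,q) + 2\eta$ with $\eta > 0$. This $\eta$ measures how much $G(\cdot,q)$ strictly exceeds its value at $p$ uniformly on the boundary of the chosen ball; it is the ``moat'' that will survive a small perturbation in $q$.

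Next, since $G$ is continuous on the compact set $\overline{B(p,\epsilon_0)} \times \overline{B(q,1)}$, it is uniformly continuous there, so there exists $\delta \in (0,1]$ such that $|G(p',\tilde q) - G(p',q)| < \eta$ for every $p' \in \overline{B(p,\epsilon_0)}$ and every $\tilde q \in B(q,\delta)$. Then for any such $\tilde q$, on the one hand $G(p,\tilde q) < G(p,q) + \eta$, and on the other hand every $p'' \in \partial B(p,\epsilon_0)$ satisfies $G(p'',\tilde q) > G(p'',q) - \eta \geq G(p,q) + \eta > G(p,\tilde q)$. Hence the minimum of $G(\cdot,\tilde q)$ on $\overline{B(p,\epsilon_0)}$, which exists by compactness and continuity, cannot be attained on the boundary.

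Therefore the minimum is attained at some interior point $\tilde p \in B(p,\epsilon_0)$. Because $\tilde p$ lies in the open ball, any sufficiently small neighborhood of $\tilde p$ remains inside $B(p,\epsilon_0)$, and on that neighborhood $\tilde p$ still minimizes $G(\cdot,\tilde q)$; so $\tilde p$ is a bona fide local minimum of the perturbed function. Finally $\|\tilde p - p\| < \epsilon_0 \leq \epsilon$, which gives the claimed bound.

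I do not expect a serious obstacle here: the argument is a textbook persistence-of-strict-minima result and only relies on continuity of $G$ and compactness of a closed ball in finite-dimensional Euclidean space. The one thing to be slightly careful about is that the lemma only asserts ``local minimum'' (not strict), so no further refinement is needed; the closed-ball minimizer construction directly yields a local minimum of $G(\cdot,\tilde q)$ without any additional second-order or regularity assumption.
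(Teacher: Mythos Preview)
Your proposal is correct and follows essentially the same ``moat'' argument as the paper: both pick a small closed ball on which $p$ is the strict minimizer, use compactness of the boundary sphere to get a strictly positive gap, use uniform continuity to preserve (half of) that gap under perturbation of $q$, and conclude that the minimizer of the perturbed function on the closed ball lies in the interior and is therefore a local minimum. The only cosmetic differences are that the paper splits the gap into thirds rather than halves and invokes uniform continuity on a ball in the joint $(p,q)$ space rather than on the product $\overline{B(p,\epsilon_0)}\times\overline{B(q,1)}$; neither affects the substance.
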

	From Lemma \ref{lemma::localmin_after_perturbation}, we see that not only the ``perturbed'' 1-neuron network has a local minimum, but also this new local minimum can be arbitrarily close to the original strict local minimum. 
	
	Formally speaking, there exists $\delta_{1,2}>0$ such that for any $\tilde{\mathbf{x}} \in B(\mathbf{x},\delta_{1,2})$, $\tilde{\mathbf{y}} \in B(\mathbf{y},\delta_{1,2})$, $\underline{\tilde{E}}(\cdot, \tilde{\mathbf{x}},\tilde{\mathbf{y}})$ has a local minimum $\underline{\tilde{\Theta}} = (\underline{\tilde{v}}, \underline{\tilde{w}}, \underline{\tilde{b}})$ with $\underline{\tilde{\Theta}} \in B(\underline{\Theta}, \delta_{1,1})$. Without loss of generality, we can assume $\delta_{1,2} \leq \delta_{1,1}$, since otherwise we can replace $\delta_{1,2}$ by $\delta_{1,1}$ without affecting the above conclusion. Then, we have that $\tilde{\mathbf{x}}$ has distinct entries and that $H(\underline{\tilde{\Theta}},\tilde{\mathbf{x}},\tilde{\mathbf{y}})$ is positive definite. Notice that from \eqref{eq::proof_thm3_defH}, the positive definiteness of $H(\underline{\tilde{\Theta}},\tilde{\mathbf{x}},\tilde{\mathbf{y}})$ also implies $\underline{\tilde{v}} \not=0$.
	
	Before we proceed, we show a special property of the local minimum $\underline{\tilde{\Theta}}$ that is useful for the final step.
	
	Since $\underline{\tilde{\Theta}}$ is a local minimum, $\underline{\tilde{E}}(\cdot, \tilde{\mathbf{x}}, \tilde{\mathbf{y}})$ should have zero gradient at $\underline{\tilde{\Theta}}$, i.e.,
	\begin{subequations}
		\label{eq::gradient_perturb}
		\begin{align}
		\frac{\partial \underline{\tilde{E}}}{\partial \underline{\tilde{v}}} &= 2 \left\langle \Delta \underline{\tilde{\mathbf{y}}} , \sigma(\tilde{\underline{\mathbf{z}}})\right\rangle = 0 \\ 
		\frac{\partial \underline{\tilde{E}}}{\partial \underline{\tilde{w}}}&= 2 \underline{\tilde{v}}\left\langle \Delta \underline{\tilde{\mathbf{y}}} , \sigma'(\underline{\tilde{\mathbf{z}}})\circ \tilde{\mathbf{x}}\right\rangle = 0 \\
		 \frac{\partial \underline{\tilde{E}}}{\partial \underline{\tilde{b}}} &= 2 \underline{\tilde{v}}\left\langle \Delta \underline{\tilde{\mathbf{y}}}  , \sigma'(\tilde{\underline{\mathbf{z}}})\right\rangle = 0.
		\end{align}
	\end{subequations}
	Now, consider an arbitrary perturbation $\underline{\tilde{\Theta}}' = (\underline{\tilde{v}}',\underline{\tilde{w}}',\underline{\tilde{b}}') = (\underline{\tilde{v}} + \Delta v, \underline{\tilde{w}} + \Delta w, \underline{\tilde{b}} + \Delta b)$ in the neighbourhood of $\underline{\tilde{\Theta}}$. Combining \eqref{eq::gradient_perturb} and the Taylor's Theorem, we have
	\begin{subequations}
		\begin{align}
		&\left\langle \Delta \underline{\tilde{\mathbf{y}}},\underline{\tilde{\mathbf{t}}}' - \underline{\tilde{\mathbf{t}}}\right\rangle \nonumber \\
		= & \underline{\tilde{v}}' \left\langle \Delta \underline{\tilde{\mathbf{y}}}, \sigma(\underline{\tilde{\mathbf{z}}}')- \sigma(\underline{\tilde{\mathbf{z}}})\right\rangle + \Delta v\left\langle \Delta \underline{\tilde{\mathbf{y}}},  \sigma(\tilde{\underline{\mathbf{z}}})\right\rangle \\
		= & \underline{\tilde{v}}' \Delta w \left\langle \Delta \underline{\tilde{\mathbf{y}}}, \sigma'(\tilde{\underline{\mathbf{z}}}) \circ \tilde{\mathbf{x}}\right\rangle + \underline{\tilde{v}}' \Delta b \left\langle \Delta \underline{\tilde{\mathbf{y}}}, \sigma'(\underline{\tilde{\mathbf{z}}}) \right\rangle \nonumber \\
		& + \frac{1}{2} \underline{\tilde{v}}' \Delta w^2 \left\langle \Delta \underline{\tilde{\mathbf{y}}}, \sigma''(\underline{\tilde{\mathbf{z}}}) \circ \tilde{\mathbf{x}} \circ \tilde{\mathbf{x}}\right\rangle  + \frac{1}{2} \underline{\tilde{v}}' \Delta b^2 \left\langle \Delta \underline{\tilde{\mathbf{y}}}, \sigma''(\underline{\tilde{\mathbf{z}}}) \right\rangle \nonumber \\
		& + \underline{\tilde{v}}'\Delta w \Delta b \left\langle \Delta \underline{\tilde{\mathbf{y}}}, \sigma''(\underline{\tilde{\mathbf{z}}}) \circ \tilde{\mathbf{x}} \right\rangle 
		+ \underline{\tilde{v}}'\left\langle \Delta \underline{\tilde{\mathbf{y}}}, \mathbf{o}(\|\Delta \tilde{\mathbf{z}}\|_2^2) \right\rangle \\
		= &\frac{1}{2} (\Delta w, \Delta b) H(\underline{\tilde{\Theta}}, \tilde{\mathbf{x}},\tilde{\mathbf{y}}) (\Delta w, \Delta b)^\top + \underline{\tilde{v}}'\left\langle \Delta \underline{\tilde{\mathbf{y}}}, \mathbf{o}(\|\Delta \tilde{\mathbf{z}}\|_2^2) \right\rangle.
		\end{align}
	\end{subequations}
	where $\Delta \tilde{\mathbf{z}} \triangleq \Delta w \cdot \tilde{\mathbf{x}} + \Delta b \cdot \mathbf{1}_{N}$. Noting that $H(\underline{\tilde{\Theta}}, \tilde{\mathbf{x}},\tilde{\mathbf{y}})$ is positive definite, we have
	\begin{equation}
	(\Delta w, \Delta b) H(\underline{\tilde{\Theta}}, \tilde{\mathbf{x}},\tilde{\mathbf{y}}) (\Delta w, \Delta b)^\top \geq \lambda_{\min} \left(|\Delta w|^2 + |\Delta b|^2\right)
	\end{equation}
	where $\lambda_{\min} > 0$ is the minimum eigen-value of $H(\underline{\tilde{\Theta}}, \tilde{\mathbf{x}},\tilde{\mathbf{y}})$. Following a similar analysis as in \eqref{eq::infi_analysis_sigmoid1}-\eqref{eq::perturb_dir1}, there exists $\delta_2 > 0$ such that for any $\underline{\tilde{\Theta}}' \in B(\underline{\tilde{\Theta}},\delta_2)$ and $(\Delta w, \Delta b) \not= (0,0)$, we have
	\begin{equation}
	\left\langle \Delta \underline{\tilde{\mathbf{y}}},\underline{\tilde{\mathbf{t}}}' - \underline{\tilde{\mathbf{t}}} \right\rangle\geq  \frac{\lambda_{\min}}{4}\left(|\Delta w|^2 + |\Delta b|^2\right) > 0.
	\end{equation}
	If $(\Delta w, \Delta b) = (0,0)$, from \eqref{eq::gradient_perturb}, we have 
	\begin{equation}
		\left\langle \Delta \underline{\tilde{\mathbf{y}}},\underline{\tilde{\mathbf{t}}}' - \underline{\tilde{\mathbf{t}}} \right\rangle =  \Delta v  \langle  \Delta \underline{\tilde{\mathbf{y}}},  \sigma(\underline{\tilde{\mathbf{z}}})\rangle  = 0.
	\end{equation}
	We conclude that for any $\underline{\tilde{\Theta}}' \in B(\underline{\tilde{\Theta}},\delta_2)$, we have
	\begin{equation}		
	\label{eq::localmin_property_1hidden}
	\left\langle \Delta \underline{\tilde{\mathbf{y}}},\underline{\tilde{\mathbf{t}}}' - \underline{\tilde{\mathbf{t}}} \right\rangle  = \left\langle \Delta \underline{\tilde{\mathbf{y}}},\underline{\tilde{v}}'\sigma(\underline{\tilde{\mathbf{z}}}') - \underline{\tilde{v}}\sigma(\underline{\tilde{\mathbf{z}}})\right\rangle 
	\geq 0.
	\end{equation}
	
	\subsection{Local-Min for Perturbed Wide Network}
	Given a perturbed training data $(\tilde{\mathbf{x}}, \tilde{\mathbf{y}})$ and the local minimum $\underline{\tilde{\Theta}}$ for the 1-neuron network constructed in the last subsection, what remains is to find a sub-optimal local minimum for the original wide network with the perturbed data. 
	
	We construct the weight configuration $\tilde{\Theta} = (\tilde{\mathbf{v}}, \tilde{\mathbf{w}}$, $\tilde{\mathbf{b}})$ for the perturbed wide network as
	\begin{subequations}
		\begin{equation}
		\tilde{\mathbf{v}} = \underline{\tilde{v}} \cdot \mathbf{q}, \quad
		\tilde{\mathbf{w}} = \underline{\tilde{w}}  \cdot \mathbf{1}_{d_1},\quad 
		\tilde{\mathbf{b}} = \underline{\tilde{b}} \cdot \mathbf{1}_{d_1} 
		\end{equation}
		where $\mathbf{q} \in \mathbb{R}^{d_1}$ is an arbitrary vector satisfying 
		\begin{equation}
		\sum^{d_1}_{i=1} q_i = 1, \quad 0 <q_i < 1, \quad  i= 1,2,\cdots, d_1.
		\end{equation}
	\end{subequations}
	This implies $\tilde{v}_i \underline{\tilde{v}} > 0$ for all $i= 1,2\cdots, d_1$. We show that such $\tilde{\Theta}$ is a local minimum of $\tilde{E}$. First, we note that
	\begin{equation}
	\tilde{\mathbf{z}}_i = \underline{\tilde{w}} \cdot \tilde{\mathbf{x}} + \underline{\tilde{b}} \cdot \mathbf{1}_N =  \underline{\tilde{\mathbf{z}}}, \quad \!\! i=1, 2,\cdots,d_1. 
	\end{equation}
	That is, the input vector to each hidden neuron is the same with that of the perturbed 1-neuron network. Further,  
	\begin{equation}
	\Delta \tilde{\mathbf{y}} \triangleq \sum^{d_1}_{i=1} \tilde{v}_i \sigma(\tilde{\mathbf{z}}_i)- \tilde{\mathbf{y}} = \underline{\tilde{v}}\sigma(\underline{\tilde{\mathbf{z}}})- \tilde{\mathbf{y}} = \Delta \underline{\tilde{\mathbf{y}}}.
	\end{equation}
	That is, for the perturbed wide network, the network output (as well as the difference between the output and the training data) is also the same with that of the perturbed 1-neuron network. From \eqref{eq::localmin_property_1hidden}, for each $1\leq i\leq d_1$, there exists $\delta_{3,i}$ such that for any $\tilde{\Theta}' \in B(\tilde{\Theta},\delta_{3,i})$, we have
	\begin{align}
	\left\langle \Delta \tilde{\mathbf{y}}, \tilde{v}_i'\sigma(\tilde{\mathbf{z}}'_i) -  \tilde{v}_i \sigma(\tilde{\mathbf{z}}_i) \right\rangle = q_i \left\langle \Delta \underline{\tilde{\mathbf{y}}},\frac{\tilde{v}_i'}{q_i}\sigma(\tilde{\mathbf{z}}_i') -  \underline{\tilde{v}} \sigma(\underline{\tilde{\mathbf{z}}}) \right\rangle \geq 0.
	\end{align}
	That is, due to weight perturbation, the difference of the input from the $i$-th hidden neuron to the output layer always has non-negative inner product with $\Delta \tilde{\mathbf{y}}$. Let
	\begin{equation}
	\delta_3 = \min_{1\leq i\leq d_1}\delta_{3,i}.
	\end{equation}
	We have that for any $\tilde{\Theta}' \in B(\tilde{\Theta},\delta_3)$,
	\begin{subequations}
	\begin{align}
	\left\langle \Delta \tilde{\mathbf{y}}, \tilde{\mathbf{t}}' -\tilde{\mathbf{t}} \right\rangle& = 
	\left\langle \Delta \tilde{\mathbf{y}}, \sum^{d_1}_{i=1} \tilde{v}'_i \sigma(\tilde{\mathbf{z}}_i') - \sum^{d_1}_{i=1} \tilde{v}_i \sigma(\tilde{\mathbf{z}}_i) \right\rangle \\
	&=  \sum^{d_1}_{i=1} \left\langle \Delta \tilde{\mathbf{y}}, \tilde{v}'_i\sigma(\tilde{\mathbf{z}}'_i) - \tilde{v}_i \sigma(\tilde{\mathbf{z}}_i)\right\rangle \\
	&\geq 0
	\end{align}
	\end{subequations}
	This implies
	\begin{equation}
	\tilde{E}(\tilde{\Theta}')-\tilde{E}(\tilde{\Theta})  = 2\left\langle \Delta \tilde{\mathbf{y}},\tilde{\mathbf{t}}' - \tilde{\mathbf{t}} \right\rangle + \left\|\tilde{\mathbf{t}}' - \tilde{\mathbf{t}} \right\|^2_2 \geq 0.
	\end{equation}
	Thus, $\tilde{\Theta}$ is a local minimum of $\tilde{E}$. Recall that the sub-optimal local minimum of the original wide network has non-zero empirical loss. We can assume that the range of data perturbation is sufficiently small such that $\tilde{E}(\tilde{\Theta})>0$. Finally, note that $\tilde{\mathbf{x}}$ has distinct entries, from the results in \cite{li2018over}, the network is realizable, i.e., $\min_{\tilde{\Theta}} \tilde{E}(\tilde{\Theta}) = 0$ regardless of $\tilde{\mathbf{y}}$. Thus, $\tilde{\Theta}$ is a sub-optimal local minimum of $\tilde{E}$. We complete the proof. 
	
	
	\ifsmalldata
	\section{Proof of Theorem \ref{thm::bad_localmin_1neuron}}
	Before we start the proof, note that $\sigma(t)\equiv0$ does not satisfy the condition, so there must exist some $t$ such that $\sigma(t)\neq 0$. Since $x\neq 0$, let $w=t/x$, $b=0$ and $v=y/\sigma(t)$, then $\hat{y}=v\sigma(wx+b)=v\sigma(t)=y$ and 
	$E(\Theta)=0$.
	This implies that the network is always realizable.
	
	We first prove the sufficiency of the condition. Due to the realizability of the network, we now only need to show that all $\Theta$ such that $\hat{y}\neq y$ are not local minima. Consider any $\Theta=(v, w, b)$ such that $\hat{y}=v\sigma(wx+b)\neq y$. Note that 
	$E(\Theta)=(y-\hat{y})^2=(y-v\sigma(wx+b))^2$ is convex in $v$.
	
	\begin{itemize}
		\item If $\sigma(wx+b)\neq0$, then there is a strict decreasing path from $(v, w, b)$ to $(v', w, b)$ where $v'=y/\sigma(wx+b)$, so $\Theta=(v, w, b)$ is not a local minimum. 
		\item If $\sigma(wx+b)=0$, then $\hat{y}=v\sigma(wx+b)=0$. Since $\Theta$ is not a global minimum, $y\neq\hat{y}=0$. Due to the condition, $wx+b$ is neither a local maximum nor local minimum of $\sigma$. Therefore, for any $\delta>0$, there exists $b_1, b_2\in B(b, \delta)$ such that $\sigma(wx+b_1)>0, \sigma(wx+b_2)<0$. Furthermore, for any $\delta>0$, there exists $v'\in B(v, \delta)$ such that $v'\neq0$. Thus there is exactly one positive and one negative value in $v'\sigma(wx+b_1)$ and $v'\sigma(wx+b_2)$. Take the one with the same sign as $y$, we obtain a smaller objective value in the neighborhood $B((v, w, b), 2\delta)$. This means that $(v, w, b)$ is not a local minimum. 
	\end{itemize}
	
	Combining the two cases above we finish the sufficiency part of the proof.
	
	For the necessity part, we construct a sub-optimal local minima when the condition does not hold. Without losing generality, assume that $t_0$ is a local minimum and $\sigma(t_0)=0$. Take $w_0=t_0/x, b_0=0$ and any $v_0$ such that $v_0y<0$. Then, $\hat{y}_0=v_0\sigma(t_0)=0$. On the other hand, for any $(v, w, b)$ in the neighborhood of $(v_0, w_0, b_0)$, $\sigma(wx+b)\geq0$ since $t_0$ is a local minimum of $\sigma$. Moreover, since $v$ has the same sign with $v_0$, we have $v\sigma(wx+b)\cdot y\leq0$. 
	Therefore, $(y-v\sigma(wx+b))^2\geq y^2=(y-\hat{y})^2$ for any $(v, w, b)$ in the neighborhood of $(v_0, w_0, b_0)$, which means that $\Theta_0=(v_0, w_0, b_0)$ is a local minimum. Moreover, since $y\neq 0 =\hat{y}_0$, it is a sub-optimal local minimum. The necessity part is completed.
	
	Therefore the condition above is a necessary and sufficient condition for non-existence of sub-optimal local minima in the single-neuron networks.


	\section{Proof of Theorem \ref{thm::bad_localmin_2neuron}}
	For simplicity, we denote the data samples by $\mathbf{x}=(x_1, x_2)^\top, \mathbf{y}=(y_1, y_2)^\top$ and the weight parameters by $\Theta = (\mathbf{w},\mathbf{v},\mathbf{b})$ where $\mathbf{w}=(w_1, w_2)^\top, \mathbf{v}=(v_1, v_2)^\top, \mathbf{b}=(b_1, b_2)^\top$. Although in this paper we only consider quadratic loss, Theorem \ref{thm::bad_localmin_2neuron} also holds for any convex and differentiable loss function. Hence we provide a more general proof here. Specifically, for given data $(\mathbf{x}, \mathbf{y})$, the empirical loss is given by
	\begin{equation}
	E(\Theta) = L(\mathbf{\hat{y}}(\Theta))
	\end{equation}
	where $L: \mathbb{R}^N \rightarrow \mathbb{R}$ is a convex and differentiable function.
	
	We first present a useful lemma.
	\begin{lemma}
		\label{lemma::decrease}
		Consider a convex and differentiable function $L(\cdot)$. Suppose that $\mathbf{\hat{y}}$ is not a global minimum of $L(\cdot)$. Then for any $\epsilon > 0$ there exists $\delta>0$, such that for any $\mathbf{\hat{y}}' \in B_o(\mathbf{\hat{y}}, \delta)$ and $\langle \mathbf{\hat{y}}' - \mathbf{\hat{y}}, - \nabla L(\mathbf{\hat{y}})\rangle > \epsilon \|\mathbf{\hat{y}}'- \mathbf{\hat{y}}\|_2$, we have $L(\mathbf{\hat{y}}') < L(\mathbf{\hat{y}})$.
	\end{lemma}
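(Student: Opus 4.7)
The plan is to exploit the first-order Taylor expansion of $L$ at $\hat{\mathbf{y}}$ together with the fact that convexity forces $\nabla L(\hat{\mathbf{y}}) \neq \mathbf{0}$ whenever $\hat{\mathbf{y}}$ is not a global minimum. The hypothesis on the inner product gives a strictly negative first-order decrease, which will dominate the higher-order remainder once $\delta$ is chosen small enough.

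First, I would observe that since $L$ is convex and differentiable, any critical point is a global minimum. Because $\hat{\mathbf{y}}$ is not a global minimum, this yields $\nabla L(\hat{\mathbf{y}}) \neq \mathbf{0}$, so the vector $-\nabla L(\hat{\mathbf{y}})$ used in the statement is nontrivial (otherwise the hypothesis $\langle \hat{\mathbf{y}}' - \hat{\mathbf{y}}, -\nabla L(\hat{\mathbf{y}})\rangle > \epsilon \|\hat{\mathbf{y}}' - \hat{\mathbf{y}}\|_2$ could never be satisfied for $\epsilon > 0$).

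Next, I would apply the first-order expansion of $L$ at $\hat{\mathbf{y}}$:
\begin{equation*}
L(\hat{\mathbf{y}}') = L(\hat{\mathbf{y}}) + \langle \nabla L(\hat{\mathbf{y}}), \hat{\mathbf{y}}' - \hat{\mathbf{y}} \rangle + r(\hat{\mathbf{y}}'),
\end{equation*}
where the remainder satisfies $r(\hat{\mathbf{y}}')/\|\hat{\mathbf{y}}' - \hat{\mathbf{y}}\|_2 \to 0$ as $\hat{\mathbf{y}}' \to \hat{\mathbf{y}}$. Rewriting,
\begin{equation*}
L(\hat{\mathbf{y}}') - L(\hat{\mathbf{y}}) = -\langle \hat{\mathbf{y}}' - \hat{\mathbf{y}}, -\nabla L(\hat{\mathbf{y}}) \rangle + r(\hat{\mathbf{y}}').
\end{equation*}
Given the assumed lower bound $\langle \hat{\mathbf{y}}' - \hat{\mathbf{y}}, -\nabla L(\hat{\mathbf{y}})\rangle > \epsilon \|\hat{\mathbf{y}}' - \hat{\mathbf{y}}\|_2$, the first-order term is strictly less than $-\epsilon \|\hat{\mathbf{y}}' - \hat{\mathbf{y}}\|_2$.

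Finally, I would pick $\delta>0$ small enough that $|r(\hat{\mathbf{y}}')| < (\epsilon/2) \|\hat{\mathbf{y}}' - \hat{\mathbf{y}}\|_2$ for every $\hat{\mathbf{y}}' \in B_o(\hat{\mathbf{y}}, \delta) \setminus \{\hat{\mathbf{y}}\}$, which is possible by differentiability. Combining the two estimates gives $L(\hat{\mathbf{y}}') - L(\hat{\mathbf{y}}) < -(\epsilon/2)\|\hat{\mathbf{y}}' - \hat{\mathbf{y}}\|_2 < 0$, as desired. The argument is essentially routine once the correct $\delta$ is chosen; the only subtle point is invoking convexity solely to rule out $\nabla L(\hat{\mathbf{y}}) = \mathbf{0}$, so that the hypothesis on the inner product is not vacuous.
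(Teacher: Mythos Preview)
Your proposal is correct and follows essentially the same approach as the paper: both use the first-order Taylor expansion at $\hat{\mathbf{y}}$, invoke convexity only to ensure $\nabla L(\hat{\mathbf{y}})\neq\mathbf{0}$, and choose $\delta$ so that the remainder is dominated by the linear term. The only cosmetic difference is that the paper bounds the remainder by $\epsilon\|\hat{\mathbf{y}}'-\hat{\mathbf{y}}\|_2$ rather than your $(\epsilon/2)\|\hat{\mathbf{y}}'-\hat{\mathbf{y}}\|_2$, which is immaterial.
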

	
	\begin{proof}
		Note that $\nabla L(\mathbf{\hat{y}}) \not= \mathbf{0}$ since $\mathbf{\hat{y}}$ is not a global minimum. By Taylor expansion of $L(\mathbf{\hat{y}})$ we have
		\begin{equation}
		L(\mathbf{\hat{y}}') = L(\mathbf{\hat{y}}) + \langle \mathbf{\hat{y}}' - \mathbf{\hat{y}}, \nabla L(\mathbf{\hat{y}})\rangle + o(\|\mathbf{\hat{y}}'-\mathbf{\hat{y}}\|_2).
		\end{equation}
		For any $\epsilon>0$, there exists $\delta>0$ such that $|o(\|\mathbf{\hat{y}}'-\mathbf{\hat{y}}\|_2)| < \epsilon\|\mathbf{\hat{y}}'-\mathbf{\hat{y}}\|_2$ for any $\mathbf{\hat{y}}' \in B_o(\mathbf{\hat{y}}, \delta)$. Then for any $\mathbf{\hat{y}}' \in B_o(\mathbf{\hat{y}}, \delta)$ and $\langle \mathbf{\hat{y}}' - \mathbf{\hat{y}}, - \nabla L(\mathbf{\hat{y}})\rangle > \epsilon \|\mathbf{\hat{y}}'- \mathbf{\hat{y}}\|_2$, we have
		\begin{equation}
		\langle \mathbf{\hat{y}}' - \mathbf{\hat{y}}, \nabla L(\mathbf{\hat{y}})\rangle  < - \epsilon \|\mathbf{\hat{y}}' - \mathbf{\hat{y}}\|< - o(||\mathbf{\hat{y}}'-\mathbf{\hat{y}}||_2)
		\end{equation}
		and therefore
		\begin{equation}
		L(\mathbf{\hat{y}}') - L(\mathbf{\hat{y}}) < - \epsilon \|\mathbf{\hat{y}}'-\mathbf{\hat{y}}\|_2 + o(\|\mathbf{\hat{y}}'-\mathbf{\hat{y}}\|_2) < 0.
		\end{equation}
		We complete the proof.
	\end{proof}
	
	Consider a weight parameter $\Theta$ that is not a global minimum. This implies that the corresponding $\mathbf{\hat{y}}$ is not a global minimum of the loss function $L(\cdot)$, and $\nabla L(\mathbf{\hat{y}}) \not = \mathbf{0}$. In what follows, we show that there exists a perturbation of $\Theta$, which can be made arbitrarily small, such that the empirical loss decreases. This implies that $\Theta$ cannot be a local minimum. Specifically, for any $\epsilon > 0$, we prove that there exists $\Theta' = (\mathbf{w}',\mathbf{v}',\mathbf{b}') \in B_o(\Theta, \epsilon)$ such that $E(\Theta') < E(\Theta)$.
	
	Denote
	\begin{equation}
	Z = \sigma(\mathbf{w} \mathbf{x}^\top+\mathbf{b}\mathbf{1}^\top) \in \mathbb{R}^{2 \times 2}.
	\end{equation}
	Let $\mathbf{z_1}^\top$ and $\mathbf{z_2}^\top$ be the first and the second rows of $Z$, respectively.
	
	If $\mathbf{v} =  \mathbf{0}$, then $\mathbf{\hat{y}} = \mathbf{0}$. In this case any perturbation of $\mathbf{w}$ and $\mathbf{b}$ will not change $\mathbf{\hat{y}}$, and hence will not change the empirical loss. Note that the considered network is over-parameterized with $\sigma(0), \sigma'(0) \not=0$. Following the conclusion in \cite{li2018over}, there exists a perturbation of $(\mathbf{w}, \mathbf{b})$, which can be made arbitrarily small, such that there exists a strictly decreasing path from the perturbed point to the global minimum of the loss function, i.e., zero empirical loss. This implies that there exits $\Theta' \in B_o(\Theta, \epsilon)$ such that $E(\Theta') < E(\Theta)$.
	
	If $\mathbf{v} \not =  \mathbf{0}$, without loss of generality we assume $v_1 \not = 0$. Regarding the direction of $\mathbf{z_1}$, we discuss the following two cases. \newline
	
	\textbf{Case 1}: $\langle \mathbf{z_1}, \nabla L(\mathbf{\hat{y}}) \rangle \not = 0$. In this case we can achieve a smaller empirical loss by only perturbing $\mathbf{v}$. Let $a = \langle \mathbf{z_1}, \nabla L(\mathbf{\hat{y}})\rangle$ and $\mathbf{v}' = (v_1- \lambda \text{sign}(a), v_2)^\top$, then
	\begin{equation}
	\mathbf{\hat{y}}' - \mathbf{\hat{y}} = - \lambda \text{sign}(a) \mathbf{z_1}^\top .
	\end{equation}
	By Lemma \ref{lemma::decrease}, there exists $\delta>0$ such that for any $\mathbf{\hat{y}}' \in B_o(\mathbf{\hat{y}}, \delta)$, if
	\begin{equation}
	\label{eq::case1_deltay}
	\langle (\mathbf{\hat{y}}' - \mathbf{\hat{y}})^\top, - \nabla L(\mathbf{\hat{y}})\rangle > \frac{|a|}{2\|\mathbf{z_1}\|_2} \|\mathbf{\hat{y}}- \mathbf{\hat{y}}\|_2
	\end{equation}
	then $L(\mathbf{\hat{y}}') < L(\mathbf{\hat{y}})$. 
	
	By letting 
	\begin{equation}
	\lambda < \min\left\{\frac{\delta}{\|\mathbf{z_1}\|_2},\epsilon\right\}
	\end{equation}
	we have $ ||\mathbf{\hat{y}}'- \mathbf{\hat{y}}||_2 = \lambda \|\mathbf{z_1}\|_2< \delta$, and
	\begin{subequations}
		\begin{align}
		\langle (\mathbf{\hat{y}}' - \mathbf{\hat{y}})^\top, -\nabla L(\mathbf{\hat{y}}) \rangle &= \lambda \text{sign}(a) \langle \mathbf{z_1}, \nabla L(\mathbf{\hat{y}})\rangle \\
		& = \lambda |a| \\
		& = \frac{|a|}{\|\mathbf{z_1}\|_2} \|\mathbf{\hat{y}}'- \mathbf{\hat{y}}\|_2\\
		& >  \frac{|a|}{2\|\mathbf{z_1}\|_2} \|\mathbf{\hat{y}}'- \mathbf{\hat{y}}\|_2.
		\end{align}
	\end{subequations}
	Then $L(\mathbf{\hat{y}}') < L(\mathbf{\hat{y}})$. Note that $\|\mathbf{v}' - \mathbf{v}\|_2 = \lambda < \epsilon$, and hence the perturbation is within $B_o(\Theta, \epsilon)$.

	\textbf{Case 2}: $\langle \mathbf{z_1}, \nabla L(\mathbf{\hat{y}}) \rangle = 0$. In this case we show that we can decrease the empirical loss by only perturbing $\mathbf{w}$ or $\mathbf{b}$. Define
	\begin{equation}
	\begin{aligned}
	\partial_{w_1} \mathbf{z_1} &= (x_1\sigma'(w_1x_1+b_1), x_2\sigma'(w_1x_2+b_1) ) ^ \top \in \mathbb{R}^{2\times 1},\\
	\partial_{b_1} \mathbf{z_1} &= (\sigma'(w_1x_1+b_1), \sigma'(w_1x_2+b_1) ) ^ \top \in \mathbb{R}^{2\times 1}.
	\end{aligned}
	\end{equation}
	We first show that at least one of $\langle \partial_{w_1} \mathbf{z_1}, \nabla L(\mathbf{\hat{y}}) \rangle$ and $\langle \partial_{b_1} \mathbf{z_1}, \nabla L(\mathbf{\hat{y}}) \rangle$ is not $0$. Note that $x_1 \not= x_2$ and that $\sigma'(w_1x_1+b_1), \sigma'(w_1x_2+b_1) \not = 0$ from Assumption \ref{ass::activation3}. Hence $\partial_{w_1} \mathbf{z_1}$ and $\partial_{b_1} \mathbf{z_1}$ are linearly independent in $\mathbb{R}^2$. Therefore, since $\nabla L(\mathbf{\hat{y}})\neq \mathbf{0}$, $\langle \partial_{w_1} \mathbf{z_1}, \nabla L(\mathbf{\hat{y}}) \rangle$ and $\langle \partial_{b_1} \mathbf{z_1}, \nabla L(\mathbf{\hat{y}}) \rangle$ cannot be $0$ at the same time.
	
	Without loss of generality, assume that $\partial_{w_1} \mathbf{z_1}\neq0$. Denote
	\begin{equation}
	a = \langle \partial_{w_1} \mathbf{z_1}, \nabla L(\mathbf{\hat{y}}) \rangle, \quad b = \langle \mathbf{x}, \nabla L(\mathbf{\hat{y}}) \rangle.
	\end{equation}
	Let $\mathbf{w}' = (w_1+ \Delta w_1, w_2)^\top$, $\mathbf{v}' = \mathbf{v}$ and $\mathbf{b}' = \mathbf{b}$ where $\Delta w_1 \not = 0$. We have
	\begin{subequations}
		\label{eq::case2_deltay}
		\begin{align}
		(\mathbf{\hat{y}}' - \mathbf{\hat{y}})^\top \nonumber
		= v_1(& \sigma(w_1x_1 + \Delta w_1x_1+b_1) - \sigma(w_1x_1+b_1) , \\   &\sigma(w_1x_2 + \Delta w_1x_2+b_1)- \sigma(w_1x_2+b_1))\\
		= v_1 (&\Delta w_1x_1 \sigma'(w_1x_1+b_1) + o(\Delta w_1 x_1), \nonumber\\
		&\Delta w_1 x_2 \sigma'(w_1x_2+b_1) + o(\Delta w_1 x_2)) \\
		= v_1& \Delta w_1 \partial_{w_1} \mathbf{z_1} ^\top + o(\Delta w_1)\mathbf{x}^\top
		\end{align}
	\end{subequations}
	and
	\begin{equation}
	\label{eq::case2_innerproduct}
	\langle  (\mathbf{\hat{y}}' - \mathbf{\hat{y}})^\top, \nabla L(\mathbf{\hat{y}})\rangle = v_1 \Delta w_1 a + o(\Delta w_1) b.
	\end{equation}
	Note that $v_1, a, \Delta w_1, \|\partial_{w_1} \mathbf{z_1}\|_2 \not =0$. From \eqref{eq::case2_deltay}, there exists $\delta_1$ such that
	\begin{subequations}
		\label{eq::case2_ineq1}
		\begin{align}
		\|\mathbf{\hat{y}}' - \mathbf{\hat{y}}\|_2 &\leq \|v_1 \Delta w_1 \partial_{w_1} \mathbf{z_1} \|_2 + \|o(\Delta w_1)x\|_2  \\
		&<  2|v_1 \Delta w_1| \cdot \|\partial_{w_1} \mathbf{z_1} \|_2
		\end{align}
	\end{subequations}
	for any $|\Delta w_1|< \delta_1$. Next, from \eqref{eq::case2_innerproduct} there exists an $\delta_2$ such that
	\begin{subequations}
		\begin{align}
		\label{eq::case2_ineq2}
		|\langle  (\mathbf{\hat{y}}' - \mathbf{\hat{y}})^\top, \nabla L(\mathbf{\hat{y}})\rangle| &\geq |v_1 \Delta w_1 a| - |o(\Delta w_1) b|\\
		&> \frac{1}{2}|v_1 \Delta w_1 a|
		\end{align}
	\end{subequations}
	and
	\begin{equation}
	\label{eq::case2_signeq}
	\text{sign}\left( \langle  (\mathbf{\hat{y}}' - \mathbf{\hat{y}})^\top, \nabla L(\mathbf{\hat{y}})\rangle\right)  = \text{sign}\left(v_1 \Delta w_1 a\right)
	\end{equation}
	for any $|\Delta w_1| < \delta_2$.
	
	By Lemma \ref{lemma::decrease}, there exists $\delta_3>0$ such that for any $\mathbf{\hat{y}}' \in B_o(\mathbf{\hat{y}}, \delta_3)$, if
	\begin{equation}
	\langle (\mathbf{\hat{y}}' - \mathbf{\hat{y}})^\top, - \nabla L(\mathbf{\hat{y}})\rangle > \frac{|a|}{4\|\partial_{w_1} \mathbf{z_1}\|_2} \|\mathbf{\hat{y}}'- \mathbf{\hat{y}}\|_2
	\end{equation}
	then $L(\mathbf{\hat{y}}') < L(\mathbf{\hat{y}})$. 
	
	Now we let $\Delta w_1 = - \lambda \text{sign}(v_1a)$ where
	\begin{equation}
	0 < \lambda < \min\left\{ \epsilon, \delta_1 , \delta_2, \frac{\delta_3}{2|v_1|\cdot\|\partial_{w_1} \mathbf{z_1}\|_2}\right\}.
	\end{equation}
	First, we have $\|\mathbf{w}'-\mathbf{w}\|_2 = |\Delta w_1| = \lambda< \epsilon$, so the perturbation is within $B_o(\Theta, \epsilon)$. Second, as $|\Delta w_1| < \delta_1$, \eqref{eq::case2_ineq1} holds, yielding
	\begin{subequations}
		\label{eq::case2_lemmacon1}
		\begin{align}
		\|\mathbf{\hat{y}}' - \mathbf{\hat{y}}\|_2 &<  2|v_1 \Delta w_1| \cdot \|\partial_{w_1} \mathbf{z_1} \|_2 \\
		\label{eq::case1_normineq}
		&= 2 \lambda |v_1| \cdot \|\partial_{w_1} \mathbf{z_1} \|_2\\
		&< \delta_3.
		\end{align}
	\end{subequations}
	Third, as $|\Delta w_1| < \delta_2$, \eqref{eq::case2_ineq2} and \eqref{eq::case2_signeq} hold, yielding 
	\begin{subequations}
		\label{eq::case2_lemmacon2}
		\begin{align}
		\langle  (\mathbf{\hat{y}}' - \mathbf{\hat{y}})^\top,  - \nabla L(\mathbf{\hat{y}})\rangle &= \lambda |v_1 a| + o(\Delta w_1) b\\
		&> \frac{\lambda}{2}|v_1 a| \\
		\label{eq::case2_normineq2}
		& > \frac{|a|}{4\|\partial_{w_1} \mathbf{z_1} \|_2} \|\mathbf{\hat{y}}'- \mathbf{\hat{y}}\|_2
		\end{align}
	\end{subequations}
	where \eqref{eq::case2_normineq2} follows from \eqref{eq::case1_normineq}. Combining \eqref{eq::case2_lemmacon1} and \eqref{eq::case2_lemmacon2}, we have $L(\mathbf{\hat{y}}') < L(\mathbf{\hat{y}})$. We complete the proof.
	
\fi 	
	
	\section{Proof of Lemma \ref{lemma::dim_requirement}}
	Since $\mathbf{a}_1, \mathbf{a}_2, \cdots, \mathbf{a}_{L_1} \in\mathbb{R}^{L_2}$ are also linearly independent, we can add $L_2 - L_1$ vectors $\mathbf{a}_{L_1 + 1}, \mathbf{a}_{L_1 + 2}, \cdots, \mathbf{a}_{L_2} \in \mathbb{R}^{L_2}$, such that $\mathbf{a}_1, \mathbf{a}_2, \cdots, \mathbf{a}_{L_2}$ are linearly independent. We can then construct a matrix
	\begin{equation}
	 A =[\mathbf{a}_1, \mathbf{a}_2, \cdots, \mathbf{a}_{L_2}]  \in \mathbb{R}^{L_2 \times L_2}.
	\end{equation}
	Clearly, $A$ is of full rank, and hence admits an inverse $A^{-1}\in \mathbb{R}^{L_2 \times L_2}$. Denote
	\begin{equation}
	\mathbf{a}'_l \triangleq \left(A^{-1}\right)_{(l,:)} \in \mathbb{R}^{L_2}. 
	\end{equation}
	Then, the set of vectors $\{\mathbf{a}'_l| l = 1, 2,\cdots, L_2 \}$ are also linearly independent, and we have
	\begin{equation}
		\label{eq::lemma4_inner_product}
		\langle  \mathbf{a}'_l, \mathbf{a}_l \rangle = 
		\begin{cases}
		1, \quad& l = l'\\
		0, \quad & l \not= l'
		\end{cases}
	\end{equation}
	for all $1\leq l,l' \leq L_2$. Let $Q \in \mathbb{R}^{(L_2 - d) \times (L_2 - d)}$ be an arbitrary full-rank matrix with all-positive entries, i.e., 
	\begin{equation}
	q_{i,j} > 0 , \quad  1 \leq i,j \leq L_2-d.
	\end{equation}
	Finally, we set
	\begin{equation}
	\mathbf{u}_l = \sum^{L_2-d}_{i=1}q_{i,l} \cdot \mathbf{a}'_{d + i}, \quad l = 1,2,\cdots, L_2 - d.
	\end{equation}
	
	First, we claim that $\mathbf{u}_1, \mathbf{u}_2, \cdots, \mathbf{u}_{L_2-d}$ are linearly independent. Otherwise we can find $\alpha_1, \alpha_2, \cdots, \alpha_{L-d_2}$ that are not all-zero, such that
	\begin{equation}
	\sum^{L_2-d}_{l=1} \alpha_l \cdot \mathbf{u}_l =   \sum^{L_2-d}_{i=1} \sum^{L_2-d}_{l=1} \alpha_l q_{i,l}  \cdot \mathbf{a}'_{d + i} = \mathbf{0}
	\end{equation}
	As $\mathbf{a}'_{d+i}$'s are linearly independent, we must have
	\begin{equation}
	\sum^{L_2-d}_{l=1} \alpha_l q_{i,l} = 0
	\end{equation}
	for all $1\leq i \leq L_2-d$. This is equivalent to
	\begin{equation}
	\sum^{L_2-d}_{l=1} \alpha_l Q_{(i,:)} = \mathbf{0},
	\end{equation}
	a contradiction to the fact that $Q$ is of full rank. Thus, $\mathbf{u}_1, \mathbf{u}_2, \cdots, \mathbf{u}_{L_2-d}$ must be linearly independent.
	
	Second, from \eqref{eq::lemma4_inner_product}, for $1\leq j \leq d$ we have
	\begin{subequations}
		\begin{equation}
		\langle  \mathbf{u}_l, \mathbf{a}_j \rangle = \sum^{L_2-d}_{i=1}q_{i,l} \langle \mathbf{a}'_{d + i}, \mathbf{a}_j \rangle = 0
		\end{equation}
		while for $d+ 1\leq j \leq L_1$ we have
		\begin{equation}
		\langle  \mathbf{u}_l, \mathbf{a}_j \rangle = \sum^{L_2-d}_{i=1}q_{i,l} \langle \mathbf{a}'_{d + i}, \mathbf{a}_j \rangle  = q_{j-d,l} > 0 
		\end{equation}
	\end{subequations}
	which validates \eqref{eq::lemma4_origin} and completes the proof.
		
\section{Proof of Lemma \ref{lemma::perturb_direction}}
To ease notation, we define
\begin{equation}
\Delta T_h = T_h\left(\Theta_{[1:h]}'\right) - T_h\left(\tilde{\Theta}_{[1:h]}'\right)
\end{equation}
for $h=1,2,\cdots, H$. We proof Lemma \ref{lemma::perturb_direction} by induction. For each hidden layer, we first construct $\Theta_{[1:h]}$ satisfying \eqref{eq::proof_thm1_localmin_construct}, and then determine the corresponding $\gamma^{(h)}$ and $\delta^{(h)}$. Notice that once \eqref{eq::proof_thm1_localmin_construct} is satisfied, the input to every hidden-layer neuron is $a$. In our proof, after each $\Theta_{[1:h]}$ is determined, we always assume that $\delta^{(h)}$ is sufficiently small such that the input to neuron (up to the $h$-th hidden layer) is within $[a-\delta, a+\delta]$, and hence the activation is twice-differentiable.

\subsection{First Hidden Layer}
For the first hidden layer, we directly set $W_1$ and $\mathbf{b}_1$ as in \eqref{eq::proof_thm1_localmin_construct1} and \eqref{eq::proof_thm1_localmin_construct1}. Then, we have
	\begin{subequations}
		\label{eq::perturb_component}
		\begin{align}
		\label{eq::first_layer_1_1}
		\left(\Delta T_1\right)_{(j,:)} = &\sigma \left(X^\top (W_1')_{(j,:)}+b'_{1,j}\mathbf{1}_N\right) - \sigma\left(b'_{1,j}\cdot \mathbf{1}_N\right)  \\
		= &\sigma'\left(b'_{1,j}\right)\cdot \left[X^\top (W_1')_{(j,:)}\right] + \frac{1}{2} \sigma''(b'_{1,j})\cdot \left[X^\top (W_1')_{(j,:)}\right] \circ \left[X^\top (W_1')_{(j,:)}\right] \nonumber\\ 
		\label{eq::first_layer_1_2}
		&+\mathbf{o}\left(\left\|X^\top (W_1')_{(j,:)}\right\|^2_2\right) 
		\end{align}
	\end{subequations}
	where \eqref{eq::first_layer_1_1} follows from $W_1 = \mathbf{0}$, and \eqref{eq::first_layer_1_2} is obtained by performing Taylor expansion at $(Z_1)_{(j,:)}(\tilde{\Theta}')$. Here $\mathbf{o}(\cdot)$ denotes an infinitesimal vector with
	\begin{equation}
	\lim_{ t \rightarrow 0} \frac{\|\mathbf{o}(t)\|_2}{|t|} = 0.
	\end{equation}
	From \eqref{eq::innerX}, we have
	\begin{equation}
		\label{eq::inner_product_component1}
		\left\langle \Delta Y_{(i,:)},  X^\top (W_1')_{(j,:)} \right\rangle =  \sum^{d_0}_{k=1}w'_{1,j,k} \left\langle\Delta Y_{(i,:)},X_{(k,:)}\right\rangle = 0
	\end{equation}
	and from \eqref{eq::innerXXdiff}, we have 
	\begin{subequations}
	\label{eq::inner_product_component2}
	\begin{align}
		& \left\langle \Delta Y_{(i,:)}, \left[X^\top (W_1')_{(j,:)}\right] \circ \left[X^\top (W_1')_{(j,:)}\right] \right\rangle \nonumber \\
		= & \left\langle \Delta Y_{(i,:)}, \left(\sum^{d_0}_{k=1}w'_{1,j,k} X_{(k,:)}\right) \circ
		\left(\sum^{d_0}_{k=1}w'_{1,j,k} X_{(k,:)}\right)  \right\rangle
		\\
		=&  \sum^{d_0}_{k=1} (w'_{1,j,k})^2
		\langle \Delta Y_{(i,:)}, X_{(k,:)}\circ X_{(k,:)}\rangle  \nonumber\\
		&+ 2\sum^{d_0}_{k=1} \sum^{k-1}_{k'=1} w'_{1,j,k}w'_{1,j,k'}
		\langle \Delta Y_{(i,:)}, X_{(k,:)}\circ X_{(k',:)}\rangle \\
		=& \sum^{d_0}_{k=1} (w'_{1,j,k})^2
		\langle \Delta Y_{(i,:)}, X_{(k,:)}\circ X_{(k,:)}\rangle.
		\end{align}
	\end{subequations}
	With \eqref{eq::perturb_component}, \eqref{eq::inner_product_component1}, and \eqref{eq::inner_product_component2}, we have
	\begin{multline}
	\left\langle \Delta Y_{(i,:)}, \left(\Delta T_1\right)_{(j,:)}\right\rangle = \frac{1}{2}\sigma''(b'_{1,j}) \sum^{d_0}_{k=1} (w'_{1,j,k})^2
	\left\langle \Delta Y_{(i,:)}, X_{(k,:)}\circ X_{(k,:)}\right\rangle  \\
	+ \left\langle \Delta Y_{(i,:)}, \mathbf{o}\left(\left\|X^\top (W_1')_{(j,:)}\right\|^2_2\right)\right\rangle.
	\end{multline}

	From \eqref{eq::innerXXsame}, each of the inner products $\left\langle \Delta Y_{(i,:)}, X_{(k,:)} \circ X_{(k,:)}\right\rangle$ has the same sign with $[\sigma'(a)]^{H-1}\sigma''(a)$. First, we note that $\sigma''(a) \not = 0$, so there exists $\delta^{(1)}_{1} >0$ such that for any $\Theta' \in B\left(\Theta, \delta^{(1)}_{1}\right)$ and $1 \leq j \leq d_1$, we have
	\begin{subequations}
		\label{eq::proof_lemma2_b_prime}
		\begin{gather}
		\sigma''(b'_{1,j}) \sigma''(a) > 0\\
		\frac{|\sigma''(a)|}{2}< |\sigma''(b'_{1,j})| < \frac{3|\sigma''(a)|}{2},
		\end{gather}
	\end{subequations}
	i.e, each $\sigma''(b_{1,j}')$ is of the same sign with $\sigma''(a)$ and deviates less than a half of $\sigma''(a)$. We can denote
	\begin{equation}
	M_{i,k} = \frac{\sigma''(a)}{4}\left\langle \Delta Y_{(i,:)},  X_{(k,:)} \circ X_{(k,:)} \right\rangle.
	\end{equation}
	Then we have
	\begin{equation}
		\label{eq::proof_lemma2_infi_analysis1_base}
		\frac{[\sigma'(a)]^{H-1}\sigma''(b_i')}{2}\left\langle \Delta Y_{(i,:)},  X_{(k,:)} \circ X_{(k,:)}\right\rangle > [\sigma'(a)]^{H-1} M_{i,k} >  0.
	\end{equation}
	Then, noting that Assumption \ref{ass::inputdata}\ref{ass::inputdata_b} implies $X_{(k,:)} \not = \mathbf{0}$ for all $1\leq k\leq d_0$, we define
	\begin{equation}
	L_i = \min_{1\leq k \leq d_0}\frac{\left|M_{i,k}\right|}{\|X_{(k,:)}\|_2^2} > 0.
	\end{equation}
	Also, noting that \eqref{eq::innerXXsame} implies $\Delta Y_{(i,:)}\not = \mathbf{0}$ for all $1 \leq i \leq d_{H+1}$, there exists $\delta^{(1)}_2>0$ such that for any $\Theta' \in B\left(\Theta, \delta^{(1)}_2 \right)$, the inequality
	\begin{subequations}
		\label{eq::proof_lemma2_infi_analysis2_base}
		\begin{align}	
		\left\|\mathbf{o}\left(\left\|X^\top (W_1')_{(j,:)}\right\|^2_2\right)\right\|_2 
		\leq& \frac{L_i}{2d_0\left\|\Delta Y_{(i,:)}\right\|_2}\left\|X^\top (W_1')_{(j,:)}\right\|^2_2 \\
		= & \frac{L_{i} \left\|\sum_{k = 1}^{d_0}w'_{1,j,k}X_{(k,:)} \right\|_2^2 }{2d_0\left\|\Delta Y_{(i,:)}\right\|_2} \\
		\leq &  \frac{ \sum_{k = 1}^{d_0} (w'_{1,j,k})^2 L_{i} \left\|X_{(k,:)} \right\|_2^2 }{2\left\|\Delta Y_{(i,:)}\right\|_2} \\
		\leq &   \frac{\sum_{k = 1}^{d_0}(w'_{1,j,k})^2 \left|M_{i,k}\right|}{2\left\|\Delta Y_{(i,:)}\right\|_2}
		\end{align}
	\end{subequations}
	holds for $j = 1,2,\cdots, d_0$ and $i = 1,2,\cdots, d_{H+1}$. Let $\delta^{(1)} = \min\left\{\delta^{(1)}_1,\delta^{(1)}_2\right\}$, then for any $\Theta' \in B\left(\Theta, \delta^{(1)}\right)$, we have
	\begin{subequations}
		\label{eq::proof_lemma2_perturb_dir}
		\begin{align}
		&[\sigma'(a)]^{H-1}\left\langle \Delta Y_{(i,:)}, \Delta (T_1)_{(j,:)}\right\rangle \nonumber \\
		\label{eq::proof_lemma2_infi_analysis0}
		=&\frac{1}{2}\sigma''(b'_{1,j})[\sigma'(a)]^{H-1} \sum^{d_0}_{k=1} (w'_{1,j,k})^2
		\left\langle \Delta Y_{(i,:)}, X_{(k,:)}\circ X_{(k,:)}\right\rangle \nonumber \\
		&+ [\sigma'(a)]^{H-1}\left\langle \Delta Y_{(i,:)}, \mathbf{o}\left(\left\|X^\top (W_1')_{(j,:)}\right\|^2_2\right)\right\rangle \\
		\geq & [\sigma'(a)]^{H-1} \sum^{d_0}_{k=1}(w'_{1,j,k})^2M_{i,k} \nonumber \\
		\label{eq::proof_lemma2_infi_analysis1}
		& - \left|[\sigma'(a)]^{H-1}\right| \cdot \left\|\Delta Y_{(i,:)}\right\|_2 \cdot \left\| \mathbf{o}\left(\left\|X^\top (W_1')_{(j,:)}\right\|^2_2\right) \right\|_2 \\
		\label{eq::proof_lemma2_infi_analysis2}
		\geq & \frac{1}{2} \sum^{d_0}_{k=1}( w'_{1,j,k})^2  \left[\sigma'(a)\right]^{H-1} M_{i,k} \\
		\geq & \frac{M_{\min}}{2} \sum^{d_0}_{k=1}( w'_{1,j,k})^2 
		\end{align}
	\end{subequations}
	where
	\begin{equation}
		M_{\min} \triangleq \min_{1\leq i\leq d_{H+1},1\leq k\leq d_0} M_{i,k} [\sigma'(a)]^{H-1} > 0. 
	\end{equation}
	Note that \eqref{eq::proof_lemma2_infi_analysis1} follows from \eqref{eq::proof_lemma2_infi_analysis1_base}, and \eqref{eq::proof_lemma2_infi_analysis2} follows from \eqref{eq::proof_lemma2_infi_analysis2_base}. 
	
	Note that $\sigma$ is twice-differentiable over $[a-\delta, a+\delta]$, and hence has bounded derivative over $[a-\delta, a+\delta]$. As a result, $T_1 = \sigma(Z_1)$ is Lipschitz. Then, there exists $C^{(1)}_j > 0$ such that
	\begin{subequations}
		\begin{align}
		\left\|\left(\Delta T_1\right)_{(j,:)}\right\|_2^2 \leq& C^{(1)}_j \left\| (Z_1)_{(j,:)}(\Theta') - (Z_1)_{(j,:)}(\tilde{\Theta}')\right\|_2^2 \\
		= & C^{(1)}_j\left\| X^\top \left(W_1' - W_1\right)_{(j,:)}\right\|^2_2 \\
		\leq & C^{(1)}_j\left\| X\right\|^2_F \left\|(W_1)'_{(j,:)}\right\|_2^2  =  C^{(1)}_j\left\| X\right\|^2_F \sum^{d_0}_{k=1}(w_{1,j,k})^2 \\
		\leq& \frac{2C^{(1)}_j\left\| X\right\|^2_F}{M_{\min}}[\sigma'(a)]^{(H-1)}\left\langle \Delta Y_{(i,:)}, \left(\Delta T_1\right)_{(j,:)}\right\rangle.
		\end{align}
	\end{subequations}
	for all $1\leq j\leq d_1$. By setting
	\begin{equation}
	\gamma^{(1)} = \frac{M_{\min}}{2\left\| X\right\|^2_F\cdot \max_{1\leq j \leq d_1}C^{(1)}_j}> 0,
	\end{equation}
	we prove \eqref{eq::lemma2_lower_bound} for $h = 1$.
	
	From \eqref{eq::proof_lemma2_perturb_dir}, we see that as long as $(W'_1)_{(j,:)} \not = \mathbf{0}$, we have
	\begin{equation}
		[\sigma'(a)]^{H-1}\left\langle \Delta Y_{(i,:)}, \Delta (T_1)_{(j,:)}\right\rangle \geq \frac{M_{\min}}{2} \sum^{d_0}_{k=1}( w'_{1,j,k})^2 > 0.
	\end{equation} 
	Note that $W_1 = \mathbf{0}$, so $\left(T_1\right)_{(j,:)}\left(\tilde{\Theta}'_{[1:1]}\right)$ is invariant of the input data and thus can be denoted by $\left(T_1\right)_{j,1}\left(\tilde{\Theta}'_{[1:1]}\right)\cdot \mathbf{1}_{N}$. Then, for any $\delta_0 >0$, there exists $\Theta'_{[1:1]} \in B\left(\Theta_{[1:1]}, \delta_0 \right)$ with $(W'_1)_{(j,:)} \not =  \mathbf{0}$, such that
	\begin{subequations}
		\begin{align}
		&[\sigma'(a)]^{H-1}\left\langle
		\Delta Y_{(i,:)},\left(T_1\right)_{(j,:)}\left(\Theta_{[1:1]}'\right)\right\rangle \nonumber \\
		= &[\sigma'(a)]^{H-1} \left(\left\langle
		\Delta Y_{(i,:)},\left(\Delta T_1\right)_{(j,:)}\right\rangle +  \left\langle\Delta Y_{(i,:)}, \left(T_1\right)_{j,1}\left(\tilde{\Theta}'_{[1:1]}\right)\cdot \mathbf{1}_N\right\rangle\right) \\
		= & 
		[\sigma'(a)]^{H-1} \left\langle
		\Delta Y_{(i,:)},\left(\Delta T_1\right)_{(j,:)}\right\rangle
		\geq \gamma^{(1)} \left\| \left(\Delta T_1\right)_{(j,:)}\right\|_2^2 >  0
		\end{align}
	\end{subequations}
	for all $1 \leq i \leq d_{H+1}$ and $1 \leq j \leq d_1$. Thus, \eqref{eq::lemma2_nonzero} holds for $h=1$.
	
	\subsection{Second to $H$-th Hidden Layers}
	Suppose that for the $(h-1)$-th hidden layer where $2\leq h \leq H$, there exist $\Theta_{[1:(h-1)]}$ and $\delta^{(h-1)},\gamma^{(h-1)}>0$ such that \eqref{eq::lemma2_lower_bound} holds for for any $\Theta'_{[1:(h-1)]} \in B\left(\Theta'_{[1:(h-1)]}, \delta^{(h-1)}\right)$. Further, suppose that for any $\delta_0$, there exists $\Theta'_{[1:(h-1)]} \in B\left(\Theta'_{[1:(h-1)]}, \delta_0\right)$, such that \eqref{eq::lemma2_nonzero} holds. Now we consider the $h$-th hiidden layer.
	
	Note that $\sigma$ is twice-differentiable over $[a-\delta, a+\delta]$, and hence has bounded derivative over $[a-\delta, a+\delta]$. As a result, there exists $C^{(h)}$ such that $T_h = \sigma(Z_h)$ is $C^{(h)}$-Lipschitz over $[a-\delta, a+\delta]^{d_h \times N}$. Let
	\begin{subequations}
	\label{eq::proof_lemma2_variable}
	\begin{gather}
	w_{\max}^{(h)} =\frac{\gamma^{(h-1)}}{28 d_{h-1}|\sigma''(a)|\cdot|\sigma'(a)|^{H-h}}\cdot\underset{1\leq i\leq d_{H+1}}{\min} \frac{1}{\|\Delta Y_{(i,:)}\|_2}\\
	\gamma^{(h)} =\frac{\gamma^{(h-1)}}{18C^{(h)}d_{h-1}w_{\max}^{(h)}}.
	\end{gather}
	\end{subequations}
	We construct $\Theta_{[1:h]}$ as follows: 
	\begin{itemize}
		\item Let $\Theta_{[1:h-1]}$ be the same as constructed from the induction;
		\item Let $w_{h,j,k} \in \left(0, w^{(h)}_{\max}\right]$ for any $1\leq j \leq d_h$ and $1\leq k \leq d_{h-1}$;
		\item Set $\mathbf{b}_{h}$ by \eqref{eq::proof_thm1_localmin_construct4}.
	\end{itemize}
	It is easy to verify that such constructed $\Theta_{[1:h]}$ meets \eqref{eq::proof_thm1_localmin_construct}. In the following, we show that there exists $\delta^{(h)}>0$ such that \eqref{eq::lemma2_lower_bound} holds.
	
	To ease notation we define
	\begin{equation}
	\Delta Z_h = Z_h\left(\Theta_{[1:h]}'\right) - Z_h\left(\tilde{\Theta}_{[1:h]}'\right).
	\end{equation}
	Also, note that $W_1=\mathbf{0}$, so $\left(Z_h\right)_{(j,:)}\left(\tilde{\Theta}'_{[1:h]}\right)$ is invariant of the input data and thus can be represented by $\left(Z_h\right)_{j,1}\left(\tilde{\Theta}'_{[1:h]}\right)\cdot\mathbf{1}_N$. 
	Then by the Taylor's Theorem we have
	\begin{subequations}
	\label{eq::proof_lemma2_claim1_induction}	
	\begin{align}
	& \left(\Delta T_h\right)_{(j,:)} 
	 \nonumber \\
	=& \sigma\left((Z_{h})_{(j,:)}\left(\Theta_{[1:h]}'\right)\right) -\sigma\left( (Z_{h})_{(j,:)}\left(\tilde{\Theta}_{[1:h]}'\right) \right) \\
	\label{eq::proof_lemma2_claim1_induction_1}
	= &\sigma'\left((Z_{h})_{(j,:)}\left(\tilde{\Theta}'_{[1:h]} \right) \right)\circ \left(\Delta Z_h \right)_{(j,:)}\nonumber \\
	& +\frac{1}{2}\sigma''\left((Z_{h})_{(j,:)}\left(\tilde{\Theta}_{[1:h]}'\right) \right)\circ\left(\Delta Z_h \right)_{(j,:)}\circ \left(\Delta Z_h \right)_{(j,:)}\nonumber\\
	&+ \mathbf{o}\left(\left\|\left(\Delta Z_h \right)_{(j,:)}\right\|_2^2\right) \\
	\label{eq::proof_lemma2_claim1_induction_2}
	=&\sigma'\left(\left(Z_h\right)_{(j,1)}\left(\tilde{\Theta}'_{[1:h]}\right)\right)\cdot\left[\Delta T^\top_{h-1}\cdot(W_h')_{(j,:)}\right] \nonumber \\
	&+\frac{1}{2}\sigma''\left(\left(Z_{h}\right)_{(j,1)}\left(\tilde{\Theta}'_{[1:h]}\right) \right)\cdot \left[\Delta T^\top_{h-1}\cdot(W_h')_{(j,:)}\right]\circ\left[\Delta T^\top_{h-1}\cdot(W_h')_{(j,:)}\right]\nonumber\\
	&+ \mathbf{o}\left(\left\|\Delta T^\top_{h-1}\cdot(W_h')_{(j,:)}\right\|_2^2\right) 
	\end{align}
	\end{subequations}
	where \eqref{eq::proof_lemma2_claim1_induction_2} holds due to
	\begin{multline}	
	\label{eq::proof_lemma2_z_decomposition}	   
	 (\Delta Z_{h})_{(j,:)}
	= \left[T^\top_{h-1}(\Theta')\cdot(W_h')_{(j,:)}+b_{h,j}'\cdot\mathbf{1}_N\right]  \\  -\left[T^\top_{h-1}(\tilde{\Theta}')\cdot(W_h')_{(j,:)}+b_{h,j}'\cdot\mathbf{1}_N\right] = \Delta T^\top_{h-1}\cdot(W_h')_{(j,:)}.
	\end{multline}
	
	There exists $\delta_1^{(h)}>0$ such that for any $\Theta_{[1:h]}' \in B\left(\Theta_{[1:h]}, \delta^{(h)}_{1}\right)$ we have
	\begin{subequations}
	\label{eq::proof_lemma2_perturbweight}
	\begin{gather}
	    0 < \frac{1}{2}w_{h,j,k}<w_{h,j,k}'<\frac{3}{2}w_{h,j,k}\\
	    \sigma'\left(\left(Z_h\right)_{j,1}\left(\tilde{\Theta}_{[1:h]}'\right)\right)\sigma'(a)> 0\\ \sigma''\left(\left(Z_h\right)_{j,1}\left(\tilde{\Theta}_{[1:h]}'\right)\right)\sigma''(a)> 0 \\ 
	    \frac{1}{2}\left|\sigma'(a)\right| < \left|\sigma'\left(\left(Z_h\right)_{j,1}\left(\tilde{\Theta}'_{[1:h]}\right)\right)\right|<\frac{3}{2}\left|\sigma'(a)\right|\\
	    \frac{1}{2}\left|\sigma''(a)\right|<\left|\sigma''\left(\left(Z_h\right)_{j,1}\left(\tilde{\Theta}_{[1:h]}'\right)\right)\right|<\frac{3}{2}\left|\sigma''(a)\right|
	\end{gather}
	\end{subequations}
	for all $1 \leq j \leq d_h, 1\leq k \leq d_{h-1}$. Taking inner product between $\Delta Y_{(i,:)}$ and the first term in \eqref{eq::proof_lemma2_claim1_induction_2}, we obtain
	\begin{subequations}
	\label{eq::proof_lemma2_induction_base1}
	\begin{align}
	    &[\sigma'(a)]^{H-h}\left\langle\Delta Y_{(i,:)}, \sigma'\left(\left(Z_h\right)_{j,1}\left(\tilde{\Theta}_{[1:h]}'\right)\right)\cdot\left[\Delta T^\top_{h-1}\cdot\left(W_h'\right)_{(j,:)}\right]\right\rangle\\
	    \label{eq::proof_lemma2_term1_1}
	    =&\sigma'\left(\left(Z_h\right)_{j,1}\left(\tilde{\Theta}'_{[1:h]}\right)\right)[\sigma'(a)]^{H-h}\sum_{k=1}^{d_{h-1}}w_{h,j,k}' \left\langle\Delta Y_{(i,:)}, \left(\Delta T_{h-1}\right)_{(k,:)}\right\rangle\\
	    \label{eq::proof_lemma2_term1_2}
	    \geq&\frac{1}{2}[\sigma'(a)]^{H-h+1}\sum_{k=1}^{d_{h-1}}w_{h,j,k}' \left\langle\Delta Y_{(i,:)}, \left(\Delta T_{h-1}\right)_{(k,:)}\right\rangle\\
	    \label{eq::proof_lemma2_term1_3}
	    \geq&\frac{1}{4}\gamma^{(h-1)}\sum_{k=1}^{d_{h-1}}w_{h,j,k}\left\|\left(\Delta T_{h-1}\right)_{(k,:)}\right\|_2^2 \geq 0.
	\end{align}
	\end{subequations}
	where \eqref{eq::proof_lemma2_term1_1} and \eqref{eq::proof_lemma2_term1_2} follow from \eqref{eq::proof_lemma2_perturbweight}, while \eqref{eq::proof_lemma2_term1_3} follows from the induction hypothesis. 
	
	Taking inner product between $\Delta Y_{(i,:)}$ and the second term in \eqref{eq::proof_lemma2_claim1_induction_2}, we obtain
	\begin{subequations}
	\label{eq::proof_lemma2_induction_secondorder}
	\begin{align}
	    &\left| \left \langle\Delta Y_{(i,:)}, \sigma''\left((Z_{h})_{j,1}\left(\tilde{\Theta}_{[1:h]}'\right) \right)\cdot\left[\Delta T^\top_{h-1}\cdot(W_h')_{(j,:)}\right]\circ\left[\Delta T^\top_{h-1}\cdot(W_h')_{(j,:)}\right] \right\rangle\right| \nonumber\\
	    \leq & \left\| \Delta Y_{(i,:)}\right\|_2 \cdot \left|\sigma''\left((Z_{h})_{j,1}\left(\tilde{\Theta}_{[1:h]}'\right) \right)\right| \cdot \left\|\Delta T^\top_{h-1}\cdot(W_h')_{(j,:)}\right\|_2^2\\
	    \leq&\frac{3}{2}\left|\sigma''(a)\right| \cdot\left\|\Delta Y_{(i,:)}\right\|_2\cdot \left(\sum_{k=1}^{d_{h-1}}w_{h,j,k}'\left\|\left(\Delta T_{h-1}\right)_{(k,:)} \right\|_2\right)^2\\
	    \label{eq::proof_lemma2_term2_1}
	    \leq&\frac{3d_{h-1}}{2}\left|\sigma''(a)\right|\cdot\left\|\Delta Y_{(i,:)}\right\|_2\cdot \sum_{k=1}^{d_{h-1}}\left(w_{h,j,k}'\right)^2\left\|\left(\Delta T_{h-1}\right)_{(k,:)} \right\|_2^2\\
	    \leq&\frac{27d_{h-1} w^{(h)}_{\max}}{8}\left|\sigma''(a)\right|\cdot\left\|\Delta Y_{(i,:)}\right\|_2\cdot \sum_{k=1}^{d_{h-1}}w_{h,j,k}\left\|\left(\Delta T_{h-1}\right)_{(k,:)} \right\|_2^2\\
	    \label{eq::proof_lemma2_term2_2}
	    \leq& \frac{27\gamma^{(h-1)}}{224\left|\sigma'(a)\right|^{H-h}}  \sum_{k=1}^{d_{h-1}}w_{h,j,k}\left\|\left(\Delta T_{h-1}\right)_{(k,:)}\right\|_2^2
	\end{align}
	\end{subequations}
	where \eqref{eq::proof_lemma2_term2_1} follows from the Cauchy–Schwarz inequality and \eqref{eq::proof_lemma2_term2_2} follows from \eqref{eq::proof_lemma2_variable}.
	
	Now we consider the 
	infinitesimal term in \eqref{eq::proof_lemma2_claim1_induction_2}. There exists $\delta_2^{(h)}>0$ such that for any $\Theta_{[1:h]}' \in B\left(\Theta_{[1:h]}, \delta^{(h)}_2 \right)$ we have
	\begin{subequations}
	\label{eq::proof_lemma2_induction_smallterm}
	\begin{align}
	    &\left\langle\Delta Y_{(i,:)}, \mathbf{o}\left(\left\|\Delta T^\top_{h-1}\cdot(W_h')_{(j,:)}\right\|_2^2\right) \right\rangle\nonumber \\
	    \leq &\left\|\Delta Y_{(i,:)}\right\|_2\cdot \left\|\mathbf{o}\left(\left\|\Delta T^\top_{h-1}\cdot(W_h')_{(j,:)}\right\|_2^2\right)\right\|_2^2\\
	    \leq&\frac{\left|\sigma''(a)\right|}{18}\cdot\left\|\Delta Y_{(i,:)}\right\|_2\cdot \left(\sum_{k=1}^{d_{h-1}}w_{h,j,k}'\left\|\left(\Delta T_{h-1}\right)_{(k,:)} \right\|_2\right)^2\\
	    \leq&\frac{d_{h-1}}{18}\left|\sigma''(a)\right|\cdot\left\|\Delta Y_{(i,:)}\right\|_2\cdot \sum_{k=1}^{d_{h-1}}\left(w_{h,j,k}'\right)^2\left\|\left(\Delta T_{h-1}\right)_{(k,:)} \right\|_2^2\\
	    \leq&\frac{d_{h-1}}{8}\left|\sigma''(a)\right|\cdot\left\|\Delta Y_{(i,:)}\right\|_2\cdot \sum_{k=1}^{d_{h-1}}\left(w_{h,j,k}\right)^2\left\|\left(\Delta T_{h-1}\right)_{(k,:)} \right\|_2^2\\
	    \leq&\frac{\gamma^{(h-1)}}{224\left|\sigma'(a)\right|^{H-h}}\sum_{k=1}^{d_{h-1}}w_{h,j,k}\left\|\left(\Delta T_{h-1}\right)_{(k,:)}\right\|_2^2.
	\end{align}
	\end{subequations}
	
	Let $\delta^{(h)}=\min\left\{\delta^{(h-1)}, \delta_1^{(h)}, \delta_2^{(h)}\right\}$. Combining \eqref{eq::proof_lemma2_claim1_induction}, \eqref{eq::proof_lemma2_induction_base1}, \eqref{eq::proof_lemma2_induction_secondorder} and \eqref{eq::proof_lemma2_induction_smallterm}, we have
	\begin{equation}
		\label{eq::proof_lemma2_claim1_induction2}
		\left[\sigma'(a)\right]^{H-h} \left\langle \Delta Y_{(i,:)} , \left(\Delta T_h\right)_{(j,:)}\right\rangle \geq \frac{\gamma^{(h-1)}}{8} \sum^{d_{h-1}}_{k=1}w_{h,j,k}\left\|\left(\Delta T_{h-1}\right)_{(k,:)}\right\|_2.
	\end{equation}
	On the other hand, note that $T_h = \sigma(Z_h)$ is $C^{(h)}$-Lipschitz with respect to $Z_h$ over $[a-\delta, a+\delta]^{d_{h} \times N}$. Therefore,
	\begin{subequations}
	\begin{align}
	 \left\|\left(\Delta T_h\right)_{(j,:)}\right\|_2^2  \leq& C^{(h)} \left\| (Z_h)_{(j,:)}(\Theta') - (Z_h)_{(j,:)}(\tilde{\Theta}')\right\|_2^2 \\
	= & C^{(h)}\left\| \sum^{d_{h-1}}_{k=1}w'_{h,j,k}\left(\Delta T_{h-1}\right)_{(k,:)}\right\|^2_2 \\
	\leq & C^{(h)}d_{h-1}\sum^{d_{h-1}}_{k=1}\left(w'_{h,j,k}\right)^2\left\|\left(\Delta T_{h-1}\right)_{(k,:)}\right\|^2_2  \\
	\leq & \frac{9C^{(h)}d_{h-1}w_{\max}^{(h)}}{4}\sum^{d_{h-1}}_{k=1}w_{h,j,k}\left\|\left(\Delta T_{h-1}\right)_{(k,:)}\right\|^2_2  \\
	\label{eq::proof_lemma2_term3_1}
	\leq& \frac{18C^{(h)}d_{h-1}w_{\max}^{(h)}}{\gamma^{(h-1)}}\cdot [\sigma'(a)]^{(H-h)}\left\langle \Delta Y_{(i,:)}, \left(\Delta T_h\right)_{(j,:)}\right\rangle\\
	\label{eq::proof_lemma2_term3_2}
	=&\frac{1}{\gamma^{(h)}}\cdot [\sigma'(a)]^{(H-h)} \left\langle \Delta Y_{(i,:)}, \left(\Delta T_h\right)_{(j,:)}\right\rangle
	\end{align}
	\end{subequations}
	for all $1 \leq i \leq d_{H+1}$ and $1\leq j\leq d_h$, where \eqref{eq::proof_lemma2_term3_1} follows from \eqref{eq::proof_lemma2_claim1_induction2}. We thus prove \eqref{eq::lemma2_lower_bound} for the $h$-th hidden layer. 
	
	To prove \eqref{eq::lemma2_nonzero}, we consider an arbitrary $\delta_0>0$. From the induction hypothesis, there exists $\Theta_{[1:(h-1)]}'\in B\left(\Theta_{[1:(h-1)]}, \delta_0/2\right)$, where $\delta_0>0$ is an arbitrary positive number, such that $\left(\Delta T_{h-1}\right)_{(k,:)} \neq \mathbf{0}$ for all $1\leq k \leq d_{h-1}$. Moreover, since $\sigma'(a)\neq0$, $\sigma(\cdot)$ is monotone in a sufficiently small neighborhood of $a$. Thus, there exists $W_h' \in B(W_h, \delta_0/4)$ and $\mathbf{b}'_h \in B(\mathbf{b}_h,\delta_0/4) $ such that $(\Delta T_h)_{(j,:)}\neq\mathbf{0}$ for every $1\leq j\leq d_h$. As \eqref{eq::lemma2_lower_bound} holds, we have
	\begin{subequations}
		\begin{align}
		&[\sigma'(a)]^{H-h}\left\langle
		\Delta Y_{(i,:)},\left(T_h\right)_{(j,:)}\left(\Theta_{[1:h]}'\right)\right\rangle \nonumber \\
		= &[\sigma'(a)]^{H-h} \left(\left\langle
		\Delta Y_{(i,:)},\left(\Delta T_h\right)_{(j,:)}\right\rangle +  \left\langle\Delta Y_{(i,:)}, \left(T_h\right)_{j,1}\left(\tilde{\Theta}'_{[1:h]}\right)\cdot \mathbf{1}_N\right\rangle\right) \\
		= & 
		[\sigma'(a)]^{H-h} \left\langle
		\Delta Y_{(i,:)},\left(\Delta T_h\right)_{(j,:)}\right\rangle
		\geq \gamma^{(h)} \left\| \left(\Delta T\right)_{(j,:)}\right\|_2^2 >  0
		\end{align}
	\end{subequations}
	for all $1 \leq i \leq d_{H+1}$ and $1 \leq j \leq d_h$. Thus, we prove \eqref{eq::lemma2_nonzero} for the $h$-th hidden layer.
	
	By induction we complete the proof of Lemma \ref{lemma::perturb_direction}.

\ifnonsmooth	
		\section{Proof of Lemma \ref{lemma::proof_thm2_matrix_existence}}
	If $B = \mathbf{0}$, obviously \eqref{eq::proof_thm2_matrix_existence} holds for any $A$ and $C$. 
	
	We consider the case with $B \not = \mathbf{0}$. Denote $d = \mathrm{rank}(B)$. As $L_1\geq L_2 \geq d$, there exists $D_1 \in \mathbb{R}^{d \times L_1}, D_2 \in \mathbb{R}^{L_1 \times L_2}$ such that
	\begin{equation}
	\mathrm{rank}(D_1D_2B) = \mathrm{rank}(D_2B) = \mathrm{rank}(B) = d,
	\end{equation}
	which also implies
	\begin{equation}
		\mathrm{row}(D_1D_2B) = \mathrm{row}(D_2B) = \mathrm{row}(B).
	\end{equation}
	Now define
	\begin{equation}
	p(C) = \det\left(D_1(C+A)BB^\top(C+A)^\top D_1^\top\right).
	\end{equation}
	Note that $p(C)$ is a polynomial with respect to $C$, and hence is an analytic function. Setting $C = -A + D_2$, the matrix
	\begin{equation}
		D_1(C+A)B = D_1D_2B \in\mathbb{R}^{d\times L_3}
	\end{equation}
	is of full row rank. This implies that
	\begin{equation}
		D_1(C+A)BB^\top(C+A)^\top D_1^\top = D_1D_2BB^\top D_2^\top D_1^\top \in\mathbb{R}^{d\times d}
	\end{equation}
	is full-rank, and thus $p(-A+D_2) \not = 0$. Therefore, $p$ is not identically zero. 
	
	From Lemma \ref{lemma::analytic}, $p(C) \not = 0$ holds for generic $C$. That is, for any $\epsilon >0$ there exists $C^*  \in B\left(\mathbf{0}_{L_1 \times L_2}, \epsilon\right)$ such that $p(C^*) \not= 0$. Then we have
	\begin{equation}
		\mathrm{rank}(D_1(C^*+A)B) = d,
	\end{equation}	
	and therefore,
	\begin{equation}
		d = \mathrm{rank}(D_1(C^*+A)B) \leq \mathrm{rank}((C^*+A)B)\leq \mathrm{rank}(B)=d,
	\end{equation}
	implying
	\begin{equation}
		\mathrm{row}((C^*+A)B) = \mathrm{row}(B).
	\end{equation}
	We complete the proof.
\fi

\section{Proof of Lemma \ref{lemma::localmin_system1}}
	Suppose that a weight configuration $\Theta$ satisfies the  equation/inequality system \eqref{eq::localmin_system1}. We consider a perturbed weight configuration
	\begin{equation}
	\Theta' = (\mathbf{v}', \mathbf{w}', \mathbf{b}') = (\mathbf{v} + \Delta \mathbf{v}, \mathbf{w} + \Delta \mathbf{w}, \mathbf{b}+\Delta \mathbf{b})
	\end{equation}
	and denote
	\begin{equation}
		\mathbf{z}_i' = \sigma\left(w_i'\cdot \mathbf{x}^\top + b_i'\cdot \mathbf{1}_N^\top\right) 
	\end{equation}
	as the ``input vector" of the $i$-th hidden neuron at weight configuration $\Theta'$.
	Then, we decompose the difference of the network output by
	\begin{subequations}
		\label{eq::lemma1_deltaz1}
		\begin{align}
		\mathbf{t}(\Theta') - \mathbf{t}(\Theta) 
		= & \sum^{d_1}_{i=1}\left[ v_i' \sigma\left(\mathbf{z}_i'\right) - v_i \sigma\left(\mathbf{z}_i\right) \right]\\
		= & \sum^{d_1}_{i=1}\left[ v_i' \sigma\left(\mathbf{z}_i'\right) - v'_i \sigma\left(\mathbf{z}_i\right) \right] + \sum^{d_1}_{i=1}\left[ v_i' \sigma\left(\mathbf{z}_i\right) - v_i \sigma\left(\mathbf{z}_i\right) \right] \\
		= & \sum^{d_1}_{i=1} v_i'\left[ \sigma\left(\mathbf{z}_i'\right) -  \sigma\left(\mathbf{z}_i\right) \right] + \sum^{d_1}_{i=1} \Delta v_i \sigma\left(\mathbf{z}_i\right) 
		\end{align}
	\end{subequations}
	Since $\sigma$ is twice differentiable, by the Taylor's Theorem, we have
	\begin{subequations}
	\label{eq::lemma1_deltaz2}
	\begin{align}
	&\sigma(\mathbf{z}_i') - \sigma(\mathbf{z}_i)  \nonumber \\
	= &\sigma'(\mathbf{z}_i)\circ \Delta \mathbf{z}_i + \frac{1}{2} \sigma''(\mathbf{z}_i)\circ \Delta \mathbf{z}_i \circ \Delta \mathbf{z}_i + \mathbf{o}(\|\Delta \mathbf{z}_i\|_2^2) \\
	= & \Delta b_i \cdot \sigma'(\mathbf{z}_i) +  \Delta w_{i} \cdot \sigma'(\mathbf{z}_i) \circ \mathbf{x}  + \frac{\Delta b_i ^2}{2} \cdot \sigma''(\mathbf{z}_i) + \Delta w_i \Delta b_i  \cdot \sigma''(\mathbf{z}_i) \circ \mathbf{x} \nonumber \\
	& + \frac{1}{2}\Delta w_i^2 \cdot \sigma''(\mathbf{z}_i)\circ \mathbf{x}\circ \mathbf{x} + \mathbf{o}(\|\Delta \mathbf{z}_i\|_2^2)
	\end{align}
	\end{subequations}
	where $\Delta \mathbf{z}_i \triangleq \mathbf{z}_i' - \mathbf{z}_i$ and $\mathbf{o}(\cdot)$ is an infinitesimal vector as described in \eqref{eq::infi_def}. Combining \eqref{eq::localmin_system1}, \eqref{eq::lemma1_deltaz1}, and \eqref{eq::lemma1_deltaz2}, we have
	\begin{subequations}
	\begin{align}
	&\left \langle  \Delta \mathbf{y}, \mathbf{t}(\Theta') - \mathbf{t}(\Theta) \right\rangle \nonumber \\
	= & \sum_{i=1}^{d_1}v'_i \left\langle \Delta \mathbf{y}, \sigma(\mathbf{z}'_i) - \sigma(\mathbf{z}_i)\right\rangle + \sum_{i=1}^{d_1}\Delta v_i \left\langle \Delta \mathbf{y}, \sigma(\mathbf{z}_i)\right\rangle \\
	= & \frac{1}{2}\sum^{d_1}_{i=1} v'_i \Delta b_i ^2 \left\langle \Delta \mathbf{y}, \sigma''(\mathbf{z}_i)\right\rangle  + \frac{1}{2}\sum^{d_1}_{i=1} v'_i \Delta w_i ^2 \left\langle \Delta \mathbf{y}, \sigma''(\mathbf{z}_i) \circ \mathbf{x} \circ \mathbf{x}\right\rangle \nonumber \\
	&+ \sum^{d_1}_{i=1} v_i' \left\langle \Delta \mathbf{y}, \mathbf{o}\left(\left\|\Delta \mathbf{z}_i\right\|^2_2\right)\right \rangle
	\end{align}
	\end{subequations}
	
	First, we note that \eqref{eq::localmin_system1} implies $v_{i} \not = 0$. There exists $\delta_1 >0$ such that for any $\Theta' \in B(\Theta, \delta_1 )$ such that
	\begin{equation}
	\label{eq::v_prime2}
	v_{i}' v_{i} > 0, \quad \!\! \frac{|v_{i}|}{2}< |v'_{i}| < \frac{3|v_{i}|}{2},
	\end{equation}
	i.e, each $v_{i}'$ deviates less than a half of $v_i$. Then, we can denote
	\begin{subequations}
		\begin{gather}
		M_{1,i} = \frac{v_{i}}{2}\left\langle \Delta \mathbf{y},  \sigma''(\mathbf{z}_i) \circ \mathbf{x} \circ \mathbf{x} \right\rangle\\
		M_{2,i} = \frac{v_{i}}{2}\left \langle \Delta \mathbf{y}, \sigma''(\mathbf{z}_i) \right\rangle.
		\end{gather}
	\end{subequations}
	Then, from \eqref{eq::localmin_system1} and \eqref{eq::v_prime2}, we have
	\begin{subequations}
		\label{eq::infi_analysis1_base}
		\begin{gather}
		v_{i}'\left\langle \Delta \mathbf{y},  \sigma''(\mathbf{z}_i) \circ \mathbf{x} \circ \mathbf{x}\right\rangle >M_{1,i} >  0 \\
		v_{i}'\left \langle \Delta \mathbf{y}, \sigma''(\mathbf{z}_i) \right\rangle>M_{2,i} >0.
		\end{gather}
		and
		\begin{align}
		v'_{i}\left\langle \Delta \mathbf{y} , \mathbf{o}\left(\|\Delta \mathbf{z}_i\|_2^2\right) \right\rangle \geq& -\frac{3|v_{i}|}{2} \left|\left\langle \Delta \mathbf{y}, \mathbf{o}\left(\|\Delta\mathbf{z}_i\|_2^2\right) \right\rangle \right| \nonumber \\ 
		\geq& -\frac{3|v_{i}|}{2} \left\|\Delta \mathbf{y}\right\|_2 \cdot \left\| \mathbf{o}\left(\|\Delta\mathbf{z}_i\|_2^2\right)\right\|_2
		\end{align}
	\end{subequations}
	Now, we define
	\begin{equation}
	L = \min_{1\leq i \leq d_1}\left\{\frac{M_{1,i}}{\|\mathbf{x}\|_2^2}, \frac{M_{2,i}}{N}\right\} > 0
	\end{equation}
	Then, there exists $\delta_2>0$ such that for any $\Theta' \in B(\Theta, \delta_2 )$, the inequality
	\begin{subequations}
		\label{eq::infi_analysis2_base}
		\begin{align}	
		\left\| \mathbf{o}\left(\|\Delta\mathbf{z}_i\|^2_2\right) \right\|_2
		\leq & \frac{L}{12|v_{i}|\cdot \|\Delta \mathbf{y}\|_2}\|\Delta \mathbf{z}_i\|^2_2 \\
		\leq & \frac{L\left(\Delta w_{i}^2 \|\mathbf{x}\|^2_2 + \Delta b_i^2 \|\mathbf{1}_N\|_2^2\right)}{6|v_i|\cdot \|\Delta \mathbf{y}\|_2}
		\\
		\leq & \frac{\Delta w_{i}^2  M_{1,i} + \Delta b_i^2  M_{2,i}}{6|v_{i}|\cdot \|\Delta \mathbf{y}\|_2}
		\end{align}
	\end{subequations}
	holds for any $1\leq i \leq d_1$. Let $\delta = \min\{\delta_1, \delta_2\}$. For any $\Theta' \in B(\Theta, \delta)$, we have
	\begin{subequations}
		\label{eq::perturb_dir}
		\begin{align}
		&\left \langle  \Delta \mathbf{y}, \mathbf{t}(\Theta') - \mathbf{t}(\Theta)\right\rangle \nonumber \\
		\label{eq::infi_analysis1}
		\geq & \frac{1}{2}\sum^{d_1}_{i=1} \left(\Delta w_{i}^2  M_{1,i} + \Delta b_i^2  M_{2,i}\right) - \frac{3}{2} \sum^{d_1}_{i=1} |v_{i}|\cdot \| \Delta \mathbf{y}\|_2\cdot \left\| \mathbf{o}\left(\|\Delta\mathbf{z}_i\|_2^2\right)\right\|_2 \\
		\label{eq::infi_analysis2}
		\geq & \frac{1}{4} \left( \Delta w_{i}^2  M_{1,i} + \Delta b_i^2  M_{2,i} \right) \geq 0
		\end{align}
	\end{subequations}
	where \eqref{eq::infi_analysis1} follows from \eqref{eq::infi_analysis1_base}, and \eqref{eq::infi_analysis2} follows from \eqref{eq::infi_analysis2_base}. 
	
	As in \eqref{eq::loss_decompose}, we decompose the difference of the empirical loss as
	\begin{align}
	E(\Theta')-E(\Theta) = 2 \left \langle  \Delta \mathbf{y}, \mathbf{t}(\Theta') - \mathbf{t}(\Theta)\right\rangle + \|\mathbf{t}(\Theta') - \mathbf{t}(\Theta)\|_2^2.
	\end{align}
	Then, for any $\Theta' \in B(\Theta, \delta)$,  \eqref{eq::perturb_dir} implies $E(\Theta')-E(\Theta) \geq 0$, and hence $\Theta$ is a local minimum of $E$. We complete the proof. 

	\section{Proof of Lemma \ref{lemma::localmin_after_perturbation}}
	Since $p$ is a strict local minimum of $G(\cdot,q)$, there exists $\epsilon_1>0$ such that $G(p',q)>G(p, q)$ for all $0 < \|p' - p\| \leq \epsilon_1$. Denote $\epsilon_2=\min\{\epsilon, \frac{1}{2}\epsilon_1\}$ and consider the closed ball 
	with center $p$ and radius $\epsilon_2$, denoted by $\bar{B}(p, \epsilon_2)$. Thus, $G(p',q)>G(p,q)$ for all $p'\in\partial \bar{B}(p, \epsilon_2)$, where $\partial\bar{B}(p, \epsilon_2)$ denotes the boundary of $\bar{B}(p, \epsilon_2)$. As $\bar{B}(p, \epsilon_2)$ and $\partial \bar{B}(p, \epsilon_2)$ are both compact for a given $q$, $G(\cdot,q)$ is uniformly continuous on $\bar{B}(p, \epsilon_2)$ and $\partial \bar{B}(p, \epsilon_2)$. Therefore, we have
	\begin{equation}
	    \underset{p'\in\partial \bar{B}(p, \epsilon_2)}{\inf}G(p',q)>G(p,q).
	\end{equation} 
	
	Define
	\begin{equation}
	    \xi = \underset{p'\in\partial \bar{B}(p, \epsilon_2)}{\inf}G(p',q) - G(p,q) >0.
	\end{equation}
	Now, in the parameter space of $(p,q)$,  consider the closed ball with center $(p, q)$ and radius $\epsilon_2$, denoted by $\bar{B}\left((p, q), \epsilon_2\right)$. Since $G(\cdot,\cdot)$ is uniformly continuous on $\bar{B}\left((p, q), \epsilon_2\right)$, there exists $\delta\in(0, \epsilon_2]$ such that
	\begin{equation}
	|G(p', q')-G(p'', q'')|<\frac{1}{3}\xi
	\end{equation}
	for all $(p', q'), (p'', q'')\in \bar{B}((p, q), \epsilon_2)$ with $\|(p', q')-(p'', q'')\|<\delta$.
	Therefore, for any given $\tilde{q} \in B(q,\delta)$, we have
	\begin{equation*}
	\begin{aligned}
	G(p,\tilde{q})&<G(p,q)+\frac{1}{3}\xi=\underset{p'\in\partial \bar{B}(p, \epsilon_2)}{\inf}G(p',q)-\frac{2}{3}\xi\\
	&\leq G(p^*(\tilde{q}),q)-\frac{2}{3}\xi<G(p^*(\tilde{q}),\tilde{q})-\frac{1}{3}\xi\\
	&=\underset{p'\in\partial \bar{B}(p, \epsilon_2)}{\inf}G(p',\tilde{q})-\frac{1}{3}\xi<\underset{p'\in\partial \bar{B}(p, \epsilon_2)}{\inf}G(p',\tilde{q})
	\end{aligned}
	\end{equation*}
	where $p^*(\tilde{q})\triangleq\underset{p'\in\partial \bar{B}(p, \epsilon_2)}{\arg\inf}G(p',\tilde{q})$.
	
	Finally, for any given $\tilde{q} \in B(q,\delta)$, consider the function $G(\cdot, \tilde{q})$ in the closed ball $\bar{B}(p, \epsilon_2)$. By the extreme value theorem, there exists $\tilde{p}\in \bar{B}(p, \epsilon_2)$ such that $G(\tilde{p}, \tilde{q})=\underset{p'\in\bar{B}(p, \epsilon_2)}{\inf}G(p',\tilde{q})$. Then we have
	\begin{equation}
	    G(\tilde{p}, \tilde{q})\leq G(p, \tilde{q})<\underset{p'\in\partial \bar{B}(p, \epsilon_2)}{\inf}G(p',\tilde{q}),
	\end{equation}
	implying that $\tilde{p}$ is an interior point of $\bar{B}(p, \epsilon_2)$. Then, in the interior of $\bar{B}(p, \epsilon_2)$, we can find a closed ball centered at $\tilde{p}$. That is, there exists $\epsilon_3>0$ such that
	\begin{equation}
	\bar{B}(\tilde{p}, \epsilon_3)\subseteq\bar{B}(p, \epsilon_2).
	\end{equation}
	Thus, $G(\tilde{p}, \tilde{q})\leq G(p', \tilde{q})$ for all $p'\in \bar{B}(\tilde{p}, \epsilon_3)$, implying that $\tilde{p}$ is a local minimum of $G(\cdot, \tilde{q})$. Moreover, since $\epsilon_2\leq\epsilon$, we have $\|\tilde{p} - p\| < \epsilon_2\leq\epsilon$. The proof is completed.
	
	
\end{document}